\documentclass[twoside,11pt]{article}

%

\usepackage{jmlr2e}
\usepackage{amsmath} 
\usepackage{algorithm, algpseudocode}
\usepackage{bm, enumerate, rotating}
\usepackage{verbatim} 
\usepackage{subfigure}
\usepackage{epstopdf}
\usepackage{multirow}



\algnewcommand\algorithmicinput{\textbf{Input:}}
\algnewcommand\Input{\item[\algorithmicinput]}
\algnewcommand\algorithmicoutput{\textbf{Output:}}
\algnewcommand\Output{\item[\algorithmicoutput]}
\newcommand{\ie}{\textit{i.e.}}

\newcommand{\fou}{\ensuremath{\vec{u}}}
\DeclareMathOperator*{\argmin}{argmin}
\newcommand{\adjoint}{\ensuremath{{\intercal}}}
\newcommand{\ma}[1]{\ensuremath{\mathsf{#1}}}
\renewcommand{\vec}[1]{\ensuremath{\bm{#1}}}
\newcommand{\norm}[1]{\ensuremath{\left\| #1\right\|}}


\usepackage{lastpage}
\jmlrheading{20}{2019}{1-\pageref{LastPage}}{3/18; Revised
	10/19}{11/19}{18-167}{Nicolas Tremblay, Simon Barthelm\'e, Pierre-Olivier Amblard}
\ShortHeadings{Determinantal Point Processes for Coresets}{Tremblay, Barthelm\'e and Amblard}
\firstpageno{1}

\begin{document}
	
	\title{Determinantal Point Processes for Coresets}
	
	\author{\name Author One \email author-one-email\\
		\name Author Two \email author-two-email\\
		\addr Authors' address line one\\
		Authors' address line two\\
		Authors' address line three...}

	\author{\name Nicolas Tremblay \email nicolas.tremblay@gipsa-lab.fr \\
		\name Simon Barthelm\'e \email simon.barthelme@gipsa-lab.fr \\
		\name Pierre-Olivier Amblard \email pierre-olivier.amblard@gipsa-lab.fr \\
		\addr CNRS, Univ. Grenoble Alpes, Grenoble INP, GIPSA-lab, Grenoble, France}
	
	\editor{Michael Mahoney}
	
	\maketitle
	
	\begin{abstract}
		When faced with a data set too large to be processed all at once, an obvious solution is to retain only part of it. In practice this takes a wide variety of different forms, and among them ``coresets'' are especially appealing. A coreset is a (small) weighted sample of the original data that comes with the following guarantee: a cost function can be evaluated on the smaller set instead of the larger one, with low relative error.
		For some classes of problems, and via a careful choice of sampling
		distribution (based on the so-called ``sensitivity'' metric), iid random sampling has turned to be one of the most successful methods for building coresets efficiently. However, independent samples are sometimes overly redundant, and one could hope that enforcing diversity would lead to better performance.
		The difficulty lies in proving coreset properties in non-iid samples. We show
		that the coreset property holds for samples formed with determinantal point
		processes (DPP). DPPs are interesting because they are a rare example of
		repulsive point processes with tractable theoretical properties, enabling us
		to prove general coreset theorems.  We apply our results to both the $k$-means
		and the linear regression problems, and give extensive empirical evidence that
		the small additional computational cost of DPP sampling comes with superior
		performance over its iid counterpart. Of independent interest, we also
			provide analytical formulas for the  sensitivity in the linear regression and $1$-means cases.
	\end{abstract}
	
	\begin{keywords}
		Coresets, Determinantal Point Processes, Sensitivity
	\end{keywords}

\section{Introduction}
\label{sec:intro}
Given a learning task, if an algorithm is too slow on large data sets, one can either speed up the algorithm or reduce the amount of data. The theory of ``coresets'' gives theoretical guarantees on the latter option. A coreset is a weighted sub-sample of the original data, with the guarantee that for any learning parameter, the task's cost function estimated on the coreset is equal to the cost computed on the entire data set up to a controlled relative error.   

An elegant consequence of such a property is that one may run learning algorithms solely on the coreset, allowing for a significant decrease in the computational cost while guaranteeing almost-equal performance. There are many algorithms that produce coresets, with some tailored for a specific task (such as $k$-means, $k$-medians, logistic regression, etc.), and others more generic. Also, there exists coreset sampling strategies both for the streaming setting and the offline setting: we choose here to focus on the offline setting. 
We follow the review of~\citet{munteanu_coresets-methods_2017} and classify coreset construction techniques in four categories:
	\begin{enumerate}
		\item Geometric decompositions~\citep[\emph{e.g.,}][]{har-peled_coresets_2004, har-peled_smaller_2005, agarwal_geometric_2005, har-peled_geometric_2011}. These methods propose to first discretize the ambient space of the data into a set of cells, snap each data point to its nearest cell in the discretization, and then use these weighted cells to approximate the target tasks. In all these constructions, the minimum number of samples required to guarantee the coreset property depends exponentially in the dimensionality of the ambient space, making them less useful in high-dimensional problems.
		\item Gradient descent~\citep[\emph{e.g.,}][]{badoiu_optimal_2008,
			de_la_vega_approximation_2003, kumar_linear-time_2010,
			clarkson_coresets_2010}. These methods have been originally designed for
		the smallest enclosing ball problem (\ie, finding the ball of minimum radius
		enclosing all datapoints), and have been later generalized to other
		problems. One of the main drawback of these algorithms in the $k$-means
		setting for instance is that their running time grows exponentially in the number of classes $k$~\citep{kumar_linear-time_2010}. Also, these algorithms provide only so-called \emph{weak} coresets. 
		\item Random sampling~\citep[\emph{e.g.,}][]{chen_coresets_2009, langberg_universal_2010, feldman_unified_2011, braverman_new_2016, bachem_practical_2017}. The state of the art for many different tasks such as $k$-means or $k$-median is currently via iid random non-uniform sampling. For optimal performance, the probability to sample an element should be set proportional to a quantity known as its \textit{sensitivity} (introduced by~\citet{langberg_universal_2010}). See Definition~\ref{def:sensi} for the formal definition of sensitivity. In practice, it is unpractical to compute sensitivities: state of the art algorithms rely on bi-criteria approximations to find upper bounds, and set the probability distribution proportional to this upper bound. More details on these results are provided in Section~\ref{subsec:stateoftheart}. 
		\item Sketching and projections~\citep[\emph{e.g.,}][]{phillips_coresets_2016, woodruff_sketching_2014, mahoney_randomized_2011, boutsidis_randomized_2015, boutsidis_improved_2013, cohen_dimensionality_2015, keriven_compressive_2017, clarkson_low-rank_2017, becchetti_oblivious_2019}. Another direction of research regarding data reduction that provably keeps the relevant information for a given learning task is via sketches~\citep{woodruff_sketching_2014}: compressed mappings (obtained via projections) of the original data set that are in general easy to update with new or modified data. Sketches are not strictly speaking coresets, and the difference resides in the fact that coresets are subsets of the data, whereas sketches are projections of the data. Note finally that the frontier between the two is permeable and some data summaries may combine both. 
	\end{enumerate}
	Our work falls into the random sampling category, in which the state of the art consists in tailoring a sampling distribution for the data set at hand, and then sampling iid from that distribution~\citep{chen_coresets_2009, langberg_universal_2010, feldman_unified_2011, braverman_new_2016}. Independent processes being ignorant of the past, and thus liable to sample similar points repeatedly, an avenue for improvement is to produce samples that are less redundant than what iid sampling produces. A natural idea is to consider negatively correlated point
	processes, \ie, point processes for which sampling jointly two similar
	datapoints is less probable than sampling two very different datapoints. Methods based on negatively correlated sampling have been studied in the past for specific tasks.  

For instance, for the column subset selection problem (CSSP), a method called volume sampling has been investigated in the literature~\citep[see][]{deshpande_matrix_2006, deshpande_efficient_2010}. A determinant-based sampling strategy has also been studied by~\citet{belabbas_spectral_2009}. Also, a recent work~\citep{belhadji_determinantal_2018} discusses with some details the different existing sampling-based methods (iid or with negative correlations) for the CSSP, and compares them versus a determinantal sampling strategy. 
	Another specific task for which volume sampling strategies have been used is linear regression~\citep[see for instance][]{derezinski_leveraged_2018, derezinski_reverse_2018}. 

	We propose: i/~to concentrate on a specific type of negatively-correlated sampling:
	determinantal point processes (DPPs), known to provide samples that are representative
	of the ``diversity'' in the data set~\citep{kulesza_determinantal_2012}; ii/~to study their coreset performance on generic tasks. To the
	best of our knowledge, we provide the first general coreset guarantee using non-iid
	random sampling. 
	
	DPPs are parametrized by a positive semi-definite matrix called
	$L$-ensemble and denoted by $\ma{L}$. 
	This matrix encodes for the inclusion probabilities of each sample as well as higher order inclusion probabilities defining the correlation between samples. Note that DPPs have in general a random number of samples which in many practical situations is not adapted. This lead~\citet{kulesza_determinantal_2012} to define $m$-DPPs: DPPs conditioned to output $m$ samples (for precise definitions related to DPPs and $m$-DPPs, see Section~\ref{subsec:DPPdef}). It so happens that DPPs are more tractable than $m$-DPPs, making some proofs easier to show in the DPP context; however, $m$-DPPs are more useful in practice, especially when one needs to compare with fixed-size sampling methods as we do in this work. The reader should thus be mentally prepared to juggle from one concept to the other throughout the remainder of this paper.

	\subsection{Contributions}
	\label{subsec:contribution}
	
	\noindent\textbf{Theoretical contributions.} Our theorems are quite generic, and assume mostly that the cost functions under study are Lipschitz. We have two main lines of argument: the first is that DPP samples do indeed verify the coreset property, the second is that DPPs produce \emph{better} coresets than their iid counterparts if one uses the right $L$-ensemble to define the DPP. 
	More specifically, we show:
	
	\begin{itemize}
		\item Theorem~\ref{thm:main} and~\ref{thm:main_DPP}. Whatever the higher-order inclusion probabilities, if the inclusion
		probability of each sample of a DPP (or $m$-DPP) is set proportional to the sensitivity, then the
		results are at least as good as in the iid case. Technical limitations in
		controlling the concentration properties of correlated samples currently
		keep us from deriving exactly the minimum coreset size one may hope for when using DPPs. 
		\item Theorem~\ref{thm:variance_vs_poisson}. A DPP sample necessarily has a lower variance than its (independent) Poisson counterpart with same inclusion probabilities.
		\item Theorem~\ref{thm:variance_iid} and its Corollary~\ref{coro:variance_iid}. In the fixed-size context: a sample from an $m$-DPP with a rank $m$ projective $L$-ensemble (also called projective DPP) 
		necessarily leads to a lower variance than its iid counterpart with same inclusion probabilities.
	\end{itemize}
	
	We also show Theorem~\ref{thm:balanced_thm}, stating that samples from a particular polynomial $L$-ensemble based on the
	Vandermonde matrix of the data asymptotically have a rebalancing property, made precise in Section~\ref{sec:rebalancing_theorem}. 
	For instance in the $k$-means setting, this rebalancing property means that, asymptotically, such a DPP produces samples in each cluster, even if some are much smaller than others (see Figure~\ref{fig:rebalancing-clusters} for an illustration). 
	
	Finally, of independent interest, we provide for the first time analytical formulas for the sensitivity, in two specific settings: the $1$-means and the linear regression cases (Lemmas~\ref{lemma:sensi_1means} and~\ref{lemma:sensi_lr}). \\
	
	\noindent\textbf{Empirical contributions.} In the iid setting, for optimal performance, the probability of sampling an element should be set proportional to its sensitivity. In general, the sensitivity is not computable in polynomial time, thus out of reach in practice. For the specific 1-means and linear regression tasks, now that we have provided analytical formulas, these quantities become computable in polynomial time but turn out to be  heavier to compute than solving the task on the whole data set --thus useless in practice. The usual workaround in the iid setting is to set the sampling probability proportional to an upper bound (efficiently computed via, \textit{e.g.},  bi-criteria approximations) of the sensitivity. Thankfully, one still controls the performance of the obtained coreset (as a function of the upper bound's tightness).
	
	Sensitivity playing a central role in the DPP-based coreset theorems we provide, these theorems also suffer from the same impracticality. Unfortunately, due to the dependencies introduced by DPPs, mere upper bounds of the sensitivity are not sufficient to propose a controlled workaround. The theorems enable to discuss in some detail what is the ideal task-specific choice of $L$-ensemble, but in practice we for now need to resort to heuristics. 
	
	We apply our results to both the $k$-means and the linear regression problems where the initial data consists in $n$ points in $\mathbb{R}^d$. As explained, the ideal choice of $L$-ensemble $\ma{L}$ for DPP sampling is untractable in practice, we thus provide two efficient heuristics: one based on random Fourier features of the Gaussian kernel, the other on polynomial features. We pay particular attention to the computation cost of these two heuristics, and provide implementation details. These heuristics output a coreset sample in respectively $\mathcal{O}(nm^2 + nmd)$ and $\mathcal{O}(nm^2)$ time where $m$ is the number of samples of the coreset. In the $k$-means context, this is to compare to $\mathcal{O}(nkd)$ the cost of the current state of the art iid sampling algorithm via bi-criteria approximation. $m$ being necessarily larger than $d$ and $k$ to obtain the coreset guarantee in this context, our proposition is computationally heavier, especially as $m$ increases. We provide nonetheless extensive empirical evidence showing that this additional cost stays reasonable, given the enhanced performance it provides. In particular, given that we provide analytical formulas for the sensitivities in the 1-means and linear regression contexts, we are able, in these two settings, to compare the DPP-based heuristics to the ideal iid coresets (\textit{i.e.}, the coresets sampled iid from the distribution \textit{exactly} proportional to the sensitivity): results clearly show the superior performance of our heuristics. 

Finally, a Julia toolbox called DPP4Coresets is available on the authors' website.\footnote{The DPP4Coresets toolbox is also available at \url{https://gricad-gitlab.univ-grenoble-alpes.fr/tremblan/dpp4coresets.jl}\,.}

\subsection{Organization of the paper}
The paper is organized as follows. Section~\ref{sec:background} recalls the
background: the types of learning problems under consideration, the formal
definition of coresets, sensitivities and DPPs. The theoretical
Section~\ref{sec:thms} presents our main theorems on the performance of DPPs for
coreset sampling: while Section~\ref{sec:mDPP} details coreset performance in the usual formulation of coreset theorems, Section~\ref{subsec:variance} shows general variance arguments in favor of DPPs, and finally Section~\ref{sec:rebalancing_theorem} provides an original asymptotic rebalancing property of DPPs. Section~\ref{sec:DPP_for_kmeans} shows how these theorems are applicable to both the $k$-means and the linear regression problems. We provide in Section~\ref{sec:implementation} a discussion on the choice of $L$-ensemble adapted to these  problems, and detail our sampling algorithms. Finally, the empirical Section~\ref{sec:experiments} presents experiments on artificial as well as real-world data sets comparing the performance of DPP sampling to iid sampling. Section~\ref{sec:conclusion} concludes the paper. Note that for the sake of readability, many proofs and some implementation details are pushed to the Appendix.

\section{Background}
\label{sec:background}

Let $\mathcal{X}=\{\vec{x}_1,\ldots,\vec{x}_n\}$ be a set of $n$ datapoints. Let $(\Theta, d_\Theta)$ be a metric space of parameters, and $\theta$ an element of $\Theta$. 
We consider cost functions of the form:
\begin{align}
L(\mathcal{X},\theta) = \sum_{\vec{x}\in\mathcal{X}} f(\vec{x},\theta),
\end{align}
where $f$ is a non-negative $\gamma$-Lipschitz function ($\gamma>0$) with respect to $\theta$, \ie, $\forall \vec{x}\in\mathcal{X}$:
\begin{align*}
&\forall \theta\in\Theta\qquad\quad\quad\text{ } f(\vec{x}, \theta) \geq 0 ,\\
&\forall (\theta,\theta')\in\Theta^2 \qquad |f(\vec{x},\theta)-f(\vec{x},\theta')|\leq \gamma~ d_\Theta(\theta,\theta').
\end{align*}
Many classical machine learning cost functions fall under this model: $k$-means, 
$k$-median, logistic or linear regression, support-vector machines, low-rank approximations of matrices, etc.

\subsection{Problem considered}
\label{subsec:considered_pblem}
A standard learning task is to minimize the cost $L$ over all $\theta\in\Theta$. We write:
\begin{align}
\theta^{\text{opt}}= \argmin_{\theta\in\Theta} L(\mathcal{X},\theta) \text{, } \qquad L^\text{opt} = L(\mathcal{X}, \theta^\text{opt}) \qquad \text{ and }\qquad \langle f\rangle_\text{opt}=\frac{L^\text{opt}}{n}.
\end{align}

In some instances of this problem, e.g., if $n$ is very large and/or if $f$ is expensive to evaluate and should be computed as rarely as possible, one may rely on sampling strategies to efficiently perform this optimization task.

\subsection{Coresets}
\label{sec:coreset_def}
Let $\mathcal{S}=\{\vec{x}_{s_1},\ldots,\vec{x}_{s_m}\}$ be a subset of $\mathcal{X}$ (possibly with repetitions). To each element $\vec{x}_s\in\mathcal{S}$, associate a weight $\omega(\vec{x}_s)\in\mathbb{R}^+$. 
Define the estimated cost associated to the weighted subset $\mathcal{S}$ as:
\begin{align}
\label{eq:est_cost}
\hat{L}(\mathcal{S},\theta) = \sum_{\vec{x}_s\in\mathcal{S}} \omega(\vec{x}_s)f(\vec{x}_s,\theta).
\end{align}
\begin{definition}[Coreset]
	Let $\epsilon\in (0,1)$. The weighted subset $\mathcal{S}$ is a $\epsilon$-coreset for $L$ if, \emph{for any  parameter $\theta$}, the estimated cost is equal to the exact cost up to a relative error:
	\begin{align} \label{eq:coresets}
	\forall\theta\in\Theta\qquad\left|\frac{\hat{L}}{L}-1\right|\leq\epsilon. 
	\end{align}
\end{definition}
This is the so-called ``strong'' coreset definition, as the $\epsilon$-approximation is required for all $\theta\in\Theta$. A weaker version of this definition exists in the literature where the $\epsilon$-approximation is only required for $\theta^{\text{opt}}$. In the following, we focus on theorems guaranteeing the strong coreset property. 

Let us write $\hat{\theta}^{\text{opt}}$ the optimal solution computed on the weighted subset $\mathcal{S}$: $\hat{\theta}^{\text{opt}}= \argmin_{\theta\in\Theta} \hat{L}(\mathcal{S},\theta)$. 
An important consequence of the coreset property is the following:
\begin{align*}
(1-\epsilon) L(\mathcal{X}, \theta^{\text{opt}}) \leq  (1-\epsilon) L(\mathcal{X}, \hat{\theta}^{\text{opt}})  \leq \hat{L}(\mathcal{S}, \hat{\theta}^{\text{opt}})\leq \hat{L}(\mathcal{S}, \theta^{\text{opt}})\leq (1+\epsilon) L(\mathcal{X}, \theta^{\text{opt}}),
\end{align*}
\ie, running an optimization algorithm on the weighted sample $\mathcal{S}$ will result in a minimal learning cost that is a controlled $\epsilon$-approximation of the learning cost one would have obtained by running the same algorithm on the entire data set $\mathcal{X}$. Note that the guarantee is over costs only: the estimated optimal parameters $\hat{\theta}^{\text{opt}}$ and $\theta^{\text{opt}}$ may be different. Nevertheless, if the cost function is well suited to the problem: either there is one clear global minimum and the estimated parameters will almost coincide; or there are multiple solutions for which the learning cost is similar and selecting one over the other is not an issue. 

In terms of computation cost, if the sampling scheme is efficient, $n$ is very large and/or $f$ is expensive to compute for each datapoint, coresets thus enable a significant gain in computing time.

\subsection{Sensitivity}
To define appropriate sampling schemes for coresets,~\citet{langberg_universal_2010} introduce the notion of sensitivity:
\begin{definition}[Sensitivity] The sensitivity of a datapoint $\vec{x}_i \in \mathcal{X}$ with respect to a fuction $f:\mathcal{X},\Theta \rightarrow \mathbb{R}^+$ is:
	\begin{align}
	\label{eq:sensi}
	\sigma_i = \max_{\theta\in\Theta} \frac{f(\vec{x}_i,\theta)}{L(\mathcal{X},\theta)} \qquad\in[0,1].
	\end{align}
	Also, the total sensitivity is defined as :
	\begin{align*}
	\mathfrak{S} = \sum_{i=1}^n \sigma_i. 
	\end{align*}
	\label{def:sensi}
\end{definition}
Note that the fraction defining the sensitivity is not defined for $L(\mathcal{X},\theta)=0$ (that may happen for instance in the 1-means problem, in the degenerate case where all $\vec{x}_i$ are superimposed and equal to $\theta$). For simplicity, we suppose that $\forall\theta\in\Theta,~ L(\mathcal{X},\theta)>0$. 

The sensitivity is related to the concept of statistical leverage
	score~\citep[\emph{e.g.},][]{drineas_lectures_2018, drineas_fast_2012}, which plays a crucial role in iid random
	sampling theorems in the randomized numerical linear algebra
	literature~\citep{mahoney_randomized_2011}. Both notions are similar, but not equivalent. For instance, we show in Lemma~\ref{lemma:sensi_lr} that sensitivities for the linear regression task are different from the usual definition of leverage score in this context. Thus, in general, leverage scores used in the randomized linear algebra literature 
	are not sensitivities, \textit{i.e.}, they do not necessarily verify Eq.~\eqref{eq:sensi}.

In words, the sensitivity $\sigma_i$ is the worse case contribution of datapoint $x_i$ in the total cost. Informally, the larger it is, the larger its ``outlierness''~\citep{lucic_linear-time_2016}.

\subsection{iid importance sampling and state of the art results}
\label{subsec:stateoftheart}
In the iid sampling paradigm, the importance sampling estimator of $L$ is the following. Say the sample set $\mathcal{S}$ consists in $m$ samples drawn iid with replacement from a (discrete) probability distribution $\vec{p}\in\mathbb{R}^n$ (with $p_i$ the probability of sampling $\vec{x}_i$ at each draw, and $\sum_i p_i=1$). Denote by $\epsilon_i$ the random variable counting the number of occurences of $\vec{x}_i$ in $\mathcal{S}$. One may define $\hat{L}_{\text{iid}}$, the so-called  importance sampling estimator of $L$, as :
\begin{align}
\label{eq:imp_sampling_est_iid}
\hat{L}_{\text{iid}}(\mathcal{S},\theta) = \sum_i \frac{f(\vec{x}_i,\theta)\epsilon_i}{mp_i}.
\end{align}
One can show that $\mathbb{E}(\epsilon_i) = mp_i$, such that $\hat{L}_{\text{iid}}$ is an unbiased estimator of $L$:
\begin{align*}
\mathbb{E}(\hat{L}_{\text{iid}}(\mathcal{S},\theta)) = L(\mathcal{X},\theta).
\end{align*}

The concentration of $\hat{L}_{\text{iid}}$ around its expected value is controlled by the following state of the art theorem:
\begin{theorem}[Coresets with iid random sampling]
	\label{thm:iid}
	Let $\vec{p}\in[0,1]^n$ be a probability distribution over all datapoints $\mathcal{X}$ with $p_i$ the probability of sampling $\vec{x}_i$ and $\sum_i p_i=1$. Draw $m$ iid samples with replacement according to $\vec{p}$. Associate to each sample $\vec{x}_s$ a weight $\omega(\vec{x}_s)=1/mp_s$. The weighted subset obtained is a $\epsilon$-coreset with probability $1-\delta$ provided that:
	\begin{align*}
	m\geq m^*
	\end{align*}
	with
	\begin{align*}
	m^*=\mathcal{O}\left(\frac{1}{\epsilon^2}\left(\max_i\frac{\sigma_i}{p_i}\right)^2(d'+\log{(1/\delta)})\right),
	\end{align*}
	where $d'$ is the pseudo-dimension of $\Theta$ (a generalization of the Vapnik-Chervonenkis dimension). The optimal probability distribution minimizing $m^*$ is $p_i=\sigma_i/\mathfrak{S}$. In this case, the weighted subset is a $\epsilon$-coreset with probability $1-\delta$ provided that:
	\begin{align*}
	m\geq \mathcal{O}\left(\frac{\mathfrak{S}^2}{\epsilon^2}(d'+\log{(1/\delta)})\right).
	\end{align*}
	For instance, in the $k$-means setting\footnote{\label{fn:2}In the literature~\citep{feldman_unified_2011, balcan_distributed_2013}, $d'$ is often taken to be equal to $dk$ in the $k$-means setting. We nevertheless agree with~\citet{bachem_practical_2017} and their discussion in Section 2.6 regarding $k$-means' pseudo-dimension and thus write $d'=dk\log{k}$}, $d'=dk\log{k}$ and $\mathfrak{S}=\mathcal{O}(k)$ such that the coreset property is guaranteed with probability $1-\delta$ provided that:
	\begin{align*}
	m\geq \mathcal{O}\left(\frac{k^2}{\epsilon^2}(dk\log{k}+\log{(1/\delta)})\right).
	\end{align*}
\end{theorem}

This theorem is taken from the paper by~\citet{bachem_practical_2017}. Its original form goes back to~\citet{langberg_universal_2010}. Note that sensitivities cannot be computed rapidly, such that, as it is,  this theorem is unpractical. Thankfully, bi-criteria approximation schemes (such as Algorithm~2 of~\citealt{bachem_practical_2017}, or other propositions such as in~\citealt{feldman_unified_2011, makarychev_bi-criteria_2016}) may be used to efficiently find an upper bound of the sensitivity for all $i$: $b_i\geq \sigma_i$. Noting $B=\sum b_i$, and setting $p_i = b_i/B$, one shows that the coreset property may be guaranteed in the iid framework provided that $m\geq \mathcal{O}\left(\frac{B^2}{\epsilon^2}(d'+\log{(1/\delta)})\right)$. 

Note that if one authorizes coresets with negative weights (that is, authorizes negative weights in the estimated cost of  Eq.~\eqref{eq:est_cost}), then the above theorem may be further improved~\citep{feldman_unified_2011}. Nevertheless, we prefer to restrict ourselves to positive weights as optimization algorithms such as Lloyd's $k$-means heuristics~\citep{lloyd_least_1982} are in practice more straightforward to implement on positively weighted sets rather than on sets with possibly negative weights. 

Finally, \citet[Theorem 5.5]{braverman_new_2016} improve the previous theorem by showing that under the same non-uniform iid framework, the coreset property is guaranteed provided that  $m\geq \mathcal{O}\left(\frac{\mathfrak{S}}{\epsilon^2}(d'\log{\mathfrak{S}}+\log{(1/\delta)})\right)$, thus reducing the term in $\mathfrak{S}^2$ to $\mathfrak{S}\log\mathfrak{S}$. 

\subsection{Correlated importance sampling}
Eq.~\eqref{eq:imp_sampling_est_iid} is not suited to correlated sampling and, in the following, we will use a slightly different importance sampling estimator, more adapted to this case. 
Consider a point process defined on $\mathcal{X}$ that outputs a random sample $\mathcal{S}\subset\mathcal{X}$. 
For each data point $\vec{x}_i$, denote by $\pi_i$ its inclusion (or marginal) probability:
\begin{align}
\pi_i = \mathbb{P}\left(\vec{x}_i\in\mathcal{S}\right). 
\end{align}
Moreover, denote by $\epsilon_i$ the random Boolean variable such that  
$\epsilon_i=1$ if $\vec{x}_i\in\mathcal{S}$, and $0$ otherwise. In this paper, we focus on the following definition\footnote{Note that in fact $\hat{L}_{\text{iid}}$ and $\hat{L}$ are the same objects if one defines $\epsilon_i$ to be the number of times $i$ is sampled (which will be in practice Boolean in the DPP case as the same sample can never be sampled twice in this context) and write $\hat{L}(\mathcal{S},\theta) = \sum_i \frac{f(\vec{x}_i,\theta)\epsilon_i}{\mathbb{E}(\epsilon_i)}$. We prefer to introduce both notations to avoid confusions.} of the  importance sampling cost estimator $\hat{L}$:
\begin{align}
\label{eq:imp_sampling_est_corr}
\hat{L}(\mathcal{S},\theta) = \sum_i \frac{f(\vec{x}_i,\theta)\epsilon_i}{\pi_i}.
\end{align}
By construction, $\mathbb{E}(\epsilon_i) = \pi_i$, such that $\hat{L}$ is an unbiased estimator of $L$:
\begin{align*}
\mathbb{E}(\hat{L}(\mathcal{S},\theta)) = L(\mathcal{X},\theta).
\end{align*}
Studying the coreset property in this setting boils down to studying the concentration properties of $\hat{L}$ around its expected value.

\subsection{Determinantal Point Processes}
\label{subsec:DPPdef}
In order to induce negative correlations within the samples, we choose to focus on Determinantal Point Processes (DPP), point processes that have recently gained attention due to their ability to output ``diverse'' subsets within a tractable probabilistic framework (for instance with explicit formulas for marginal probabilities). In the following, $2^{[n]}$ denotes the set of all possible subsets of the $n$ first integers. 

The central object is called the $L$-ensemble, and is nothing else than a positive semi-definite matrix $\ma{L}\in\mathbb{R}^{n\times n}$. We will write its eigenvalues $0\leq\lambda_1\leq\lambda_2\leq\ldots\leq\lambda_n$. 

\begin{definition}[DPP,~\citealt{kulesza_determinantal_2012}] 
	\label{def:DPP} Consider a point process, \ie, a process that randomly draws an element $\mathcal{S}\in2^{[n]}$. It is determinantal with $L$-ensemble $\ma{L}$ if
	$$\mathbb{P}(\mathcal{S}) = \frac{\det(\ma{L}_{\mathcal{S}})}{\det(\ma{I}+\ma{L})},$$
	where $\ma{L}_\mathcal{S}$ is the restriction of $\ma{L}$ to the rows and columns indexed by the elements of $\mathcal{S}$.
\end{definition}
The following well-known properties are verified (see~\citet{kulesza_determinantal_2012} for details):
\begin{itemize}
	\item one can indeed show that the normalization is proper: $\sum_{\mathcal{S}}\det(\ma{L}_\mathcal{S})=\det(\ma{I}+\ma{L})$.
	\item all inclusion probabilities, at any order, are explicit: 
	\begin{align*}
	\forall\mathcal{A}\in 2^{[n]}\qquad \mathbb{P}(\mathcal{A}\subseteq\mathcal{S}) = \det(\ma{K}_{\mathcal{A}})
	\end{align*}
	where $\ma{K}=\ma{L}(\ma{I}+\ma{L})^{-1}\in\mathbb{R}^{n\times n}$ is called the marginal kernel. 
	In particular, the probability of inclusion of $i$, $\pi_i$, is equal to $\ma{K}_{ii}$. Also, to gain insight in the repulsive nature of DPPs, one may readily see that the joint marginal probability of sampling $i$ and $j$ reads: $\det(\ma{K}_{\{i,j\}})=\pi_i\pi_j - \ma{K}_{ij}^2$ and is necessarily smaller than $\pi_i\pi_j$, the joint probability in the case of Poisson uncorrelated sampling. The stronger the ``interaction'' between $i$ and $j$ (encoded by the absolute value of element $\ma{K}_{ij}$), the smaller the probability of sampling both jointly: this determinantal nature thus favors diverse sets of samples.
	\item $\ma{K}$ is also positive semi-definite. The eigenvalues of $\ma{K}$ are $\{\frac{\lambda_i}{1+\lambda_i}\}_i$ and are necessarily between $0$ and $1$. 
	\item it can be shown  that the number of samples of a DPP is itself random and distributed as a sum of Bernoulli parametrized by the eigenvalues of $\ma{K}$. In particular, the expected number of samples is $\mu =  \text{Tr}(\ma{K})=\sum_i\frac{\lambda_i}{1+\lambda_i}$. 
\end{itemize}

In many cases, one prefers to specify deterministically the number of samples, instead of having a random number of them. 
This leads to $m$-DPPs: DPPs conditioned to output $m$ samples. 

\begin{definition}[$m$-DPP,~\citealt{kulesza_determinantal_2012}] 
	\label{def:mDPP} Consider a point process that randomly draws an element $\mathcal{S}\in2^{[n]}$. This process is an $m$-DPP with $L$-ensemble $\ma{L}$ if:
	\begin{enumerate}
		\item[i)]  $\forall\mathcal{S} \text{ s.t. } |\mathcal{S}|\neq m, ~~ \mathbb{P}(\mathcal{S}) = 0$
		\item[ii)] $\forall\mathcal{S} \text{ s.t. } |\mathcal{S}|= m, ~~ \mathbb{P}(\mathcal{S}) = \frac{1}{Z} \det(\ma{L}_{\mathcal{S}})$ with $Z$ 
		the normalization constant.
	\end{enumerate}
\end{definition}
The following properties hold:
\begin{itemize}
	\item the normalization constant $Z$ is in fact the $m$-th order elementary symmetric polynomial of the eigenvalues of $\ma{L}$:
	\begin{align*}
	Z= \sum_{\mathcal{S}' \text{ s.t. } |\mathcal{S}'|=m}\det(\ma{L}_{\mathcal{S'}}) 
	~=~e_m(\lambda_1,\ldots,\lambda_n) = \sum_{1\leq  j_1 < j_2 < \cdots < j_m \leq n} \lambda_{j_1} \dotsm \lambda_{j_m}.
	\end{align*}
	\item  in general, $m$-DPPs are not DPPs: for instance the probability of including element $i$, $\pi_i$,  is no longer $\ma{K}_{ii}$ in general. In fact, one has $\pi_i = \frac{1}{Z}\sum_{\mathcal{S'} \text{ s.t } |\mathcal{S'}|=m \text{ and } i\in\mathcal{S'}} \det(\ma{L}_{\mathcal{S'}})$. 
	\item by construction, $\sum_i \pi_i =m$.
\end{itemize}

Let us define the specific but important case of projective DPPs. 
\begin{definition}[projective-DPP] 
	\label{def:mDPP} A projective DPP is a $m$-DPP whose $L$-ensemble is a projection of rank $m$:
	\begin{align*}
	\ma{L} = \ma{UU}^\top
	\end{align*}
	where $\ma{U}\in\mathbb{R}^{n\times m}$ has orthonormal columns (\ie, $\ma{U}^\top\ma{U}=\ma{I}_m$).
\end{definition}

\begin{lemma}[Lemma 1.3 of~\citealt{barthelme_asymptotic_2019}]
	\label{lemma:projDPP}
	A projective DPP with $L$-ensemble $\ma{L}$ is also a DPP, with \emph{marginal kernel} $\ma{L}$.
\end{lemma}
In fact, the set of projective DPPs is precisely the intersection between the set of DPPs and the set of $m$-DPPs. Projective DPPs are very practical objects: they have both the practical convenience of $m$-DPPs (a fixed number of samples) and the theoretical convenience of DPPs (for instance, $\pi_i$ is simply  $\ma{L}_{ii}$, \ie, the sum of squares of the $i$-th line of $\ma{U}$). 



\section{Coreset theorems}
\label{sec:thms}
We now detail our main theoretical contributions. In Section~\ref{sec:mDPP}, we present a coreset theorem for $m$-DPPs providing sufficient conditions on the marginal probabilities $\{\pi_i\}_i$ to guarantee the coreset property. We will see that, similar to the iid case (Theorem~\ref{thm:iid}), the optimal marginal probability should be set proportional to the sensitivity. A similar result is derived for DPPs in Appendix~\ref{app:DPPs}. These theorems are valid for any choice of higher order inclusion probabilities (the conditions are only on the first-order inclusion probabilities $\{\pi_i\}$). We further discuss in Section~\ref{subsec:variance} how one may take advantage of these additional degrees of freedom encoding the negative correlations of DPPs to improve the coreset performance over iid sampling. Finally, in Section~\ref{sec:rebalancing_theorem}, we show that a particular polynomial projective DPP asymptotically verifes a rebalancing property, thus making them natural candidates for the coreset problem.

\subsection{$m$-Determinantal Point Processes for coresets}
\label{sec:mDPP}

\begin{theorem}[$m$-DPP for coresets] 
	\label{thm:main}Let $\mathcal{S}$ be a sample from an $m$-DPP with $L$-ensemble $\ma{L}$, $\epsilon\in(0,1)$, and $\eta$ the minimal number of balls of radius $\frac{\epsilon \langle f\rangle_\text{opt}}{6 \gamma}$ necessary to cover $\Theta$, with $\gamma$ the Lipschitz parameter of $f$.   $\mathcal{S}$ is a $\epsilon$-coreset with probability larger than $1-\delta$ provided that:
	\begin{align*}
	m\geq m^*=\max(m^*_1, m^*_2)
	\end{align*}
	with:
	\begin{align*}
	m^*_1 &= \frac{32}{\epsilon^2} \left(\max_{i} \frac{\sigma_i}{\bar{\pi}_i}\right)^2 \log{\frac{4\eta}{\delta}},\\
	m^*_2 &= \frac{32}{\epsilon^2} \left(\frac{1}{n\bar{\pi}_\text{min}}\right)^2 \log{\frac{4\eta}{\delta}},
	\end{align*}
	and $\forall i, \bar{\pi}_i=\pi_i/m$. 
\end{theorem}

The proof is provided in Appendix~\ref{app:proof_thm_main}. 
Note that $m^*_1$ and $m^*_2$ are not independent of $m$: they are in fact dependent via $\bar{\pi}_i = \pi_i / m$. While this formulation may be surprising at first, this is due to the fact that in non-iid settings, separating $m$ from $\pi_i$ is not as straightforward as in the iid case (in Theorem~\ref{thm:iid}, $m$ and $p_i$ are independent) . Also, we give this particular formulation of the theorem to mimic classical concentration results obtained with iid sampling. 

In order to simplify further analysis, we suppose from now on that $n\sigma_\text{min}\geq1$. As shown in the second lemma of Appendix~\ref{app:proof_lemmas}, this is in fact verified in the $k$-means case for instance. Nevertheless, the following results may be generalized to cases with unconstrained $\sigma_{\text{min}}$ if needed, with little effects on the main results. 

\begin{corollary}
	If $n\sigma_\text{min}\geq1$, then $m_1^* \geq m_2^*$ and the coreset property of Theorem~\ref{thm:main} is verified if:
	\begin{align}
	\label{eq:coreset_condition}
	m\geq m^*=\frac{32}{\epsilon^2} \left(\max_{i} \frac{\sigma_i}{\bar{\pi}_i}\right)^2 \log{\frac{4\eta}{\delta}}
	\end{align}
	with $\forall i,~\bar{\pi}_i = \pi_i / m$.
\end{corollary}

\begin{proof}
	Denote by $j$ the index for which $\bar{\pi}_i$ is minimal and, provided that $n\sigma_\text{min}\geq1$, one has:
	\begin{align*}
	\max_i \frac{\sigma_i}{\bar{\pi}_i} n\bar{\pi}_\text{min}\geq n\sigma_j \geq n\sigma_\text{min} \geq 1,
	\end{align*}
	which implies $m^* = \max(m_1^*,m_2^*)= m_1^*$.
\end{proof}

One would like to have the coreset guarantee for a minimal number of samples,
that is: to find the  marginal probabilities $\pi_i$ minimizing $m^*$. A quick
glance at Eq.~\eqref{eq:coreset_condition} tells us to set
$\pi_i=m\sigma_i/\mathfrak{S}$ in order to minimize the bound $m^*$ while
satisfying the constraint $\sum_i\pi_i=m$. In practice, however, computing the
sensitivities is often untractable. We thus propose to set the marginal
probabilities according to the following looser  condition. 

\begin{corollary}
	\label{cor:opt_DPP}
	If one sets the $\pi_i$'s such that there exists $\alpha> 0$ and $\beta \geq 1$ verifying: 
	\begin{align}
	\label{eq:condition_beta_gamma1}
	\forall i ~~~~~~\qquad &\alpha\sigma_i\leq\pi_i\leq\alpha\beta\sigma_i,\\
	\text{and}~\qquad \label{eq:condition_beta_gamma2} 
	&\frac{\alpha}{\beta}\geq\frac{32}{\epsilon^2} \mathfrak{S} \log{\frac{4\eta}{\delta}},
	\end{align}
	then $\mathcal{S}$ is a $\epsilon$-coreset with probability at least $1-\delta$. In this case, the number of samples verifies:
	\begin{align*}
	m\geq\frac{32}{\epsilon^2} \beta\mathfrak{S}^2\log{\frac{4\eta}{\delta}}.
	\end{align*}
\end{corollary}

\begin{proof}
	Let us suppose that the marginal probabilities $\pi_i$ are set such that there exists $\alpha> 0$ and $\beta \geq 1$ verifying: 
	\begin{align*}
	\forall i,\qquad \alpha\sigma_i\leq\pi_i\leq\alpha\beta\sigma_i.
	\end{align*}
	Note that:
	\begin{align*}
	\left(\max_{i} \frac{\sigma_i}{\pi_i}\right)^2 m \leq \frac{m}{\alpha^2} = \frac{1}{\alpha^2}\sum_i\pi_i\leq \frac{\beta}{\alpha}\sum_i\sigma_i = \frac{\beta}{\alpha}\mathfrak{S}. 
	\end{align*}
	Thus, the inequality 
	\begin{align*}
	\frac{\alpha}{\beta}\geq \frac{32}{\epsilon^2} \mathfrak{S} \log{\frac{4\eta}{\delta}}
	\end{align*}
	implies:
	\begin{align*}
	1\geq \frac{32}{\epsilon^2} \left(\max_{i} \frac{\sigma_i}{\pi_i} \right)^2 m \log{\frac{4\eta}{\delta}},
	\end{align*}
	that we recognize as the coreset condition~\eqref{eq:coreset_condition} by multiplying on both sides by $m$: $\mathcal{S}$ is indeed a $\epsilon$-coreset with probability larger than $1-\delta$. Moreover, in this case:
	\begin{align*}
	m = \sum_i \pi_i \geq \alpha\sum_i \sigma_i = \alpha\mathfrak{S} \geq \frac{32}{\epsilon^2} \beta\mathfrak{S}^2 \log{\frac{4\eta}{\delta}}.
	\end{align*}
\end{proof}

Corollary~\ref{cor:opt_DPP} is applicable to cases where $\sigma_\text{max}$ is not too large. In fact, in order for $\alpha\sigma_i$ to be smaller than $\pi_i$, and thus smaller than $1$ as $\pi_i$ is a probability, $\alpha$ should always be set inferior to $\frac{1}{\sigma_\text{max}}$. Now, if $\sigma_\text{max}$ is so large that $\frac{1}{\sigma_\text{max}}\leq \frac{32}{\epsilon^2}\mathfrak{S}\log{\frac{4\eta}{\delta}}$, then, even by  setting $\beta$ to its minimum value $1$, there is no admissible $\alpha$ verifying both conditions~\eqref{eq:condition_beta_gamma1} and~\eqref{eq:condition_beta_gamma2}. We refer to Appendix~\ref{remark:outliers} for a simple workaround if this issue arises. We will further see in the experimental section (Section~\ref{sec:experiments}) that elements with large sensitivities~\citep[\ie, outliers,][]{lucic_linear-time_2016} are not an issue in practice. 

Similar results are obtained for DPPs (instead of $m$-DPPs) in Appendix~\ref{app:DPPs}. 

\subsection{Links with the iid case and variance arguments}
\label{subsec:variance}
Let us first compare these results with Theorem~\ref{thm:iid} obtained in the  iid setting. A few remarks are in order:
\begin{enumerate}
	\item setting $\beta$ and $\alpha$ to $1$ in Corollary~\ref{cor:opt_DPP}, that is, setting each $\pi_i$ exactly to $\sigma_i$, the minimum number of required samples is $\frac{32\mathfrak{S}^2}{\epsilon^2} (\log{\eta} + \log{\frac{4}{\delta}})$, to compare to $\mathcal{O}(\frac{\mathfrak{S}^2}{\epsilon^2}(d'+\log{(1/\delta)}))$ of Theorem~\ref{thm:iid}, where $d'$ is the pseudo-dimension of $\Theta$. $\eta$ being the number of balls of radius $\frac{\epsilon \langle f\rangle_\text{opt}}{6 \gamma}$ necessary to cover $\Theta$, it will typically be $\frac{\epsilon \langle f\rangle_\text{opt}}{6 \gamma}$ to the power of the ambient dimension of $\Theta$ (analogous to $d'$). For instance, in the $k$-means case, $d'=dk\log{k}$ (see footnote~\ref{fn:2}), whereas, as shown later in Section~\ref{sec:DPP_for_kmeans}, $\log{\eta}=dk\log{\left(\frac{12\rho\gamma}{\epsilon\langle f\rangle_\text{opt}}+1\right)}$ where $\rho$ is the diameter of the minimum enclosing ball of the data $\mathcal{X}$. Up to the $\log$ term, $d'$ and $\log\eta$ are the same. 
		The difference observed in the $\log$ term is due to the fact that coreset theorems in the iid case~\citep[see for instance][]{bachem_practical_2017} take advantage of powerful results from the  Vapnik-Chervonenkis (VC) theory, as detailed in~\cite{li_improved_2001}. Unfortunately, these fundamental results are  valid in the iid case only, and are not easily generalized to the correlated case. Possible improvements to reduce this small gap could take  advantage of chaining arguments in correlated contexts such as in~\cite{baraud_bernstein-type_2010}, in order to improve over the repeated loose union bounds we have used in the proof.
	\item Outliers are not naturally dealt with using our proof techniques, mainly due to our multiple use of the union bound that necessarily englobes the worse-case scenario. In fact, in the importance sampling estimator used in the  iid case (Eq.~\ref{eq:imp_sampling_est_iid}), outliers are not problematic as they can be sampled several times. In our setting, outliers are constrained to be sampled only once, which in itself makes sense, but complicates the analysis. Empirically, we will see in Section~\ref{sec:experiments} that outliers are not an issue. 
	\item The DPP coreset theorems obtained are in a sense disappointing: they do not show that the concentration is tighter in the DPP case than in the iid case. They are in fact limited by the current state-of-the-art in concentration of strongly Rayleigh measures~\citep{pemantle_concentration_2014}. On the bright side, our results take \emph{only} into account first-order  inclusion probabilities: the $\{\pi_i\}$'s; meaning that these DPP sampling theorems are valid for any choice of higher-order inclusion probabilities (encoding the correlation between samples). We will now see how these extra degrees of freedom enable to provably decrease the variance of the cost estimator, compared to the iid case. 
\end{enumerate}	
\subsubsection{A first variance argument: improvement over the Poisson point process}
Consider a DPP with marginal kernel $\ma{K}$.  Build the diagonal kernel $\ma{K}_d$ with $\ma{K}_d(i,i)=\ma{K}(i,i)$. Note that a DPP from $\ma{K}_d$ reduces to a Poisson point process. 
Note also that marginal probabilities $\pi_i$ of both processes (and consequently their expected number of samples) are the same. 	
We compare the variance of the estimator $\hat{L}$ obtained with a DPP with marginal kernel $\ma{K}$ versus its variance obtained with its Poisson uncorrelated counterpart: a DPP with marginal kernel $\ma{K}_d$. 

\begin{theorem}
	\label{thm:variance_vs_poisson}
	For any admissible marginal kernel $\ma{K}$ (\ie, positive semi-definite with eigenvalues between $0$ and $1$), we have:
	\begin{align*}
	\forall\theta\in\Theta\qquad\text{\emph{Var}}(\hat{L}) = \text{\emph{Var}}_{d} - \sum_{i\neq j}\frac{\ma{K}_{ij}^2}{\pi_i\pi_j}f(\vec{x}_i,\theta)f(\vec{x}_j,\theta)
	\end{align*}
	where $\text{\emph{Var}}_{d}$ is the variance of the estimator based on the diagonal DPP. 
	As the function $f$ is positive, the variance of $\hat{L}$ via DPP sampling with kernel $\ma{K}$ is thus necessarily smaller than its Poisson counterpart with same inclusion probabilities.
\end{theorem}	

\begin{proof}
	We have:
	\begin{align*}
	\text{Var}(\hat{L})&=\mathbb{E}(\hat{L}^2)-\mathbb{E}(\hat{L})^2\\
	&=\sum_{i,j}\frac{\mathbb{E}(\epsilon_i \epsilon_j)}{\pi_i\pi_j}f(\vec{x}_i,\theta)f(\vec{x}_j,\theta)-L^2.
	\end{align*}
	As $\mathcal{S}$ is sampled from a DPP, the following is verified. If $i\neq j$, $\mathbb{E}(\epsilon_i \epsilon_j)=\det(\ma{K}_{\{i,j\}})=\pi_i\pi_j-\ma{K}_{ij}^2$. If $i=j$, $\mathbb{E}(\epsilon_i \epsilon_j)=\mathbb{E}(\epsilon_i)=\pi_i$. One obtains:
	\begin{align}
	\label{eq:var}
	\text{Var}(\hat{L})&=\sum_i\left(\frac{1}{\pi_i}-1\right)f(x_i,\theta)^2-\sum_{i\neq j}\frac{\ma{K}_{ij}^2}{\pi_i\pi_j}f(\vec{x}_i,\theta)f(\vec{x}_j,\theta).
	\end{align}
	The first term of the right-hand side is in fact the variance in the case of a diagonal kernel: $\sum_i\left(\frac{1}{\pi_i}-1\right)f(\vec{x}_i,\theta)^2=\text{Var}_{d}$, finishing the proof. 
\end{proof}

The important message here is that this variance reduction occurs \emph{regardless} of the choice of $\ma{K}$'s off-diagonal elements: any choice --provided that $0\preceq\ma{K}\preceq 1$ stays true-- will reduce the variance.

Proving such a variance reduction when comparing a $m$-DPP with $L$-ensemble $\ma{L}$ versus its conditional Poisson equivalent (a Poisson point process conditioned to $m$ samples, with same $\{\pi_i\}$) is much more involved, and remains open.

\subsubsection{A second variance argument: improvement over the iid estimator with replacement}
We now compare the variance of the iid estimator with replacement $\hat{L}_{\text{iid}}$ of Eq.~\eqref{eq:imp_sampling_est_iid} and the variance of the DPP estimator $\hat{L}$ of Eq.~\eqref{eq:imp_sampling_est_corr}. 
Consider a DPP with marginal kernel $\ma{K}$, with $\forall i\quad\pi_i=\ma{K}_{ii}$ the marginal probability of sampling element $i$ such that the expected number of samples $\mu=\sum_i  \pi_i$ is an integer.
We compare the variance of $\hat{L}$ with such a DPP and the variance of $\hat{L}_{\text{iid}}$ with $\mu$ independent draws with replacement with $p_i=\pi_i/\mu$ (in order to have a fair comparison). 

Before we state the result, suppose that $\ma{K}$ is of rank $r$ (with, necessarily, $\mu\leq r\leq n$). $\ma{K}$ being positive-semi definite and of rank $r$, there exists  $\ma{V}=(\vec{v}_1|\vec{v}_2|\ldots|\vec{v}_n)\in\mathbb{R}^{r\times n}$ a set of $n$ vectors in dimension $r$ such that $\ma{K}=\ma{V}^\top\ma{V}$. By construction, $\forall i\quad \norm{\vec{v}_i}^2=\ma{K}_{ii}=\pi_i$. For each vector $\vec{v}$, consider its diagram vector~\citep[Definition 2.3]{copenhaver_diagram_2014}, denoted $\tilde{\vec{v}}$, defined as:
\begin{align}
\tilde{\vec{v}}=\frac{1}{\sqrt{r-1}}\begin{bmatrix}
v(1)^2-v(2)^2\\
\vdots\\
v(r-1)^2-v(r)^2\\
\sqrt{2r}\,v(1)v(2)\\
\vdots\\
\sqrt{2r}\,v(r-1)v(r)
\end{bmatrix}\in\mathbb{R}^{r(r-1)},
\end{align}
where the difference of squares $v(i)^2-v(j)^2$ and the product $v(i)v(j)$ occur exactly once for $i<j, i = 1, 2, \cdots, r-1$.

\begin{theorem}
	\label{thm:variance_iid}
	One has:
	\begin{align*}
	\text{\emph{Var}}(\hat{L}) = \text{\emph{Var}}(\hat{L}_{\text{\emph{iid}}}) + \left(\frac{1}{\mu}-\frac{1}{r}\right)L^2-\frac{r-1}{r}\norm{\sum_i\frac{f(\vec{x}_i,\theta)}{\pi_i}\tilde{\vec{v}}_i}^2.
	\end{align*}
\end{theorem}	
\begin{proof}
	In the iid case,
	\begin{align*}
	\mathbb{E}(\hat{L}_{\text{iid}}^2) = \sum_{i=1}^n\sum_{j=1}^n \frac{f(\vec{x}_i,\theta)f(\vec{x}_j,\theta)\mathbb{E}(\epsilon_i \epsilon_j)}{\mu^2p_ip_j}
	\end{align*}
	where $\epsilon_i$ is not Boolean but counts the number of times $i$ is sampled. 
	One can show that if $i\neq j$, $\mathbb{E}(\epsilon_i \epsilon_j)=p_i p_j(\mu^2-\mu)$, and if $i=j$, $\mathbb{E}(\epsilon_i \epsilon_j)=p_i\mu + p_i^2\mu^2-\mu p_i^2$.
	Thus:
	\begin{align*}
	\text{Var}(\hat{L}_{\text{iid}})&= \mathbb{E}(\hat{L}_{\text{iid}}^2) - L^2\\
	& = \sum_{i=1}^n\sum_{j\neq i} f(x_i,\theta)f(x_j,\theta)(1-1/\mu)  + \sum_{i=1}^n f(x_i,\theta)^2 \frac{1 + p_i\mu-p_i}{\mu p_i} - L^2\\
	&= \frac{1}{\mu}\sum_{i=1}^n \frac{f(\vec{x}_i,\theta)^2}{p_i} - \frac{1}{\mu} L^2
	\end{align*}
	Moreover:
	\begin{align*}
	\text{Var}(\hat{L}) &= \sum_i \frac{f(x_i,\theta)^2}{\pi_i} - \sum_i \sum_j \frac{f(\vec{x}_i,\theta) f(\vec{x}_j,\theta)}{\pi_i \pi_j }\ma{K}_{ij}^2.
	\end{align*}
	Thus:
	\begin{align}
	\label{eq:preliminary_comparison}
	\text{Var}(\hat{L}) = \text{Var}(\hat{L}_{\text{iid}}) + \frac{1}{\mu} L^2 - \sum_i \sum_j \frac{f(\vec{x}_i,\theta) f(\vec{x}_j,\theta)}{\pi_i \pi_j }\ma{K}_{ij}^2
	\end{align}
	Proposition 2.5 of~\cite{copenhaver_diagram_2014} states: 
	\begin{align*}
	\forall (i,j)\qquad 
	\ma{K}_{ij}^2&=\left(\vec{v}_i^\top\vec{v}_j\right)^2=\frac{1}{r}\norm{\vec{v}_i}^2\norm{\vec{v}_j}^2+\frac{r-1}{r}\tilde{\vec{v}}_i^\top\tilde{\vec{v}}_j\\
	&= \frac{1}{r}\pi_i \pi_j+\frac{r-1}{r}\tilde{\vec{v}}_i^\top\tilde{\vec{v}}_j.
	\end{align*}
	Replacing this in Eq.~\eqref{eq:preliminary_comparison} yields the desired result. 
\end{proof}

\begin{remark}
	The variance of the DPP estimator is partly due to the fact that the number of samples is random, which is not the case with the iid scheme we compare it to. The following corollary compares variances when the number of samples is fixed, \ie, in the case where the DPP is projective.
\end{remark}

\begin{corollary}
	\label{coro:variance_iid}
	The marginal kernel of a projective DPP with a (fixed) number of samples $\mu$ is, by definition, of rank $r=\mu$. In this case:
	\begin{align}
	\label{eq:var_TF}
	\forall\theta\in\Theta\qquad	\text{\emph{Var}}(\hat{L}) = \text{\emph{Var}}(\hat{L}_{\text{\emph{iid}}}) -\frac{\mu-1}{\mu}\norm{\sum_i\frac{f(\vec{x}_i,\theta)}{\pi_i}\tilde{\vec{v}}_i}^2.
	\end{align}
\end{corollary}

The variance is thus necessarily improved when using a projective DPP compared to its iid counterpart. This result is remarkable: the variance reduction is independent of the sign of $f$ (supposed positive in the coreset context). This opens interesting generalizing perspectives to a more general class of cost functions $L$. 

\subsubsection{A link with tight frames}
In order to design the ideal marginal kernel $\ma{K}$, and according to the previous discussion, one wants $\ma{K}$ to verify:
\begin{itemize}
	\item The previous corollary suggests to design a projective DPP, that is: $\ma{K}=\ma{V}^\top\ma{V}$ with $\ma{VV}^\top=\ma{I}_m$.
	\item Theorem~\ref{thm:main} suggests to set $\pi_i=\ma{K}_{ii}=\frac{m\sigma_i}{\mathfrak{S}}$. 
\end{itemize}
Finding such a marginal kernel boils down to finding $\ma{V}=(\vec{v}_1|\ldots|\vec{v}_n)$ a set of $n$ vectors $\vec{v}_i$ in dimension $m$ with specified norms $\norm{\vec{v}_i}^2=\pi_i$, such that $\sum_i \pi_i=m$ and $\ma{VV}^\top=\ma{I}_m$. This is exactly the problem of finding a tight frame of $n$ vectors in dimension $m$, with specified norms~\citep{casazza_finite_2012}.
\begin{lemma}
	Such a tight frame exists.
\end{lemma}
\begin{proof}
	Let us denote by $\pi_{(i)}$ the non-decreasing ordered sequence of $\pi_i$: $\pi_{(1)}\leq \pi_{(2)} \leq \ldots\leq \pi_{(n)}$. The Schur-Horn theorem states that a hermitian matrix $K$ of size $n\times n$ with diagonal entries $\pi_i$ and eigenvalues $(0,\ldots, 0, 1, \ldots, 1)$ with $n-m$ zeros and $m$ ones, exists if $\pi_{(i)}$ majorizes $(0,\ldots, 0, 1, \ldots, 1)$, that is, if all the following inequalities are simultaneously verified:
	\begin{align*}
	&\pi_{(1)}\geq 0,~~~~\quad \pi_{(1)} + \pi_{(2)}\geq 0,~~~~\quad \cdots,~~~~\quad\sum_{i=1}^{n-m} \pi_{(i)}\geq 0\\
	&\sum_{i=1}^{n-m+1} \pi_{(i)}\geq 1,~~~~\quad\cdots,~~~~\quad\sum_{i=1}^{n-1} \pi_{(i)}\geq m-1,~~~~\quad\sum_{i=1}^{n} \pi_{(i)} \geq m.
	\end{align*}
	The first $n-m$ inequalities are trivially verified as all $\pi_i$ are supposed positive. Now, $\sum_{i=1}^{n-m+1} \pi_{(i)}\geq 1$ is also verified. Indeed, if it was not case, \ie, if $\sum_{i=1}^{n-m+1} \pi_{(i)} < 1$, then $\sum_{i=1}^{n} \pi_{(i)} < m$ as the largest $m-1$ values of $\pi_i$ are by hypothesis upper bounded by 1. This would contradict $\sum_i\pi_i=m$. A similar argument can be applied to the remaining inequalities. 
\end{proof}
Also, a tight frame not only exists, but several solutions exist in general, and efficient algorithms have been designed to build one~\citep[see for instance][]{tropp_finite-step_2004}. Out of all these possibilities, the ideal would be to find the tight frame that minimizes the variance of Eq.\eqref{eq:var_TF}. 
Up to our knowledge, this is an open and difficult question, rooted in frame theory.\\

Let us recap the above variance results. We showed that a DPP sampling scheme has necessarily a lower variance than its Poisson counterpart, \emph{regardless of the choice of off-diagonal elements of $\ma{K}$}, provided that $\ma{K}$ stays PSD with eigenvalues between $0$ and $1$. We also showed that a projective DPP sampling scheme has necessarily a lower variance than its iid counterpart \emph{regardless of the choice of off-diagonal elements of $\ma{K}$}, provided that $\ma{K}$ stays projective. We finally showed that finding the projective DPP that minimizes the variance is equivalent to a difficult problem in frame theory. 
In other words: finding the optimal DPP for a given problem and data set may be very hard, but on the other hand \emph{any} DPP is guaranteed to do at least as well as iid sampling, in the sense discussed above. Further, we can easily design DPPs which are not optimal, but still have valuable properties, as the next section shows.

\subsection{DPPs provide balanced sampling: a new type of guarantee}
\label{sec:rebalancing_theorem}

An important insight of coreset theory is that the datapoints which are different from the rest should be kept in the sample. We show in this section that one can construct a DPP which  asymptotically guarantees a rebalancing of the datapoints $\mathcal{X}$, meaning that points which are relatively isolated have a high chance of being retained.    For instance, in
the $k$-means setting, this property implies that, asymptotically, one can
construct a DPP that provably produces a balanced sample across clusters, even
in data sets where some clusters are much smaller than others. The result is
illustrated in Figs.~\ref{fig:rebalancing-clusters} and~\ref{fig:rebalancing-discs}.

\begin{figure}
	\centering
	\includegraphics[width=.7\columnwidth]{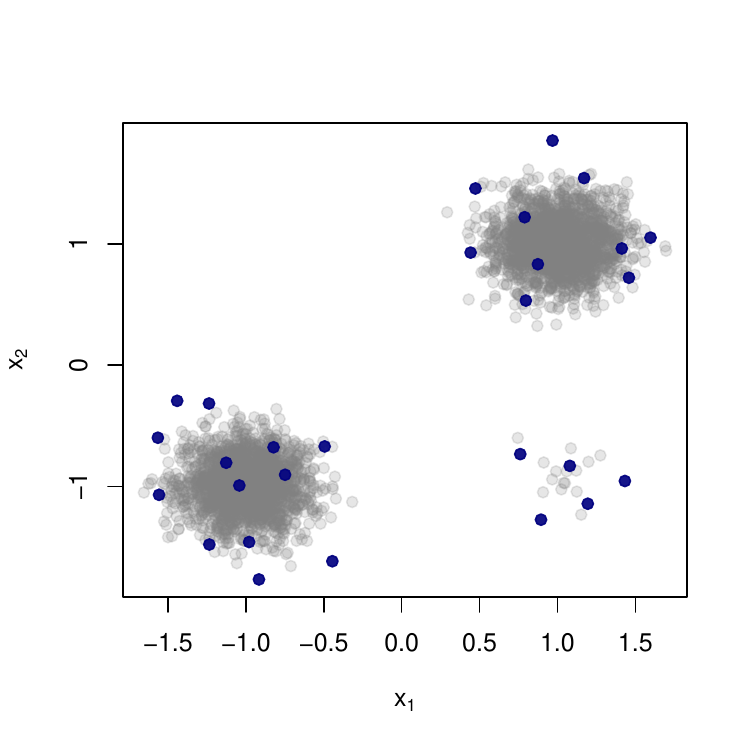}
	\caption{It is possible to construct a DPP with a (asymptotic) rebalancing	property, meaning that it will sample several points from each cluster even
	when clusters are severely imbalanced. Here, we show three imbalanced clusters: two have size 2,000 and one has size 20. In blue, a sample from a polynomial DPP (see text for definition): it samples from each cluster despite their very different sizes. The formal rebalancing property is illustrated in Figure~\ref{fig:rebalancing-discs}.}
	\label{fig:rebalancing-clusters}
\end{figure}

\begin{figure}
	\centering
	(a)\includegraphics[width=.7\columnwidth]{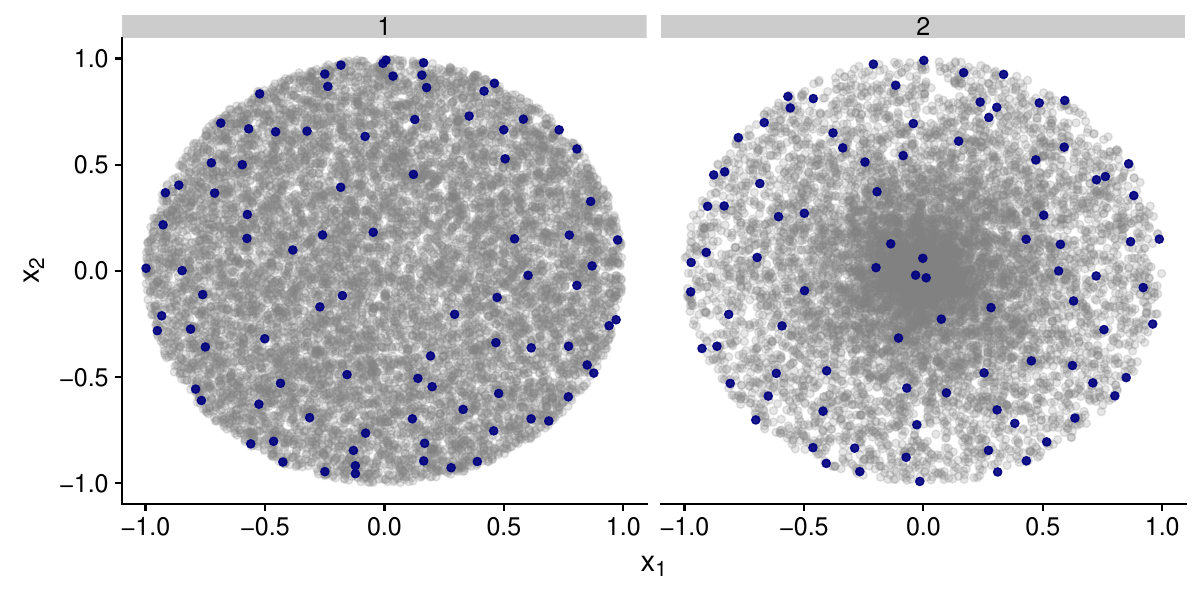}\hfill
	(b)\includegraphics[width=.7\columnwidth]{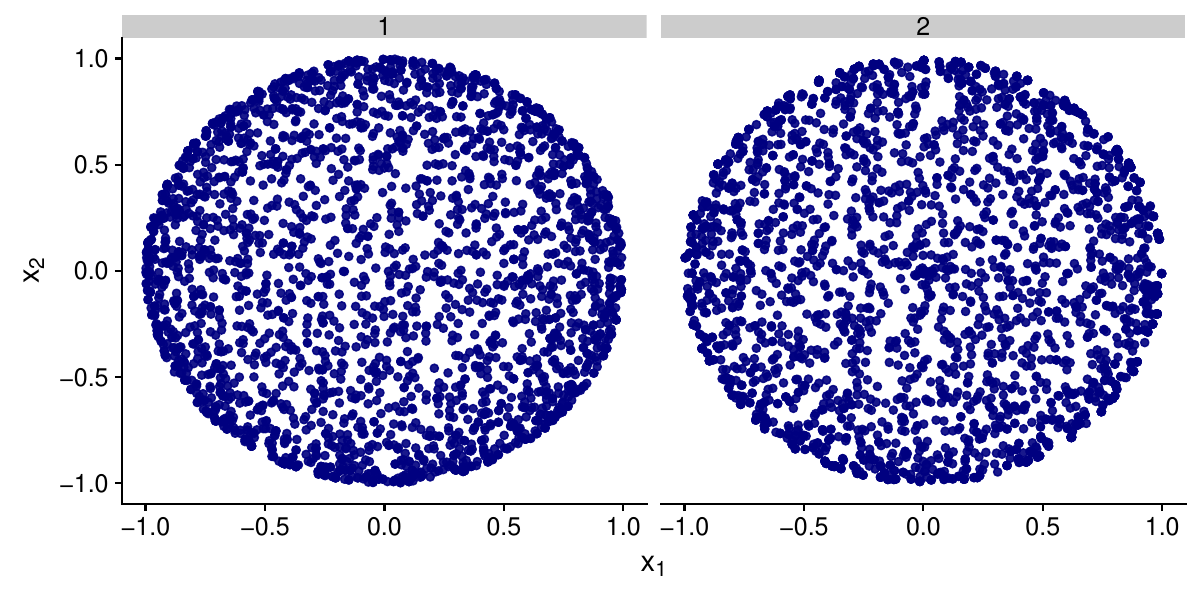}\hfill
	
	\caption{An illustration of the rebalancing result. (a)~We sample ground sets (grey points) from two different distributions on the disc. In blue, two
		realisations of a polynomial DPP constructed from the ground set. Note how
		similar the two realisations are (density-wise), despite the very different
		ground sets they are drawn from. (b)~We overlay 30 realisations of each DPP:
		the two resulting densities are very similar, again despite the different
		ground sets. Our result states that they should indeed converge in large $n$
		and $m$, and that the limiting density depends only on the shape of the
		domain. Here points close to the boundary are oversampled relative to points
		in the center, as predicted. }
	\label{fig:rebalancing-discs}
\end{figure}

In a nutshell, the result is as follows. Suppose that the data $\mathcal{X}$ is a set of $n$ elements drawn iid from a continuous distribution $\mu$ defined on $\Omega\subset\mathbb{R}^d$. Build a projective DPP $\mathcal{S}$ of size $m$ based on the monomials of the $x_i$'s (see Section~\ref{sec:polynomial-dpps} for a precise definition). Under mild regularity assumptions on $\mu$, we show that the intensity measure of $\mathcal{S}$, marginalized over $\mathcal{X}$ is \emph{independent} of $\mu$. Our proof is based on a powerful theorem from \citet{kroo_christoffel_2013}.

Note that this rebalancing property also occurs for iid sampling with sensitivities (that provide a sort of density estimation: the lower the density of points around $x_i$, the larger $\sigma_i$, the higher the chance of sampling it). What is noteworthy here is that the rebalancing property occurs ``naturally'': without any sort of prior density-like estimation. We will emphasize this important point at the end of this Section.

In Section~\ref{sec:polynomial-dpps}, we present the specific type of polynomial DPPs for which our proof holds, that are similar to those used by~\citet{bardenet_monte_2016}. Our result is then formally stated in Section~\ref{section:result}.

\subsubsection{Projective polynomial DPPs}
\label{sec:polynomial-dpps}
The $L$-ensemble we shall build is based on the first $m$ monomials. In dimension one this is easy to
define, so we start there and generalize later to dimension $d \geq 2$. For $d=1$, we denote by  $\mathcal{X} = \{x_1, \ldots, x_n\}$ the original set (supposed to be drawn iid from $\mu$ defined on $\Omega$), and form the $n \times m$ Vandermonde matrix
\begin{equation}
\label{eq:vdm-dim1}
\ma{V}(\mathcal{X}) = [x^{j-1}_i]_{i,j=1}^{n,m} ~\in\mathbb{R}^{n\times m}.
\end{equation}
Note that this matrix has rank $m$ a.s. (as $\mu$ is supposed regular enough) and contains all monomials up to degree $m-1$.
The $L$-ensemble we consider equals:
\begin{equation}
\label{eq:poly-DPP-L-ens}
\ma{L} = \ma{V} \ma{V}^\top\in\mathbb{R}^{n\times n}.
\end{equation}
The orthogonal polynomials (defined on $\Omega$) under the empirical measure $d\mu_n = (1/n) \sum \delta_{x_i}$ associated to $\mathcal{X}$, are defined in the usual manner, i.e. $q_0(x)$ of degree $0$, $q_1(x)$ of degree $1$, $\ldots$ such that: $\int q_i(x)q_j(x)d\mu_n = \delta_{ij}$ and $\int x^{i}q_j(x) d\mu_n = 0$ if $i < j$. In other words, the sequence is constructed from Gram-Schmidt orthogonalisation under $d\mu_n$. Let us write $\vec{q}_j(\mathcal{X})=(q_j(x_1),\ldots, q_j(x_n))^\top\in\mathbb{R}^n$ the vector consisting of the  polynomial $q_j(x)$ taken at values in $\mathcal{X}$. It is well-known (and easily verified) that the QR decomposition of $\ma{V}$ verifies:
\begin{align}
\label{eq:QR-vdm}
\ma{V} =\ma{QR}
\end{align}
with $\ma{Q}=(\vec{q}_0(\mathcal{X})|\ldots|\vec{q}_{m-1}(\mathcal{X}))\in\mathbb{R}^{n\times m}$ and $\ma{R}\in\mathbb{R}^{m\times m}$ an upper triangular matrix. \\

Now, consider the $m$-DPP $\mathcal{S}$ with $L$-ensemble $\ma{L}=\ma{VV}^\top$. Using the fact that $\det(AB)=\det(A)\det(B)$ if $A$ and $B$ are square, we have:
\begin{align*}
\mathbb{P}(\mathcal{S}) &= Z^{-1} \det( \ma{L}_{\mathcal{S}} ) \\
&= Z^{-1} \det( (\ma{Q} \ma{R} \ma{R}^\top \ma{Q}^\top  )_{\mathcal{S}} ) \\
&= Z^{-1} \det((\ma{QQ}^\top)_\mathcal{S}) \det(\ma{R})^2 \\
&= Z'^{-1} \det((\ma{QQ}^\top)_\mathcal{S})
\end{align*}
such that $\mathcal{S}$ is also a $m$-DPP with $L$-ensemble $\ma{Q} \ma{Q}^\top$. As $\ma{Q}^\top\ma{Q}=\ma{I}_m$, \ie, $\ma{Q} \ma{Q}^\top$ is projective, $\mathcal{S}$ is in fact a projective DPP. As a result, its associated marginal kernel is also $\ma{QQ}^\top$ (see Lemma.~\ref{lemma:projDPP}) and, for instance:
\begin{equation*}
\mathbb{P}(x_i \in \mathcal{S}) = \sum_{j=0}^{m-1} q_j^2(x_i).
\end{equation*}
The extension to $d>1$ is mostly straightforward, but there are a few
differences to keep in mind when defining the Vandermonde matrix of monomials. Monomials $\vec{x}^{\vec{\alpha}}$ are now defined as:
\begin{equation*}
\vec{x}^{\vec{\alpha}} = \prod_{j=1}^d x(j)^{\alpha_j}
\end{equation*}
The total degree of a monomial equals the sum of the degrees in each variable,
i.e. $\sum \alpha_i = ||\vec{\alpha}||_1$. The most significant difference between the
one-dimensional case and the general case is that there is more than one
monomial of total degree $\phi$. For example, in dimension 2, $\vec{x}=(x(1),x(2))^\top$ and the monomials of
degree 2 are given by the powers $(2,0)$, $(0,2)$ and $(1,1)$: $x(1)^2$, $x(2)^2$, $x(1)x(2)$. 
A good way of thinking about the construction of a polynomial DPP in the
multidimensional case is to pick first a maximum order (e.g. $\phi=3$), meaning
that all monomials with total degree up to 3 are included. Then the natural
sample size $m$ for the DPP equals the total number of features, giving $m = {d+\phi
	\choose \phi}$. Again, for $d=2$ and $\phi=3$, this gives $m = 1+2+3+4 = 10$. In fact, there is one monomial of order $0$: $1$, two monomials of order $1$: $x(1)$ and $x(2)$, three monomials of order $2$ (the ones stated above), and four monomials of order $3$: $x(1)^3$, $x(2)^3$, $x(1)^2x(2)$ and $x(1)x(2)^2$. This implies that in dimension $d$, the $m$-DPP detailed earlier is well-defined only for specific values of $m$: $m={d+1\choose 1}=d+1$, or $m={d+2\choose 2}=\frac{1}{2}(d+1)(d+2)$, or $m={d+3\choose 3}=\frac{1}{6}(d+1)(d+2)(d+3)$, etc. 

A slight technical difficulty arises in defining the orthogonal polynomials of a
multivariate measure: in dimension 1, the fact that there is a single monomial
of a given degree leads to a natural order in which to perform the Gram-Schmidt
procedure. In higher dimensions the order is only a partial order, so that we
can introduce the monomials by blocks of equal degree, but within a block the
ordering is arbitrary. So we may pick any arbitrary order (e.g. lexicographic)
and run Gram-Schmidt in that order~\citep[for more, see][]{dunkl_orthogonal_2014}.
Given this choice the QR decomposition remains well-defined and all properties given above in the 1D case carry over to the general case. In particular, the link with the orthogonal polynomials on the discrete measure $\mu_n$ stays valid.

\subsubsection{The rebalancing theorem}
\label{section:result}

Formally, the result is as follows. The intensity function $\iota(\vec{x})$ of a
point process quantifies the expected number of points to be found around
$\vec{x}$. We characterize the asymptotics of the intensity function of a DPP $\mathcal{S}$
when both $\mathcal{S}$ and the ground set $\mathcal{X}$ are large, and show that, in that limit,
the intensity is \emph{independent} of the measure $\mu$ from which $\mathcal{X}$ is sampled from.\\
The result is stated formally as a double limit, letting first $n$ go to
infinity (an easy discrete-to-continuous limit), and then increasing the order $\phi$ 
of the polynomial DPP, which implies $m$ going to infinity too.  
We emphasize that, empirically
speaking, rebalancing occurs for reasonable values of $n$ and $m$ but the rate
of convergence is hard to quantify. \\
Certain regularity assumptions are inherited
from the work of~\citet{kroo_christoffel_2013}, to which we refer for more thorough details.  The formal assumptions are as follows: 
\begin{enumerate}
	\item The initial data set $\mathcal{X}=\{x_1,\ldots,x_n\}$ is drawn i.i.d. from a measure $\mu$ over a
	compact, convex\footnote{The convexity
		assumption can probably be relaxed.} domain $\Omega \subset \mathbb{R}^d$.
	\item $\mu$ and the Lebesgue measure $\nu$ are mutually absolutely continuous
	on $\Omega$, so that $\mu'$, the density, is well-defined everywhere on the
	domain (we use the Lebesgue measure for simplicity, another measure may be
	substituted)
	\item We are interested in convergence ``in the bulk'', ie. inside the domain.
	Formally, the results hold for $D
	\subset D_1 \subset \Omega$, where $D$ is compact and $D_1$ is open
	\item $\mu'$ is bounded above and below on $D_1$
	\item We form a $m$-DPP $\mathcal{S}$ on the set $\mathcal{X}$, with a polynomial kernel of degree $\phi$
	(defined in the previous Section), such that $m={\phi+d\choose \phi}$.
	\item (technical) $\mu$ is regular in the sense of Stahl, Totik, and Ullman, and
	the Christoffel function with respect to $\mu$ verifies condition (1.7) in
	\cite{kroo_christoffel_2013}. 
\end{enumerate}
The intensity measure of $\mathcal{S}$, marginalizing over $\mathcal{X}$, which we denote by $I_{n,\phi}(\mathcal{A})$ equals the expected number of points of $\mathcal{S}$ in set $\mathcal{A}$, i.e.:
\begin{equation}
\label{eq:intensity-func}
I_{n,\phi}(\mathcal{A}) =  \mathbb{E}_{\mathcal{X},\mathcal{S}} \left(|\mathcal{S} \cap \mathcal{A} | \right)= \mathbb{E}_{\mathcal{X},\mathcal{S}} \left\{ \sum_{s \in \mathcal{S}} \mathbb{I} \left(s \in \mathcal{A} \right)\right\}
\end{equation}
Note that the expectation is over \emph{both} $\mathcal{X}$ and $\mathcal{S}$. Furthermore, 
$I_{n,\phi}(\Omega)$ equals $m$, the total number of points in $\mathcal{S}$.\\
Our result may be stated as follows.
\begin{theorem}
	\label{thm:balanced_thm}
	Under the assumptions above, for all $\mathcal{A} \subset D_1$,
	\begin{equation*}
	\lim_{\phi \rightarrow \infty} \frac{1}{{\phi+d \choose \phi}} \lim_{n \rightarrow \infty} I_{n,\phi}(\mathcal{A}) = \int_{\mathcal{A}} \kappa(\vec{y}) d\vec{y}
	\end{equation*}
	where $\kappa$ is a density \emph{independent} of $\mu$
\end{theorem}
The proof is in Appendix~\ref{app:proof_asymptotic_thm}.
\begin{lemma}
	$\kappa$ is mostly dependent on the distance to the boundaries of $\Omega$. For
	example, if $\Omega$ is the unit ball in $\mathbb{R}^d$, $\kappa(\vec{y}) = \left( 1- \left| \left| \vec{y} \right| \right|^2 \right)^{-1/2}$
\end{lemma}
See~\cite{kroo_christoffel_2013} for a proof. 

Several important remarks are in order:
\begin{itemize}
	\item unlike iid sampling with sensitivities or other density-related measure for which such rebalancing property will also occur, there is here no prior density estimation: the $L$-ensemble is defined via the Vandermonde matrix that is trivial to compute. Thus, this rebalancing is a property that ``naturally'' arises from the DPP. 
	\item this is only an asymptotic result as $n$ and $m$ go to infinity. Finding minimal values of $m$ for which rebalancing is highly probable, or even rates of convergence is likely a difficult endeavour. We emphasize nevertheless that, empirically speaking,  rebalancing occurs for reasonable values of $n$ and $m$, 
	as visible in Figs.~\ref{fig:rebalancing-clusters} and~\ref{fig:rebalancing-discs}. 
\end{itemize} 
This ends the theoretical results of this paper. We now move on to applications. In the next Section, we apply the results to two problems: $k$-means and linear regression. In Section~\ref{sec:implementation}, implementation details are provided. Finally, experimental validation on artificial and real-world data sets is provided in Section~\ref{sec:experiments}.

\section{Application to two problems: $k$-means and linear regression}
We focus on two problems: $k$-means and linear regression. Admittedly, these are not the best problems to exhibit the usefulness of coresets: there already exists very efficient algorithms to solve them and the need for a small controlled summary is in fact rare. We nevertheless focus on these two problems as they have been well studied in the iid setting, which it is our goal to improve on. Moreover, we derived analytical formulas for the sensitivity in the $1$-means and the linear regression settings: we will thus be able to compare, in those two cases, DPP sampling vs the ideal iid setting (later in the experimental Section~\ref{sec:experiments}).

\subsection{Application to $k$-means}
\label{sec:DPP_for_kmeans}
The theoretical results of Section~\ref{sec:thms} are valid for any learning problem of the form detailed in Section~\ref{subsec:considered_pblem}. We now  specifically consider the $k$-means problem on a set $\mathcal{X}$ comprised of $n$ datapoints in $\mathbb{R}^d$.  This problem boils down to finding $k$ centroids $\theta=(c_1, \ldots, c_k)$, all in $\mathbb{R}^d$, such that the following cost is minimized:
\begin{align}
L(\mathcal{X},\theta) = \sum_{x\in\mathcal{X}} f(x,\theta) ~~\text{ with }~~ f(x,\theta) = \min_{c\in\theta}\norm{x-c}^2.\nonumber
\end{align}
Let $\rho$ be the diameter of the minimum enclosing ball of $\mathcal{X}$ (the smallest ball enclosing all points in $\mathcal{X}$). 
Theorem~\ref{thm:main} and its corollaries are applicable to the $k$-means problem, such that:

\begin{corollary}[$m$-DPP for $k$-means]
	\label{corollary:mDPP_kmeans}
	Let $\mathcal{S}$ be a sample from an $m$-DPP with $L$-ensemble $\ma{L}$. Let $\epsilon, \delta\in(0,1)^2$. With probability at least $1-\delta$, $\mathcal{S}$ is a $\epsilon$-coreset provided that:
	\begin{align*}
	m\geq m^*=\frac{32}{\epsilon^2} \left(\max_{i} \frac{\sigma_i}{\bar{\pi}_i}\right)^2 \left(kd \log\left(\frac{24\rho^2}{\epsilon\langle f\rangle_\text{opt}}+1\right) + \log{\frac{4}{\delta}}\right),
	\end{align*}
	with $\forall i, \bar{\pi}_i=\pi_i/m$. 
	
	Setting the marginal probabilities to their optimal values $\pi_i=m\sigma_i/\mathfrak{S}$, $\mathcal{S}$ is a $\epsilon$-coreset with probability larger than $1-\delta$ provided that:
	\begin{align*}
	m\geq\frac{32}{\epsilon^2} \mathfrak{S}^2 \left(kd \log\left(\frac{24\rho^2}{\epsilon\langle f\rangle_\text{opt}}+1\right) + \log{\frac{4}{\delta}}\right).
	\end{align*}
\end{corollary}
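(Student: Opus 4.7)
The plan is to apply Theorem~\ref{thm:main_mDPP} ($m$-DPP for coresets) to the $k$-means setting by checking its hypotheses, in complete parallel to the proof just given for Corollary~\ref{corollary:DPP_kmeans}. Essentially everything needed has already been established in that earlier proof, and the adaptation to the fixed-size $m$-DPP case amounts to substituting the $m$-DPP concentration bound of Theorem~\ref{thm:main_mDPP} in place of the DPP one of Theorem~\ref{thm:main}.

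Concretely, I would proceed in four steps. First, identify the parameter space as $\Theta = \mathcal{B}^k$, where $\mathcal{B}$ is the minimum enclosing ball of $\mathcal{X}$ (of diameter $\rho$), equipped with the Hausdorff metric $d_\Theta$ associated with the Euclidean norm, exactly as in the proof of Corollary~\ref{corollary:DPP_kmeans}. Second, verify that $f(x,\theta) = \min_{c\in\theta}\|x-c\|^2$ is $\gamma$-Lipschitz in $\theta$ with $\gamma = 2\rho$; this is the same triangle-inequality plus sum-of-norms argument used before and requires no change. Third, control the covering number: a standard product construction with an $\epsilon'$-net of $\mathcal{B}$ of size at most $(2\rho/\epsilon'+1)^d$ yields an $\epsilon'$-net of $\Theta$ of cardinality at most $(2\rho/\epsilon'+1)^{kd}$, so for $\epsilon' = \epsilon\langle f\rangle_\text{opt}/(6\gamma) = \epsilon\langle f\rangle_\text{opt}/(12\rho)$ we obtain
\[
\log n \;\leq\; kd\,\log\!\left(\frac{24\rho^2}{\epsilon\langle f\rangle_\text{opt}}+1\right).
\]
Fourth, invoke the second lemma of Appendix~\ref{app:proof_lemmas} to secure $N\sigma_\text{min} \geq 1$, which is the standing assumption of Theorem~\ref{thm:main_mDPP}. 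Plugging these three quantities ($\gamma$, $n$, and the $\sigma_\text{min}$ bound) directly into Theorem~\ref{thm:main_mDPP} yields the first claimed inequality on $m$.

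For the $(\alpha,\beta)$-parametrized part, I would simply rerun the argument of Corollary~\ref{cor:opt_DPP} adapted to the $m$-DPP bound. Using $\alpha\sigma_i \leq \pi_i \leq \alpha\beta\sigma_i$, one has $\max_i \sigma_i/\bar\pi_i \leq m/\alpha$ and $m = \sum_i \pi_i \leq \alpha\beta\mathfrak{S}$, so $(\max_i \sigma_i/\bar\pi_i)^2 \leq \beta\mathfrak{S}/\alpha$; hence the condition $\alpha/\beta \geq (32\mathfrak{S}/\epsilon^2)\log(4n/\delta)$ with the covering bound above implies the hypothesis of the first part. The lower bound on $m$ then follows from $m \geq \alpha\mathfrak{S} \geq \beta\mathfrak{S}\cdot(32\mathfrak{S}/\epsilon^2)\log(4n/\delta)$.

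I do not anticipate any real obstacle: the proof is essentially bookkeeping. The only point that requires a bit of care is that, in contrast to the DPP version, the $m$-DPP bound carries no additive $\epsilon$ term (just a $(\max_i \sigma_i/\bar\pi_i)^2$ factor) and features $\log(4n/\delta)$ rather than $\log(10n/\delta)$; one must make sure these constants are propagated correctly through the $(\alpha,\beta)$ argument. All other ingredients (Lipschitz constant, Hausdorff-product net, $N\sigma_\text{min} \geq 1$) have already been verified in the DPP $k$-means corollary and can be reused verbatim.
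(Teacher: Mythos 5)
Your proposal is correct and follows essentially the same route as the paper: the paper proves Corollary~\ref{corollary:DPP_kmeans} by verifying the Lipschitz constant $\gamma=2\rho$, the product $\epsilon'$-net bound $n\leq(\tfrac{24\rho^2}{\epsilon\langle f\rangle_\text{opt}}+1)^{kd}$, and $N\sigma_\text{min}\geq1$, and then simply reuses these ingredients with Theorem~\ref{thm:main_mDPP} to obtain the $m$-DPP version, exactly as you do. One small slip: since $\bar{\pi}_i=\pi_i/m$, the bounds $\alpha\sigma_i\leq\pi_i\leq\alpha\beta\sigma_i$ give $\left(\max_i \sigma_i/\bar{\pi}_i\right)^2\leq m\,\beta\mathfrak{S}/\alpha$ rather than $\beta\mathfrak{S}/\alpha$ as you wrote (it is $\left(\max_i\sigma_i/\bar{\pi}_i\right)^2/m$ that is bounded by $\beta\mathfrak{S}/\alpha$), but this extra factor of $m$ cancels in the condition $m\geq\tfrac{32}{\epsilon^2}\left(\max_i\sigma_i/\bar{\pi}_i\right)^2\log\tfrac{4n}{\delta}$, so your conclusion stands unchanged.
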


\begin{proof}
	Let us write $\mathcal{B}$ the minimum enclosing ball of $\mathcal{X}$, of diameter $\rho$. The potentially interesting centroids are necessarily included in $\mathcal{B}$ such that the space of parameters $\Theta$ in the $k$-means setting is the set of all possible $k$ centroids in $\mathcal{B}$: $\Theta=\mathcal{B}^k$. The metric $d_\Theta$ we consider is the Hausdorff metric associated with the Euclidean distance: 
	\begin{align}
	\forall \theta, \theta',\qquad	d_\Theta(\theta,\theta') = \max&\left\{\max_{c\in\theta}\min_{c'\in\theta'}\norm{c-c'}_2, ~~
	\max_{c'\in\theta'}\min_{c\in\theta}\norm{c-c'}_2 \right\}.\nonumber
	\end{align}
	
	\noindent\textbf{An $\epsilon'$-net of $\Theta$.} 
	Consider $\Gamma_\mathcal{B}$ an $\epsilon'$-net of $\mathcal{B}$ consisting of $(\frac{2\rho}{\epsilon'}+1)^d$ small balls of radius $\epsilon'$. Such a covering indeed exists: see, \emph{e.g.}, Lemma $2.5$ in \citet{geer_empirical_2000}. Consider $\Gamma=\Gamma_\mathcal{B}^k$ of cardinality $|\Gamma| =  (\frac{2\rho}{\epsilon'}+1)^{kd}$. Let us show that $\Gamma$ is an $\epsilon'$-net of $\Theta$, that is:
	\begin{align}\nonumber
	\forall\theta\in\mathcal{B}^k,\quad \exists\theta^*\in\Gamma \quad\text{ s.t. }\quad d_\Theta(\theta,\theta^*)\leq \epsilon'.
	\end{align}
	In fact, consider $\theta = (c_1, \ldots, c_k) \in\mathcal{B}^k$. By construction, as $\Gamma_\mathcal{B}$ is an $\epsilon'$-net of $\mathcal{B}$, we have:
	\begin{align}\nonumber
	\forall i=1,\ldots,k\qquad \exists c_i^*\in\Gamma_\mathcal{B} \quad\text{ s.t. }\quad \norm{c_i-c_i^*}\leq\epsilon'.
	\end{align}
	Writing $\theta^*=(c_1^*,\ldots, c_k^*)\in \Gamma$, one has:
	\begin{align}\nonumber
	d_\Theta(\theta,\theta^*) \leq \epsilon',
	\end{align}
	which proves that $\Gamma$ is an $\epsilon'$-net of $\Theta$. The number of balls of radius $\epsilon'=\epsilon \langle f\rangle_\text{opt} / 6 \gamma$ sufficient to cover $\Theta$ is thus $\eta=(\frac{12\rho\gamma}{\epsilon\langle f\rangle_\text{opt}}+1)^{kd}$.\\
	
	\noindent\textbf{$f(x,\theta)$ is $\gamma$-Lipschitz with $\gamma=2\rho$.} 
	Consider any $\theta$, $\theta'$ and $x\in\mathcal{X}$. We want to show that:
	\begin{align}
	-\gamma~ d_\Theta(\theta,\theta') \leq f(x,&\theta)-f(x,\theta')\leq \gamma~ d_\Theta(\theta,\theta').\nonumber
	\end{align}
	Let us write $c=\argmin_{t\in\theta}\norm{x-t}^2$ the centroid in $\theta$ closest to $x$ and  $c'=\argmin_{t'\in\theta'}\norm{x-t'}^2$ the centroid in $\theta'$ closest to $x$. Moreover, let us write  $\tilde{c}'=\argmin_{t'\in\theta'}\norm{c-t'}^2$ the centroid in $\theta'$ closest to $c$. Note that $c'$ and $\tilde{c}'$ are not necessarily equal.  
	By definition of $c'$, one has:
	\begin{align}\nonumber
	\norm{x-c'}\leq \norm{x-\tilde{c}'},
	\end{align}
	such that:
	\begin{align}
	\norm{x-c'}_2 - \norm{x-c}_2 \leq \norm{x-\tilde{c}'}_2 - \norm{x-c}_2 
	\leq \norm{\tilde{c}'-c}_2 \leq d_\Theta(\theta,\theta').\nonumber
	\end{align}
	Thus:
	\begin{align}\nonumber
	f(x, \theta')-f(x,\theta)=\norm{x-c'(x)}^2 - \norm{x-c}^2 
	&= (\norm{x-c'} - \norm{x-c}) (\norm{x-c'} + \norm{x-c})\\
	&\leq (\norm{x-c'} + \norm{x-c}) ~d_\Theta(\theta,\theta') \leq 2\rho~d_\Theta(\theta,\theta') .\nonumber
	\end{align}
	
	\noindent\textbf{Finally, $n\sigma_\text{min}\geq 1$}, as shown by the second lemma of Appendix~\ref{app:proof_lemmas}.
	
	Given all these elements, Theorem~\ref{thm:main} and its subsequent corollaries are thus applicable to the $k$-means setting and one obtains the desired result.
\end{proof}	

Note that, in the case of DPPs, one could apply Theorem~\ref{thm:main_DPP} to the $k$-means problem, and obtain similar results.

\subsection{Application to linear regression}
\label{sec:DPP_for_LR}
We now consider the linear regression problem: find $\theta\in\mathbb{R}^d$ such that a measured vector $y\in\mathbb{R}^n$ is closest to $\ma{X}\theta$ where $\ma{X}^\top=(x_1|\ldots|x_n)\in\mathbb{R}^{d\times n}$ are $n$ data points in $\mathbb{R}^d$. Let us write $X_i=(y_i,x_i)$ and $\mathcal{X}=\{X_1, \ldots, X_n\}$. 
The least squares estimator minimizes:
$$L(\mathcal{X},\theta)=\norm{y-\ma{X}\theta}^2_2.$$
By denoting 
\begin{align*}
f(X_i,\theta)=(y_i-x_i^\top\theta)^2,
\end{align*}
one can thus write the least squares solution to the linear regression problem in the form of the problems investigated in this paper: the objective is to minimize the cost $L$ with $f$ a positive function:
\begin{align*}
L(\mathcal{X},\theta)=\norm{y-\ma{X}\theta}^2_2=\sum_{i=1}^n f(X_i, \theta).
\end{align*} 

We suppose that all $\vec{x}_i$ are enclosed in the unit ball in dimension $d$ and that $y_i\in[0,1]$. Moreover, we suppose that the space $\Theta$ is bounded and enclosed in a $d$-dimensional ball $\mathcal{B}$ centered in $0$ of diameter $\rho$.

Even though we derived the analytical formulation of the sensitivity for linear regression (Lemma~\ref{lemma:sensi_lr}), we were not able to show that $n\sigma_{\text{min}}\geq 1$ in general.  We thus have the following slightly more complicated result:

\begin{corollary}[$m$-DPP for linear regression]
	\label{corollary:mDPP_LR}
	Let $\mathcal{S}$ be a sample from an $m$-DPP with $L$-ensemble $\ma{L}$. Let $\epsilon, \delta\in(0,1)^2$. With probability at least $1-\delta$, $\mathcal{S}$ is a $\epsilon$-coreset provided that:
	\begin{align*}
	m\geq \max(m_1^*, m_2^*)
	\end{align*}
	with
	\begin{align*}
	m_1^*&=\frac{32}{\epsilon^2} \left(\max_{i} \frac{\sigma_i}{\bar{\pi}_i}\right)^2 \left(d \log\left(\frac{12\rho(4\rho +2)}{\epsilon\langle f\rangle_\text{opt}}+1\right) + \log{\frac{4}{\delta}}\right),\\
	m_2^*&=\frac{32}{\epsilon^2} \left(\frac{1}{n\bar{\pi}_{\text{min}}}\right)^2 \left(d \log\left(\frac{12\rho(4\rho +2)}{\epsilon\langle f\rangle_\text{opt}}+1\right) + \log{\frac{4}{\delta}}\right)
	\end{align*}
	with $\forall i, \bar{\pi}_i=\pi_i/m$. 
	
	Setting the marginal probabilities to their optimal values $\pi_i=m\sigma_i/\mathfrak{S}$, $\mathcal{S}$ is a $\epsilon$-coreset with probability larger than $1-\delta$ provided that:
	\begin{align*}
	m\geq\frac{32}{\epsilon^2} \max\left(\mathfrak{S}^2, \frac{\mathfrak{S^2}}{n^2\sigma_{\text{min}}^2}\right) \left(d \log\left(\frac{12\rho(4\rho +2)}{\epsilon\langle f\rangle_\text{opt}}+1\right) + \log{\frac{4}{\delta}}\right).
	\end{align*}
\end{corollary}

\begin{proof}
	The metric $d_\Theta$ we consider is the Euclidean distance in dimension $d$.
	
	\noindent\textbf{- An $\epsilon'$-net of $\Theta$.} 
	Consider $\Gamma_\mathcal{B}$ an $\epsilon'$-net of $\mathcal{B}$ consisting of $(\frac{2\rho}{\epsilon'}+1)^d$ small balls of radius $\epsilon'$. Such a covering indeed exists: see, \emph{e.g.}, Lemma $2.5$ in~\citet{geer_empirical_2000}. The number of balls of radius $\epsilon'=\epsilon \langle f\rangle_\text{opt} / 6 \gamma$ sufficient to cover $\Theta$ is thus $\eta=(\frac{12\rho\gamma}{\epsilon\langle f\rangle_\text{opt}}+1)^{d}$. \\
	
	\noindent\textbf{- $f(X,\theta)$ is $\gamma$-Lipschitz with $\gamma=4\rho +2$.} 
	Consider any $\theta$, $\theta'$, $\vec{x}_i$ and $y_i$. We want to show that:
	\begin{align*}
	\left((y_i-\vec{x}_i^\top\theta)^2-(y_i-\vec{x}_i^\top\theta')^2\right)^2\leq \gamma^2~ \norm{\theta-\theta'}^2.
	\end{align*}
	In fact:
	\begin{align*}
	\left((y_i-\vec{x}_i^\top\theta)^2-(y_i-\vec{x}_i^\top\theta')^2\right)^2&=\left(\theta^\top \vec{x}_i\vec{x}_i^\top\theta -\theta'^\top\vec{x}_i\vec{x}_i^\top\theta' -2y_i\vec{x}_i^\top(\theta-\theta')\right)^2\\
	&=\left[\left(2\theta'^\top\vec{x}_i\vec{x}_i^\top+(\theta-\theta')^\top\vec{x}_i\vec{x}_i^\top-2y_i\vec{x}_i^\top\right)\left(\theta-\theta'\right)\right]^2\\
	&\leq\left[\norm{2\theta'^\top\vec{x}_i\vec{x}_i^\top+(\theta-\theta')^\top\vec{x}_i\vec{x}_i^\top-2y_i\vec{x}_i^\top}\norm{\theta-\theta'}\right]^2\\
	&\leq\left[2\norm{\vec{x}_i\vec{x}_i^\top}\norm{\theta'}+\norm{\vec{x}_i\vec{x}_i^\top}\norm{\theta-\theta'}+2y_i\norm{\vec{x}_i}\right]^2\norm{\theta-\theta'}^2
	\end{align*}
	by triangular inequality and writing $\norm{\vec{x}_i\vec{x}_i^\top}$ the $2$-norm of the matrix $\vec{x}_i\vec{x}_i^\top$, which is equal to $\norm{\vec{x}_i}^2$ and bounded by one by hypothesis. As $\Theta$ is supposed to be enclosed in a ball of radius $\rho$, we further have:
	\begin{align*}
	\left((y_i-\vec{x}_i^\top\theta)^2-(y_i-\vec{x}_i^\top\theta')^2\right)^2&\leq (4\rho+2)^2\norm{\theta-\theta'}^2
	\end{align*}
	
	Given these elements, Theorem~\ref{thm:main} is applicable to the linear regression setting and one obtains the desired result.
\end{proof}	

\section{Implementation}
\label{sec:implementation}
\subsection{The DPP's ideal marginal kernel}

Following the theoretical results, the ideal strategy (although unrealistic) to build the marginal kernel $\ma{K}$ of the ideal DPP sampling scheme would be as follows. 1/~Deal with outliers as explained in Appendix~\ref{remark:outliers} until $\sigma_\text{max}$ is not too large. 2/~Compute all $\sigma_i$. 3/~Set all $\pi_i$ to $m\sigma_i/\mathfrak{S}$ with $m$ sufficiently large as detailed in the theorems. 4/~Find all non-diagonal elements of $\ma{K}$ in order to minimize for all $\theta$ the estimator's variance, as derived in Eq.~\eqref{eq:var}:
\begin{align*}
\text{Var}(\hat{L})&=\sum_i\left(\frac{1}{\pi_i}-1\right)f(x_i,\theta)^2-\sum_{i\neq j}\frac{\ma{K}_{ij}^2}{\pi_i\pi_j}f(\vec{x}_i,\theta)f(\vec{x}_j,\theta).
\end{align*}
while constraining $\ma{K}$ to be a valid marginal kernel, \ie: SDP with $0\preceq\ma{K}\preceq 1$, 5/~sample a DPP with kernel $\ma{K}$. On our way to derive a practical algorithm with a linear complexity in $n$, many obstacles stand before us: there is no known polynomial algorithm to compute all $\sigma_i$ in the general setting, solving exactly the minimization problem of step 4 under eigenvalue constraint remains open, and sampling from this engineered ideal $\ma{K}$ costs  $\mathcal{O}(n^3)$ number of operations (see Algorithm~1 of~\citet{kulesza_determinantal_2012}: it necessitates a full diagonalization of $\ma{K}$). Designing a linear-time algorithm that provably verifies under a controlled error the conditions of our previous theorems is out-of-scope of this paper. In the following, we prefer to first recall the intuitions behind the construction of a good kernel, and then discuss two choices of kernel we advocate: a Gaussian kernel and a Vandermonde-based kernel.

\subsection{A first choice: the Gaussian kernel}
In order for $\ma{K}$ to be a good candidate for coresets, it needs to verify the following two properties:
\begin{itemize}
	\item As indicated by the theorems, the diagonal entries $\ma{K}_{ii}$ should increase as the associated $\sigma_i$ increases.
	\item As indicated by the variance equation of Eq.~\eqref{eq:var}, off-diagonal elements should be as large as possible (in absolute value) given the eigenvalue constraints. In fact, we cannot set all non-diagonal entries of $\ma{K}$ to large values as the matrix's 2-norm would rapidly be larger than 1. We thus need to choose the best pairs $(i,j)$ for which it is worth setting a large value of $\ma{K}_{ij}$.  A first glance at the variance equation indicates that the larger $f(x_i,\theta)f(x_j,\theta)$ is, the larger $\ma{K}_{ij}$ should be, in order to decrease the variance as much as possible. Recall  nevertheless that in the coreset setting, all sampling parameters should be independent of $\theta$. The off-diagonal elements should thus verify the following property: the larger is the correlation between $x_i$ and $x_j$ (the more similar are $f(x_i,\theta)$ and $f(x_j,\theta)$ for all $\theta$), the larger $\ma{K}_{ij}$ should be. 
\end{itemize}

We show in the following in what ways the choice of marginal kernel
$$\ma{K} = \ma{L}(\ma{I}+\ma{L})^{-1}$$
with $\ma{L}$ the Gaussian kernel matrix with parameter $\tau$:
\begin{align*}
\forall(i,j)\quad\ma{L}_{ij} = \exp^{-\frac{\norm{\vec{x}_i-\vec{x}_j}^2}{2\tau^2}},
\end{align*}
is a good candidate to build coresets for $k$-means (the linear regression case is discussed later). 
Let us write $\ma{U} = (\fou_1|\ldots|\fou_n)$ the orthonormal eigenvector basis of $\ma{L}$ and $\ma{\Lambda}=\text{diag}(\lambda_1|\ldots|\lambda_n)$ its diagonal matrix of sorted  eigenvalues, $0\leq\lambda_1\leq\ldots\leq\lambda_n$. 
$\ma{U}$ and $\Lambda$ naturally depend on $\tau$. One shows for instance that, with respect to $\tau$, $\lambda_n$ is a monotonically increasing function between $1$ and $n$.  

Concerning the off-diagonal elements of $\ma{K}$, let us first note that if $x_i$ and $x_j$ are correlated (that is, in the $k$-means setting, if they are close to each other), then
\begin{align*}
\ma{K}_{ij} = \sum_{k} \frac{\lambda_k}{1+\lambda_k} u_k(i)u_k(j)
\end{align*}
should be large in absolute value. In fact, in the limit where $x_i = x_j$, then $\forall k, u_k(i) = u_k(j)$ and $\ma{K}_{ij} = \ma{K}_{ii} = \ma{K}_{jj}$. The determinant of the $2\times 2$ submatrix of $\ma{K}$ indexed by $i$ and $j$ is therefore null: sampling both will never occur. Thus, the closer are $x_i$ and $x_j$, the lower is the chance of sampling both jointly. 
Moreover, if $x_i$ and $x_j$ are far from each other (for instance, in different clusters), then the entries $i$ and $j$ of $\ma{L}'s$ eigenvectors will be very different. For instance, say the data set contains two well separated clusters of similar size. If the Gaussian parameter $\tau$ is set to the size of these clusters, then the kernel matrix $\ma{L}$ will be quasi-block diagonal, with each block corresponding to the entries of each cluster. Also, each eigenvector $\fou_k$ will have energy either in one cluster or the other such that $\ma{K}_{ij}$ is necessarily small if $i$ and $j$ belong to different clusters, and the event of sampling both jointly is probable.

Concerning the probability of inclusion of $i$, we have:
\begin{align*}
\ma{K}_{ii} = \sum_{k} \frac{\lambda_k}{1+\lambda_k} v_i(k)^2,
\end{align*}
where $\vec{v}_i$ is the vector of size $n$ verifying $\forall k, v_i(k) = u_k(i)$. For all $i$, $\norm{\vec{v}_i}^2=1$. The probability of inclusion is thus directly linked to the values of $k$ that contain the energy of $\vec{v}_i$: the more the energy of $\vec{v}_i$ is contained on high values of $k$, the larger is the probability of inclusion. 
Say we are again in a situation where the clusters and the choice of Gaussian parameter $\tau$ are such that $\ma{L}$ is quasi block diagonal. Within each block, the eigenvector associated with the highest eigenvalue corresponds approximately to the constant vector. These eigenvectors being normalized, the associated entry of $v_i(k)$ is thus approximately equal to $1/\sqrt{\#C_i}$ where $\#C_i$ is the size of the cluster containing data $x_i$. Typically, if the cluster is small, that is, if $\#C_i$ tends to 1, the associated entry $v_i(k)$ tends to 1 as well, such that all the energy of $v_i$ is drawn towards high values of $k$, thus increasing the probability of inclusion of $i$. In other words, the more isolated, the higher the chance of being sampled. This corresponds to the intuition one may obtain for the sensitivity $\sigma_i$. It has indeed been shown that the sensitivity may be interpreted as a measure of outlierness~\citep{lucic_linear-time_2016}. 

In the linear regression case, a similar argumentation is possible, up to the fact that point $i$ can be an outlier from the point of view of $\vec{x}_i$ and/or $y_i$, such that the kernel should take both into account: we suggest the Gaussian kernel in dimension $d+1$:
\begin{align*}
\forall(i,j)\quad\ma{L}_{ij} = \exp^{-\frac{\norm{\vec{z}_i-\vec{z}_j}^2}{2\tau^2}},
\end{align*}
with $\vec{z}_i=[\vec{x}_i^\top, y_i]^\top\in\mathbb{R}^{d+1}$.

In both contexts, we thus advocate to sample DPPs via a Gaussian kernel $L$-ensemble. We now move on to detailing an efficient sampling implementation.

\subsubsection{Efficient implementation}

Sampling exactly a DPP from the Gaussian $L$-ensemble verifying
\begin{align*}
\forall(i,j)\qquad \ma{L}_{ij} = \exp^{-\frac{\norm{\vec{x}_i-\vec{x}_j}^2}{2\tau^2}}
\end{align*}
consists in the following steps:
\begin{enumerate}
	\item Compute $\ma{L}$.
	\item Diagonalize $\ma{L}$ in its set of eigenvectors $\{\fou_k\}$ and eigenvalues $\{\lambda_k\}$.
	\item Sample a DPP given $\{\fou_k\}$ and $\{\lambda_k\}$ via Algorithm~1  of~\citet{kulesza_determinantal_2012}.
\end{enumerate}
Step 1 costs $\mathcal{O}(n^2d)$, step 2 costs $\mathcal{O}(n^3)$, step 3 costs  $\mathcal{O}(n\mu^3)$, where we recall that $\mu$ is the expected number of samples of the DPP. This naive approach is thus not practical. We  detail in Appendix~\ref{app:implementation}  how to reduce the overall complexity to $\mathcal{O}(n\mu^2)$, by 1/~taking advantage of Random Fourier Features (RFF)~\citep{rahimi_random_2008} to estimate a low dimensional representation $\Psi\in\mathbb{R}^{2r\times n}$ of the $L$-ensemble $\ma{L}\simeq\ma{\Psi}^\top\ma{\Psi}$, where $r$ is the chosen number of features; and 2/~running a DPP sampling algorithm adapted to such a low rank representation. 

In the experimental section, we will concentrate on $m$-DPPs as they are simpler to compare with state of the art methods that all have a fixed known-in-advance number of samples. The overall $m$-DPP sampling algorithm adapted to the $k$-means problem that we will consider is summarized in Algorithm~\ref{alg:summary}: given the data $\mathcal{X}$, the number of desired samples $m$, and the Gaussian parameter $\tau$, it outputs a weighted set of $m$ samples $\mathcal{S}$ that is a good candidate to be a coreset if $m$ is large enough.  The runtime to build $\ma{\Psi}$ is $\mathcal{O}(ndr)$; to compute $\ma{C}$ and diagonalize it is $\mathcal{O}(nr^2)$; to sample a $m$-DPP given this dual eigendecomposition is $\mathcal{O}(nm^2)$. Given that $r$ is set to a few times $m$, the overall runtime of Algorithm~\ref{alg:summary} is $\mathcal{O}(ndm + nm^2)$. 

Given a number of samples $m$ to draw, how should one set the Gaussian parameter $\tau$? The larger is $\tau$, the more repulsive is the $m$-DPP, and the smaller is the numerical rank of $\ma{\Psi}$ (the number of eigenvalues $\nu$ such that $n\nu$ is larger than the machine's precision). Now, numerical instabilities arise while sampling an $m$-DPP if the numerical rank of $\ma{\Psi}$ decreases below $m$: $\tau$ should not be set too large. Also, the smaller is $\tau$, the closer is $\ma{L}$ to the identity matrix, such that the closer is the $m$-DPP to uniform sampling without replacement: $\tau$ should not be set too small. We will see in the following experimental section how the choice of  $\tau$ affects results.

\begin{algorithm}[tb]
	\caption{The Gaussian kernel coreset sampling heuristics}
	\label{alg:summary}
	\begin{algorithmic}
		\Input $\mathcal{X}=\{x_i\}$ a set of $n$ points in $\mathbb{R}^d$, a Gaussian kernel parameter $\tau$, a number of samples $m$\\
		$\bm{\cdot}$ Draw $r\geq\mathcal{O}(m)$ random Fourier vectors associated to the Gaussian kernel with parameter $\tau$\\
		$\bm{\cdot}$ Compute the associated RFF matrix $\ma{\Psi}\in\mathbb{R}^{2r\times n}$ as explained in Appendix~\ref{subsec:RFF}\\
		$\bm{\cdot}$ Compute $\ma{C}=\ma{\Psi}\ma{\Psi}^\top\in\mathbb{R}^{2r\times 2r}$ the dual representation\\
		$\bm{\cdot}$ Compute the eigendecomposition of $\ma{C}$: obtain eigenvectors $\{\vec{v}_k\}$ and eigenvalues $\{\nu_k\}$\\
		$\bm{\cdot}$ Draw a sample $\mathcal{S}$ from a $m$-DPP with $L$-ensemble $\ma{L}=\ma{\Psi}^\top\ma{\Psi}$ as explained in Appendix~\ref{subsec:fast_mDPP}.\\
		$\bm{\cdot}$ Compute the marginal probabilities $\pi_s$ for all $\vec{x}_s\in\mathcal{S}$ as explained in Appendix~\ref{subsec:fast_mDPP}, and set weights $\omega(\vec{x}_s)=1/\pi_s$.
		\Output $\{\mathcal{S},\omega\}$ a weighted sample of size $m$.
	\end{algorithmic}
\end{algorithm}

\subsection{A second choice: a projective DPP based on the Vandermonde matrix}
A second choice of DPP sampling, that derives from our analysis, is the projective DPP with $m$ samples from the $L$-ensemble $\ma{L}=\ma{VV}^\top$ where $\ma{V}$ is the Vandermonde matrix (discussed in Section~\ref{sec:polynomial-dpps}). This choice has several advantages over the Gaussian kernel:
\begin{itemize}
	\item $\ma{V}$ takes $\mathcal{O}(nm)$ operations to compute: the overall $m$-DPP sampling cost is thus naturally $\mathcal{O}(nm^2)$, with no need for any approximation technique.
	\item no particular scale $\tau$ is introduced.
\end{itemize}
This choice however has the drawback that in dimension higher than $1$, not all values of $m$ are allowed (only values of $m$ for which there exists $\phi\in\mathbb{N}$ s.t. $m = {{\phi+d}\choose {\phi}}$), as explained at the end of Section~\ref{sec:polynomial-dpps}. 

\begin{algorithm}[tb]
	\caption{The Vandermonde-based coreset sampling heuristics}
	\label{alg:summary_vdm}
	\begin{algorithmic}
		\Input $\mathcal{X}=\{x_i\}$ a set of $n$ points in $\mathbb{R}^d$, a number of samples $m$\\
		$\bm{\cdot}$ $m$ should verify: $\exists \phi\in\mathbb{N}$ such that $m = {{\phi+d}\choose \phi}$.\\
		$\bm{\cdot}$ Compute the Vandermonde matrix $\ma{V}\in\mathbb{R}^{n\times m}$.\\
		$\bm{\cdot}$ Compute the $(\ma{Q}\in\mathbb{R}^{n\times m},\ma{R}\in\mathbb{R}^{m\times m})$ decomposition of $\ma{V}$: $\ma{V}=\ma{QR}$ with $\ma{Q}^\top\ma{Q}=\ma{I}_m$ and $\ma{R}$ an upper triangular matrix.\\
		$\bm{\cdot}$ Draw a sample $\mathcal{S}$ from a projective DPP with $L$-ensemble $\ma{L}=\ma{QQ}^\top$ as explained in Algorithm~\ref{alg:sampling_DPP_efficient}.\\
		$\bm{\cdot}$ Compute the marginal probabilities $\pi_s$ for all  $\vec{x}_s\in\mathcal{S}$ with $\pi_s = \norm{\ma{Q}(s,:)}^2$ the energy of the $s$-th line of $\ma{Q}$; and set weights $\omega(\vec{x}_s)=1/\pi_s$.
		\Output $\{\mathcal{S},\omega\}$ a weighted sample of size $m$.
	\end{algorithmic}
\end{algorithm}

\subsection{Alternative algorithms for sampling DPPs, and potential improvements}
\label{sec:potential-improvements}

	The algorithm we suggest scales in our experience rather well with $n$, and
	makes it practical to find coresets with $n$ in the millions or more.  Our method
	scales more poorly in $m$, the number of points retained, which in practice
	should be in the hundreds at most. Recall that $m$ should scale roughly as the
	intrisic dimension of the parameter space: it is therefore possible that in certain
	difficult problems no reliable coreset can be found if\,\footnote{One
		might argue that in such cases the coreset methodology is of dubious value
		anyway.} $m<1,000$. With that in mind, we now review other methods for sampling DPPs.
	
	As an alternative to direct sampling of the kind used here, MCMC methods have
	been suggested several times~\citep[\textit{e.g.,}][]{anari_monte_2016}, and the earliest reference we could find is \citet{belabbas_spectral_2009}. The most basic kind starts with a set of points sampled
	uniformly, and uses random swapping moves: at each iteration, a point from the
	current set may be replaced by one not in the set.\footnote{A more advanced algorithm by \citet{gautier_zonotope_2017} mixes faster than the basic algorithm
		outlined here, but the iterations are more involved.} Acceptance probabilities are
	set so that the limiting distribution of the chain is the correct DPP. Each iteration
	has cost $\mathcal{O}(m^2)$, and approximately $\mathcal{O}(n)$ such iterations are required for
	mixing \citep{hermon_modified_2019}.
	The total cost is therefore the same as in our method ($\mathcal{O}(nm^2)$), so not much
	gain is to be expected here. However, there is no need for a low-rank
	approximation of the kernel such as the RFF approximation used here. In a
	nutshell, MCMC techniques sample approximately from the correct kernel instead
	of sampling exactly from an approximate kernel: which is better is as yet unknown but an
	interesting problem in itself. 
	
	There are two immediate strategies for increasing $m$. One is to use a crude
	heuristic for dividing the original data set into $p$  different subsets, and sampling a DPP
	independently from each subset. This is equivalent to using a block-diagonal
	kernel, and along these lines there is a less radical approach, which is to
	force the kernel to be sparse and exploit sparsity in the sampling.
	\citet{poulson_high-performance_2019} shows how to exploit sparsity for sampling DPPs when the
	\emph{marginal} kernel is sparse. Unfortunately, we use L-ensembles here, and one
	would have to adapt the tools given by Poulson to L-ensembles.
	A different strategy to increase $m$ is to sample the DPP several times rather
	than just once. The resulting sample has less diversity but is much cheaper to
	generate. One can take advantage of recent methods that use pre-processing for
	speeding up repeated sampling of the same DPP
	\citep{gillenwater_tree-based_2019, derezinski_exact_2019}. Here the challenge is to find
	the right trade-off between computational cost and repulsion, which is again an
	interesting question for future research.

\section{Experiments}
\label{sec:experiments}
\subsection{Different strategies to compare...}
We will empirically compare results obtained with the five following approaches:
\begin{enumerate}
	\item \texttt{m-DPP} : The strategy summarized in Algorithm~\ref{alg:summary}.
	\item \texttt{PolyProj-DPP} : The strategy summarized in Algorithm~\ref{alg:summary_vdm}.
	\item \texttt{matched iid} : An iid sampling strategy with replacement, matched to either \texttt{m-DPP} or \texttt{PolyProj-DPP} (depending on the context). More precisely, $m$ samples are drawn iid with replacement, the probability of selecting $x_i$ at each draw being set to $p_i=\pi_i/m$, where $\pi_i$ is the marginal probability of drawing $x_i$ in \texttt{m-DPP} (or \texttt{PolyProj-DPP}). 
	
	\item \texttt{uniform iid} : Uniform iid sampling with replacement.
	
	
	\item \texttt{sensitivity iid} : The current state of the art iid sampling based on a bi-criteria approximation to upper bound the sensitivity (Algorithm~2 of~\citealt{bachem_practical_2017}), or, if available (for instance in the case of $1$-means and linear regression), an analytical formula of the sensitivity. 
\end{enumerate}
For the three iid methods (methods 3, 4 and 5), we will use the importance sampling estimator adapted to iid sampling of Eq.~\eqref{eq:imp_sampling_est_iid}. For methods 1 and 2, we will use the importance sampling estimator adapted to correlated  sampling of Eq.~\eqref{eq:imp_sampling_est_corr}.

Empirically, when the ambient dimension $d$ is small, performance of all methods is enhanced if the weights in $\hat{\ma{L}}$ are set via Voronoi cells rather than set to inverse probabilities: given the sample $\mathcal{S}$ of size $m$, compute its Voronoi tessellation in $m$ cells, and associate to each sample $\vec{x}_s$ a weight $\omega(\vec{x}_s)$ equal to the number of datapoints in its associated Voronoi cell. We will call the associated cost estimators $\hat{L}$ the Voronoi estimators. 

For completeness, we compare all these methods with another negatively correlated sampling method called $D^2$-sampling (commonly used for $k$-means++ seeding, see~\citealt{arthur_k-means++:_2007}):
\begin{enumerate}
	\item[6)] \texttt{D$^2$} : sample the first element of $\mathcal{S}$ uniformly at random. Each subsequent element of $\mathcal{S}$ is drawn according to a probability proportional to the squared distance to the closest of the already sampled elements. The marginal probabilities are not known in this algorithm, so we will only be able to build the associated Voronoi cost estimator. 
\end{enumerate}

To measure the performance of each method, we will empirically estimate the probability that, given the method's sampled weighted subset, it verifies the coreset property of Eq.~\eqref{eq:coresets} for a given randomly chosen $\theta$ (setting $\epsilon$ to $0.1$). On the artificial data models we investigate, we estimate this probability via $50$ randomly chosen $\theta$ on $1000$ realizations of the data. On the real-world data sets, we estimate this probability via $5000$ randomly chosen $\theta$. We will in general plot this probability versus the number of samples: the closer it is to $1$, the better the sampling method for coresets. 

In Sections~\ref{subsec:expes_sp_feat_SBM} and~\ref{subsec:expes_MNIST}, we will not only compare the coreset property of the samples obtained by each method, we will also compare the result of Lloyd's classical $k$-means heuristics~\citep{lloyd_least_1982} performed on the entire data versus the result obtained on the weighted samples of each method.  To be precise, once the $k$-means heuristics on the weighted subset outputs $k$ centroids, we classify all nodes (sampled or not) according to their closest distance to the centroids: this gives us a partition that we then compare using the Adjusted Rand (AR) similarity index~\citep{hubert_comparing_1985} to the ground truth associated to the data set. The AR index is a number between $-1$ and $1$: the closer it is to $1$, the closer are the partitions, the better the sampling method.

\subsection{...on different data sets}
\subsubsection{To start with: two well controlled cases}
\label{sec:well_controlled_case}
We start with two perfectly controlled cases (for which we derived the sensitivities analytically -- see the first and third lemmas of Appendix~\ref{app:proof_lemmas})): 
\begin{itemize}
	\item the $1$-means case, for which we show that, supposing without loss of generality that the data is centered ($\sum_j x_j =0$), the sensitivity verifies the following analytic form:
	\begin{align*}
	\sigma_i = \frac{1}{n}\left(1+\frac{\norm{x_i}^2}{v}\right),
	\end{align*}
	where $v=\frac{1}{n}\sum_{x\in\mathcal{X}} \norm{x}^2$.
	\item the linear regression case, for which we show that:
	\begin{align*}
	\forall i\qquad	
	\sigma_i =x_i^\top \ma{H}^{-1} x_i + \frac{\left(y_i - y^*_i \right)^2}{\norm{y-y^*}^2}
	\end{align*}
	where $\ma{H}=\ma{X}^\top \ma{X}$ and $y^*$ reads $y^*=\ma{X}\theta^* =\ma{XH}^{-1}\ma{X}^\top y$. 
\end{itemize} 
We are thus able to compare our method versus the ideal iid sampling scheme for which we set $p_i$, the probability of drawing $x_i$, exactly to its ideal value given in Theorem~\ref{thm:iid}: $p_i=\sigma_i/\mathfrak{S}$ ($=\sigma_i/2$ for $1$-means, $=\sigma_i/(d+1)$ for linear regression).\\

\begin{figure}
	\centering
	a)\includegraphics[width=0.3\columnwidth]{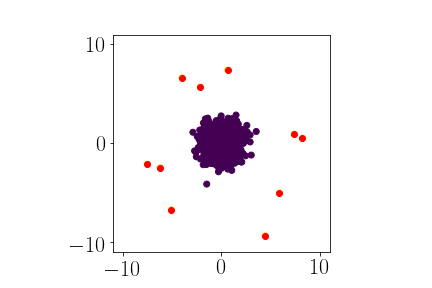}\hfill
	b)\includegraphics[width=0.3\columnwidth]{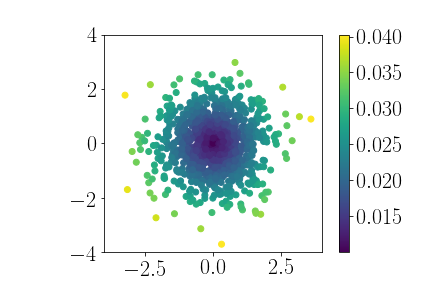}\hfill
	c)\includegraphics[width=0.3\columnwidth]{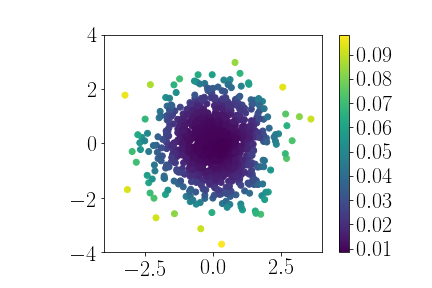}
	
	\caption{a) A realization of an artificial data set of $n=1000$ data points; blue points are drawn from an isotropic Gaussian, a proportion $q=0.01$ of the points are drawn as outliers (displayed in red). b)~In a case without outliers, and for $m=20$, we represent the  inverse importance sampling weights of \texttt{sensitivity iid}, \ie, $m\sigma_i/\mathfrak{S}$, using the exact analytical formulation of the sensitivity in the $1$-means case (see Lemma~\ref{lemma:sensi_1means}). c)~On the same data  realization, and also setting $m=20$, we represent the inverse importance sampling weights of \texttt{m-DPP}: the inclusion probability $\pi_i$. }
	\label{fig:2d_Gaussian}
\end{figure}

\begin{figure}
	\centering
	\begin{tabular}{c|ccc|cc}
		& \multicolumn{3}{c|}{Voronoi weights} & \multicolumn{2}{c}{Importance sampling weights}\\\hline &&&&&\\
		\multirow{1}{*}[6em]{$d=2$}	&&\includegraphics[width=0.4\columnwidth]{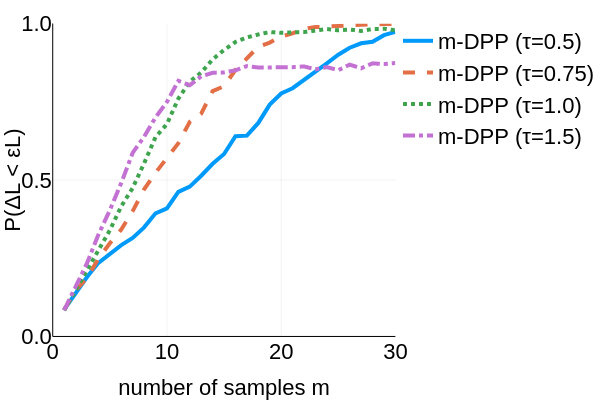} &&&
		\includegraphics[width=0.4\columnwidth]{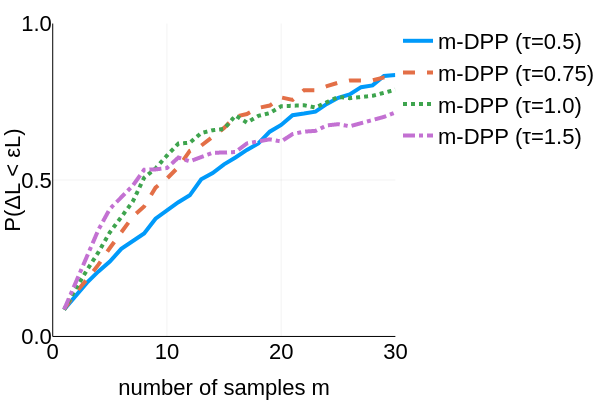}\\\hline &&&&&\\
		\multirow{1}{*}[6em]{$d=20$}&& \includegraphics[width=0.4\columnwidth]{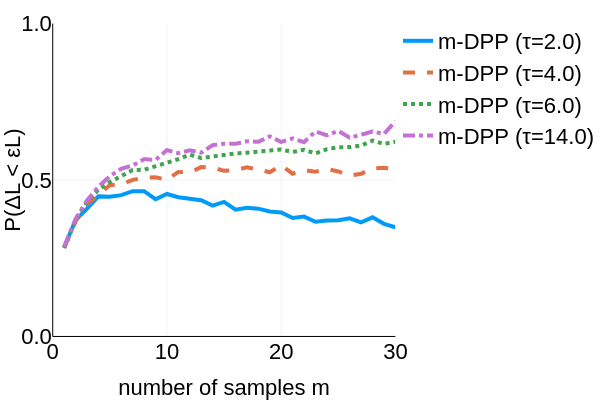}&&&
		\includegraphics[width=0.4\columnwidth]{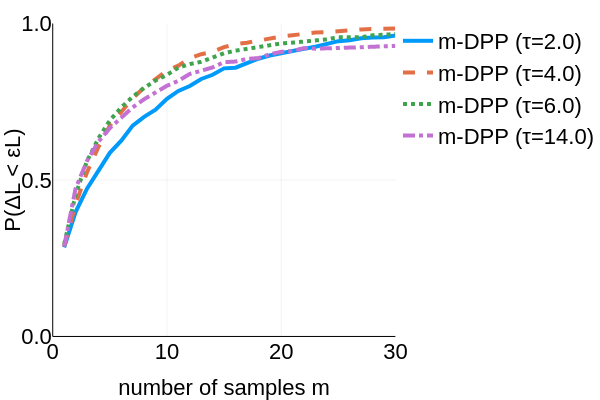}\\\hline &&&&&\\
		\multirow{1}{*}[6em]{$d=100$}&& \includegraphics[width=0.4\columnwidth]{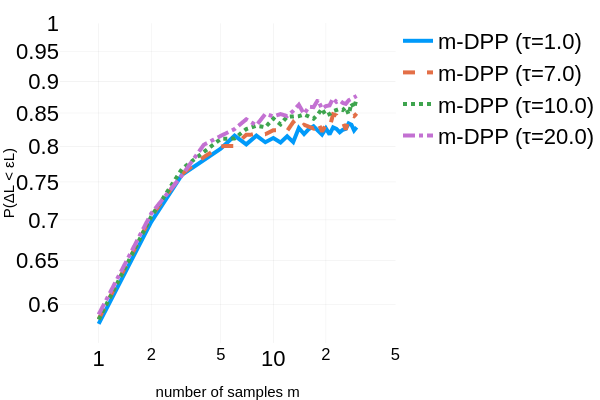}&&&
		\includegraphics[width=0.4\columnwidth]{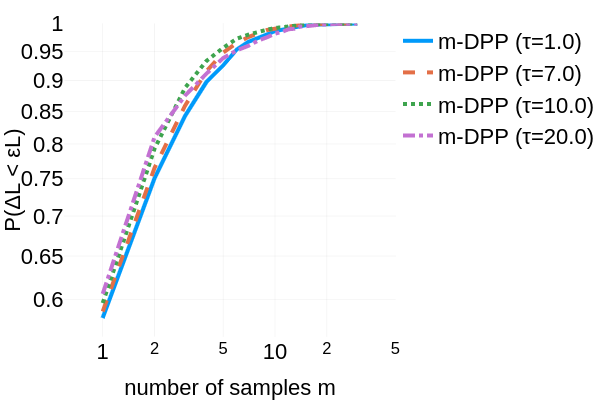}
	\end{tabular}
	\caption{Performance of \texttt{m-DPP} on the $1$-means problem, versus the dimension $d$, the parameter $\tau$ of the Gaussian kernel and the choice of weights (Voronoi or importance sampling weights) in the cost estimator. The two bottom figures are plotted in log-log scale, for readability.}
	\label{fig:perf_1_means}
\end{figure}

\noindent\textbf{Experiments with $1$-means.} 
We will work on a simple isotropic Gaussian data set of $n=1000$ points in dimension $d=2, 20$ or $100$. A percentage $q$ of the $n$ points are drawn as outliers (uniformly in the ambient space and far from the Gaussian mean). An instance of such a data set in $d=2$ dimensions, and with $q=0.01$ is shown in Figure~\ref{fig:2d_Gaussian}a.

We start by showing in Figure~\ref{fig:perf_1_means} the results of \texttt{m-DPP} versus the number of dimensions and the  choice of parameter $\tau$ for the Gaussian kernel. All shown results are with $q=0$ (no outlier) and with a number of random Fourier features $r=200$. Several  comments are in order. Firstly, compared to the importance sampling estimator, the Voronoi estimator produces good results in low dimensions, and fails as the dimension increases. Secondly, the performance of all methods increase and uniformize as the dimension increases. This is due to the fact that in large dimensions, interpoint distances tend to uniformize such that any pair of points tend to be representative of all interpoint distances, thus simplifying the problem of finding good coresets. This may also explain why the choice of $\tau$ is less crucial in higher dimension. In low dimensions, however, the choice of $\tau$ has a strong impact on performance. The best choice for $\tau$ depends in fact on the number of samples $m$ one requires: as $m$ increases, $\tau$ should be set smaller. This is in fact natural: if one desires a very short summary of the data set (small $m$), the repulsion of the DPP has to be strong in order to sample a diverse subset. Whereas if the length of the summary is less constrained, $\tau$ should be decreased to allow for a less coarse-grained description. This observation leads to the natural question of the optimal $\tau$ given the data and $m$. We currently lack of a satisfying answer to this question, both theoretically and empirically. A usual heuristics in kernel methods is to set $\tau$ to the average (or median) interdistance of the points in the data set. In the experiments of Figure~\ref{fig:perf_1_means}, the average interdistance corresponds to $\tau \simeq 1.7, 6.2$ and $14.0$ for $d=2, 20$ and $100$ respectively, which give in fact a good order of magnitude for the choice of $\tau$. In the following, to simplify the discussion, we will sometimes set $\tau$ to be the average interdistance, that we will denote by $\bar{\tau}$. \\

\begin{figure}
	\centering
	\begin{tabular}{c|ccc|cc}
		& \multicolumn{3}{c|}{Voronoi weights} & \multicolumn{2}{c}{Importance sampling weights}\\\hline &&&&&\\
		\multirow{1}{*}[4em]{$d=2$}	&&\includegraphics[width=0.4\columnwidth]{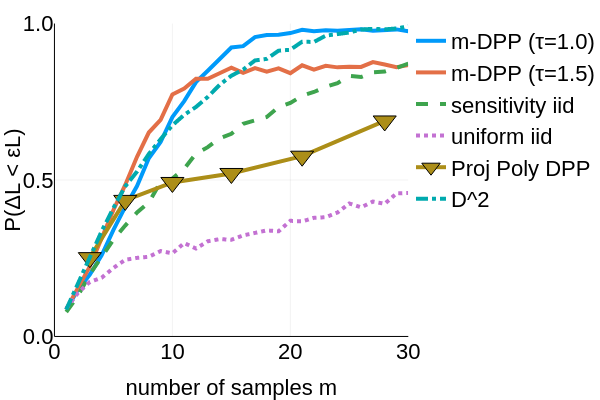} &&&
		\includegraphics[width=0.4\columnwidth]{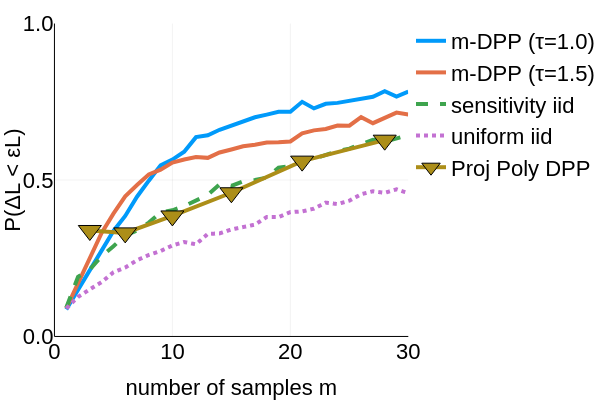}\\\hline &&&&&\\
		\multirow{1}{*}[4em]{$d=20$}&& \includegraphics[width=0.4\columnwidth]{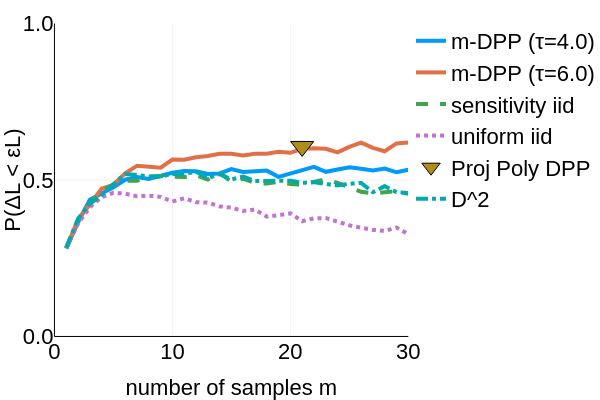}&&&
		\includegraphics[width=0.4\columnwidth]{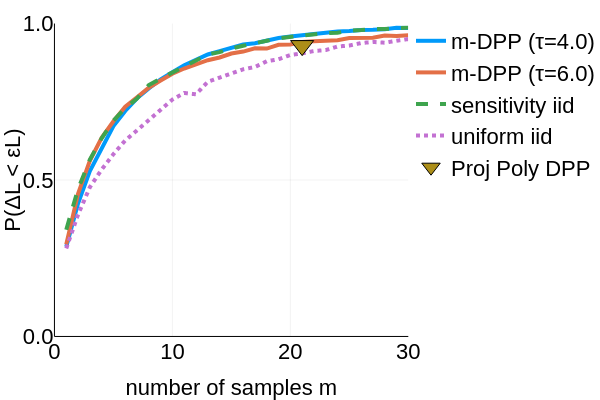}\\\hline &&&&&\\
		\multirow{1}{*}[4em]{$d=100$}&& \includegraphics[width=0.4\columnwidth]{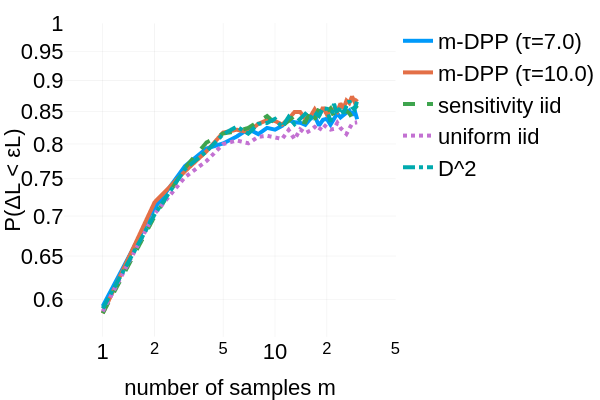}&&&
		\includegraphics[width=0.4\columnwidth]{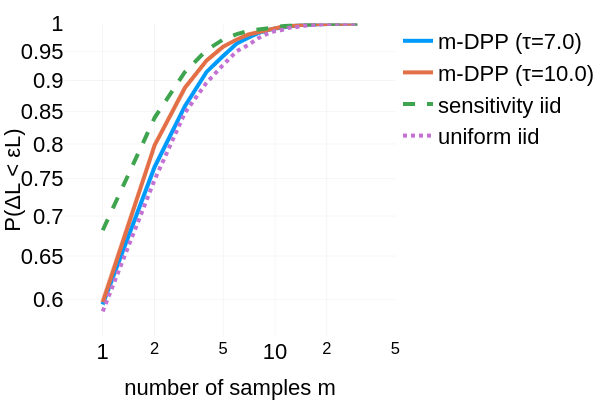}
	\end{tabular}
	\caption{Performance comparison of different sampling methods on the $1$-means problem, versus the dimension $d$ and the choice of weights (Voronoi or importance sampling weights) in the cost estimator. The two bottom figures are plotted in log-log scale, for readability.}
	\label{fig:compare_perf_1_means}
\end{figure}

We compare next the performance of several methods in Figure~\ref{fig:compare_perf_1_means}. One observes that the superior performance of the Voronoi estimator over the importance sampling estimator in low dimension $d$ is verified for all methods. Moreover, as the dimension increases, all methods converge to the performance of the uniform iid sampling method. 
Finally, \texttt{m-DPP} associated with Voronoi weights is competitive with \texttt{D$^2$} in low $d$; and, regardless of how one chooses the weights, \texttt{$m$-DPP} has a clear edge over the sensitivity-based iid random sampling (the lower the dimension, the clearer the edge). Finally, \texttt{PolyProj-DPP} matches the performance of \texttt{sensitivity iid} for importance sampling weights. For Voronoi weights, the results for $d=2$ and $d=20$ are contradictory and we cannot conclude. \\

In order to clarify further discussion, we will now focus on the importance sampling estimated cost. One should keep in mind that in low dimensions, Voronoi-based estimated costs usually perform well, but fail (sometimes drastically) as the dimension increases. \\

A natural question arises at this point: is the observed edge of \texttt{m-DPP} over \texttt{sensitivity iid} due to a better probability of inclusion of the point process? Or is it truly due to the negative correlations induced by the determinantal nature of our method? In fact, we compare in Figure~\ref{fig:2d_Gaussian} the probability of inclusion for \texttt{sensitivity iid} versus \texttt{m-DPP}: they have a similar general behavior but are nevertheless quantitatively different. In Figure~\ref{fig:compare_perf_DPP_vs_iid} (left), we compare \texttt{m-DPP} and \texttt{PolyProj-DPP} versus \texttt{matched iid} and \texttt{sensitivity iid}: the observed edge is clearly due to the negative correlations induced by the determinantal nature of our method. As expected from Corollary~\ref{corollary:mDPP_kmeans}, the best inclusion probability is based on the sensitivity. Nevertheless, the figure shows that even if it is not set to its ideal value (as in \texttt{m-DPP} and \texttt{PolyProj-DPP}), one can still improve the performance by inducing negative correlations. \\

\begin{figure}
	\centering
	\includegraphics[width=0.4\columnwidth]{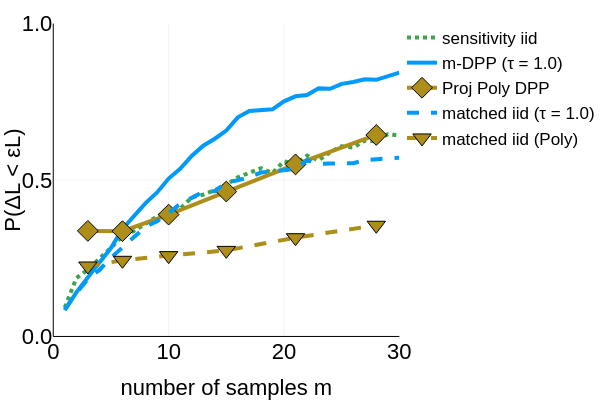}\hspace{1cm}
	\includegraphics[width=0.4\columnwidth]{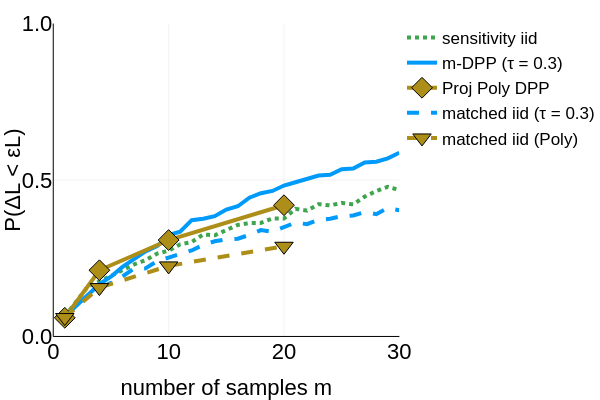}
	\caption{Comparison between \texttt{m-DPP} and $\texttt{PolyProj-DPP}$ versus \texttt{matched iid} and \texttt{sensitivity iid} on the 1-means problem (left) and the linear regression problem (right), for $d=2$.}
	\label{fig:compare_perf_DPP_vs_iid} 
\end{figure}

\begin{figure}
	\centering
	\hspace{1cm}	\includegraphics[width=0.40\columnwidth]{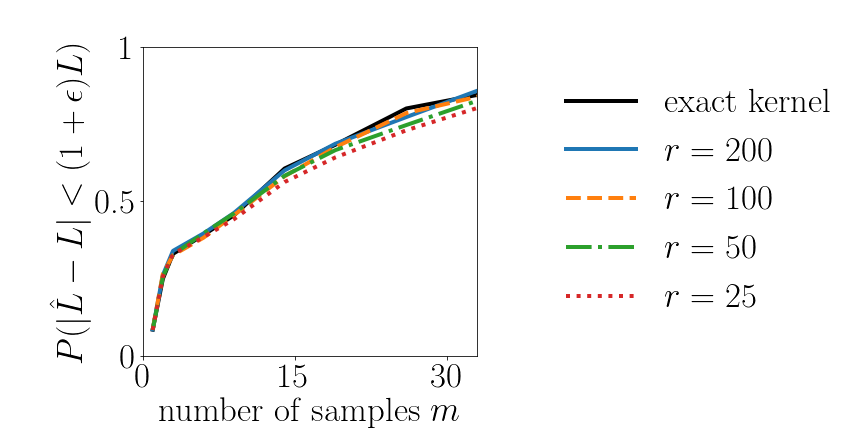} \hfill
	\includegraphics[width=0.4\columnwidth]{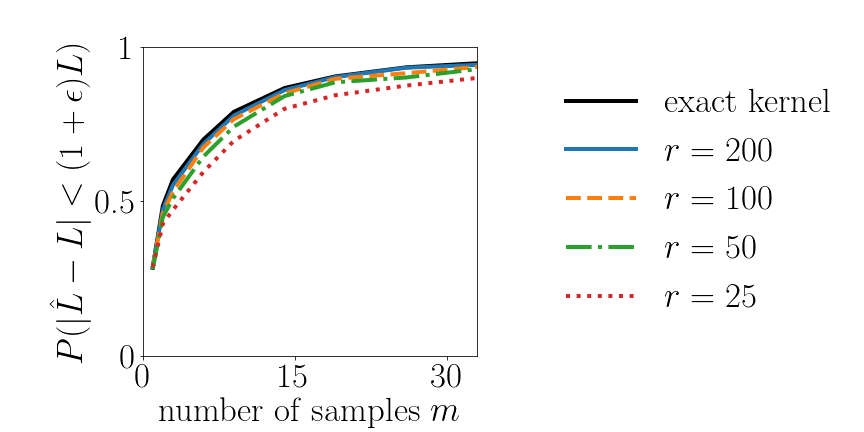} \hspace{1cm}
	\caption{Performance comparison of \texttt{m-DPP} on the $1$-means problem versus the number of RFF $r$, for $d=2$ (left) and $d=20$ (right). }
	\label{fig:compare_perf_DPP_vs_r}
\end{figure}

\begin{figure}
	\centering
	\includegraphics[width=0.40\columnwidth]{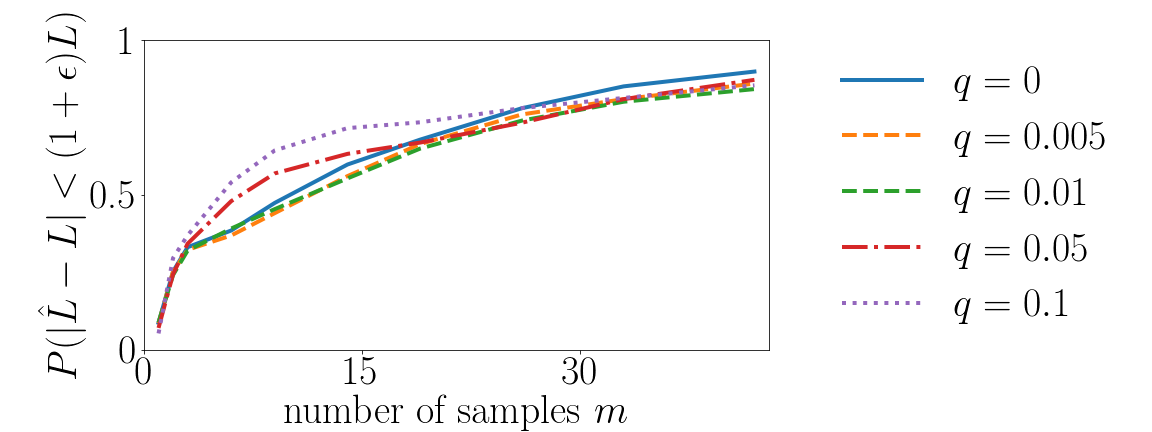} \hfill
	\includegraphics[width=0.29\columnwidth]{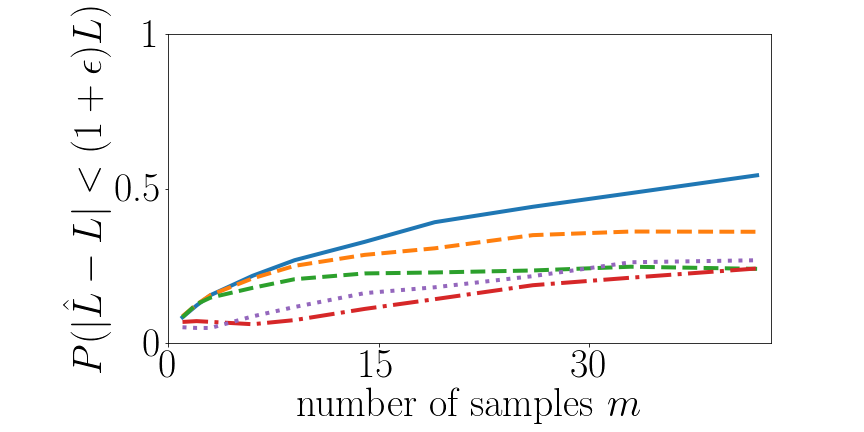} \hfill
	\includegraphics[width=0.29\columnwidth]{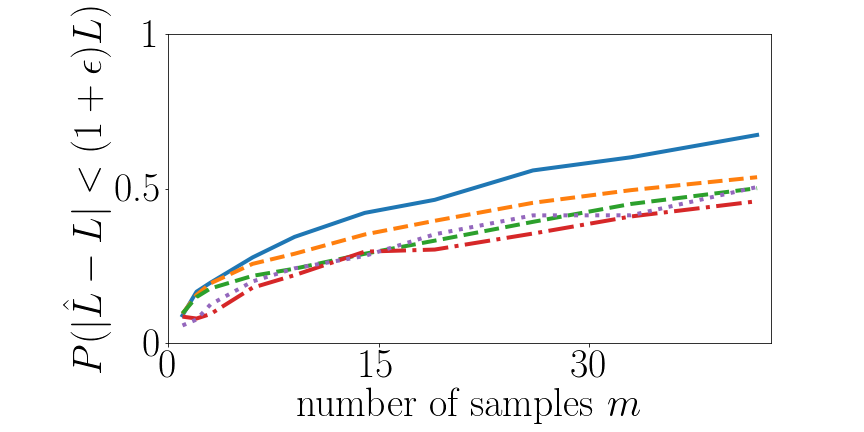} 
	\caption{Performance comparison of \texttt{m-DPP} (left), \texttt{uniform iid} (middle) and \texttt{sensitivity iid} (right) on the $1$-means problem versus the percentage of outliers $q$.}
	\label{fig:compare_perf_DPP_vs_q}
\end{figure}

For completeness, we still need to discuss the impact of two variables: the number of random Fourier features $r$ used in \texttt{m-DPP}, and the percentage of outliers $q$ in the data. In the following, we set $\tau$ to $\bar{\tau}$, the average interdistance.  Figure~\ref{fig:compare_perf_DPP_vs_r} shows the impact of the choice of $r$ on performances: as expected, as $r$ increases, performance increases, and as $d$ increases, performances become more sensitive to the choice of $r$. The impact of the choice of $r$ is nevertheless very limited: setting $r$ to a multiple of $m$ has been a safe choice in all our experiments. Finally, Figure~\ref{fig:compare_perf_DPP_vs_q} shows the impact of the percentage of outliers $q$ on performances. Empirically, we see here that outliers have a smaller impact on DPP sampling than on uniform or sensitivity-based iid sampling.\\

\noindent\textbf{Experiments with linear regression.} The data $\mathcal{X}=(\vec{x}_1,\ldots,\vec{x}_n)$ are generated by sampling $n$ points in the hypercube $[0,1]^d$. Each entry of the vector $y$ is also sampled uniformly from $[0,1]$. The outlier percentage $q$ is set to zero. We show the equivalent of Figs~\ref{fig:perf_1_means} and~\ref{fig:compare_perf_1_means} in, respectively, Figs~\ref{fig:perf_lr} and~\ref{fig:compare_perf_lr}. Results for $d=20$ and $d=100$ are very similar so the case $d=100$ is not shown. 

We observe similar results: \texttt{m-DPP} matches \texttt{D$^2$} in the Voronoi estimator, and outperforms \texttt{sensitivity iid} in all cases; \texttt{PolyProj-DPP} at least matches \texttt{sensitivity iid} in all cases. 

Finally, in Figure~\ref{fig:compare_perf_DPP_vs_iid} (right), we compare \texttt{m-DPP} and \texttt{PolyProj-DPP} versus \texttt{matched iid} and \texttt{sensitivity iid} for the linear regression problem: once again, the observed edge is clearly due to the negative correlations induced by the determinantal nature of our method.\\

\begin{figure}
	\centering
	\begin{tabular}{c|ccc|cc}
		& \multicolumn{3}{c|}{Voronoi weights} & \multicolumn{2}{c}{Importance sampling weights}\\\hline &&&&&\\
		\multirow{1}{*}[6em]{$d=2$}	&&\includegraphics[width=0.4\columnwidth]{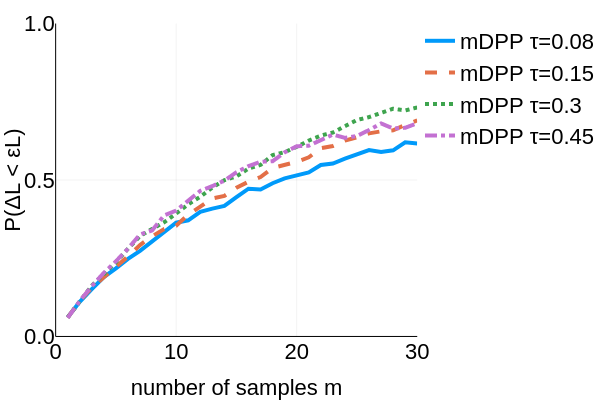} &&&
		\includegraphics[width=0.4\columnwidth]{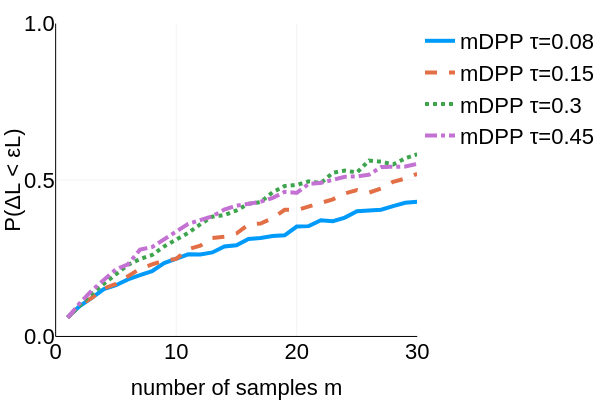}\\\hline &&&&&\\
		\multirow{1}{*}[6em]{$d=20$}&& \includegraphics[width=0.4\columnwidth]{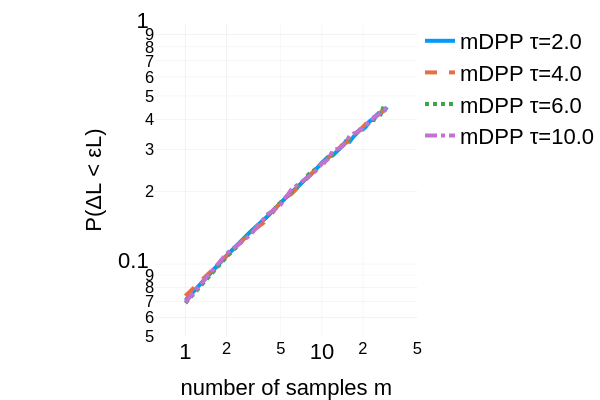}&&&
		\includegraphics[width=0.4\columnwidth]{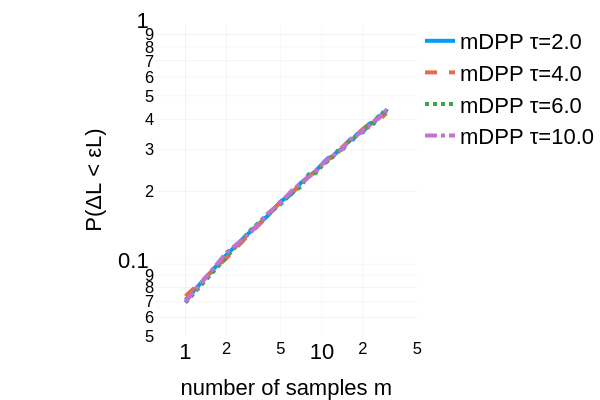}\\\hline &&&&&\\
	\end{tabular}
	\caption{Performance of \texttt{m-DPP} on the linear regression problem, versus the dimension $d$, the parameter $\tau$ of the Gaussian kernel and the choice of weights (Voronoi or importance sampling weights) in the cost estimator. The two bottom figures (for $d=20$) are in log-log scale. Performances are so similar that even in this scale they remain undecipherable. }
	\label{fig:perf_lr}
\end{figure}

\begin{figure}
	\centering
	\begin{tabular}{c|ccc|cc}
		& \multicolumn{3}{c|}{Voronoi weights} & \multicolumn{2}{c}{Importance sampling weights}\\\hline &&&&&\\
		\multirow{1}{*}[4em]{$d=2$}	&&\includegraphics[width=0.4\columnwidth]{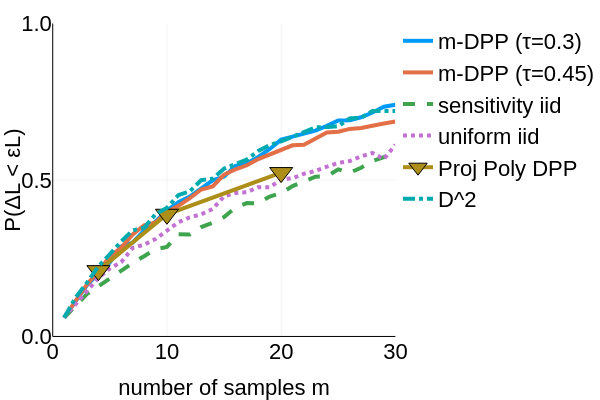} &&&
		\includegraphics[width=0.4\columnwidth]{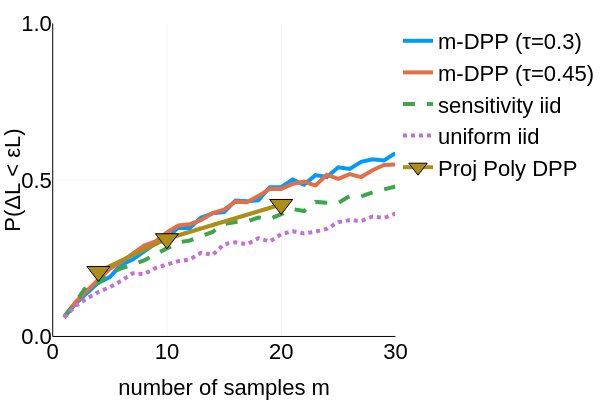}\\\hline &&&&&\\
		\multirow{1}{*}[4em]{$d=20$}&& \includegraphics[width=0.4\columnwidth]{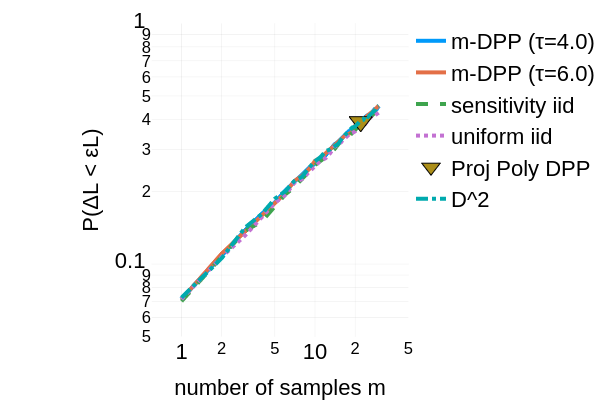}&&&
		\includegraphics[width=0.4\columnwidth]{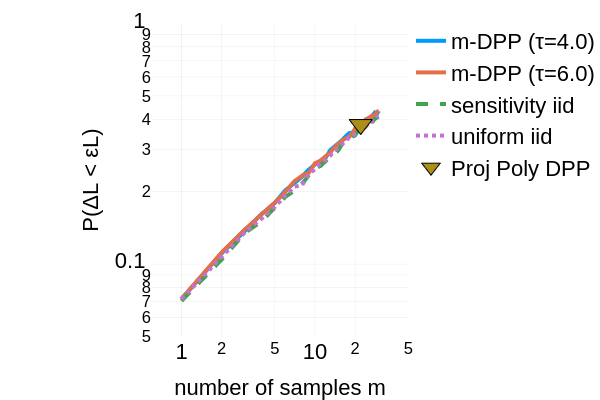}\\\hline &&&&&\\
	\end{tabular}
	\caption{Performance comparison of different sampling methods on the linear regression problem, versus the dimension $d$ and the choice of weights (Voronoi or importance sampling weights) in the cost estimator. The two bottom figures (for $d=20$) are in log-log scale. Performances are so similar that even in this scale they remain undecipherable.}
	\label{fig:compare_perf_lr}
\end{figure}

\noindent\textbf{We conclude} these first well-controlled experimental results by summarizing the observed behaviors:
\begin{itemize}
	\item \texttt{m-DPP} outperforms the current state of the art \texttt{sensitivity iid}, even in the $1$-means and the linear regression cases, where sensitivities do not need to be estimated but may be computed exactly. 
	\item \texttt{PolyProj-DPP} matches and in some cases outperforms \texttt{sensitivity iid}, at least for the importance sampling estimator.
	\item As the dimension increases, the edge over iid sampling decreases. In fact, all performances tend to \texttt{uniform iid}. 
	\item The best choice of parameter $\tau$ of the Gaussian kernel in \texttt{m-DPP} is still an open problem. Empirically, a good order of magnitude is the average interdistance of the datapoints. Ideally, nevertheless, $\tau$ should increase as $m$, the number of wanted samples, decreases. 
	\item Regarding the number of RFFs $r$, setting $r$ to $\mathcal{O}(m)$ is sufficient.
	\item Regarding the impact of outliers. Our theorems are not well suited to outliers (due to the proof techniques used); nevertheless, in practice, we see that outliers are not an issue in our method: they even have a smaller impact on our method's performances than on other methods. 
	\item Replacing weights by Voronoi weights yields in general better results, but only in low dimension. As the dimension increases, the Voronoi cost estimator fails (sometimes drastically). 
\end{itemize}

\subsubsection{Experiments on non-Gaussian data: the case of spectral features}
\label{subsec:expes_sp_feat_SBM}
\noindent\textbf{Spectral features.} Given a graph of $n$ nodes where $\ma{W}\in\mathbb{R}^{n\times n}$ is the adjacency matrix (\ie, $\ma{W}_{ij}=1$  if nodes $i$ and $j$ are connected, and $0$ ortherwise), a standard problem consists in partitioning the  nodes in $k$ communities, \ie, sets of nodes more connected to themselves than to other nodes of the graph~\citep{fortunato_community_2010}. A classical algorithm to solve efficiently this problem is the so-called spectral clustering algorithm~\citep{ng_spectral_2002}:
\begin{itemize}
	\item Define the normalized Laplacian matrix $\ma{L}=\ma{I} - \ma{D}^{-\frac{1}{2}}\ma{WD}^{-\frac{1}{2}}\in\mathbb{R}^{n \times n}$ where $\ma{I}$ is here the identity matrix in dimension $n$, and $\ma{D}\in\mathbb{R}^{n\times n}$ is a diagonal matrix with $\ma{D}_{ii}=d_i=\sum_j \ma{W}_{ij}$ the degree of node $i$.
	\item Compute via Arnoldi iterations or a similar algorithm the $k$ first eigenvectors of $\ma{L}$: $(\fou_1, \ldots, \fou_k)$.
	\item Associate to each node $i$ a (spectral) feature vector $\vec{x}_i\in\mathbb{R}^k$: $\forall l=1, \ldots, k\quad x_i(l) = u_l(i)$.  
	\item Normalize all feature vectors: $\vec{x}_i\leftarrow \vec{x}_i/\norm{\vec{x}_i}_2$. 
	\item Run $k$-means on all such normalized spectral features.
\end{itemize}
An extensive literature exists on spectral clustering and it has shown to be a very successful unsupervised classification algorithm in many situations~\citep{von_luxburg_tutorial_2007, tremblay_approximating_2020}. \\

\begin{figure}
	\centering
	\hspace{2cm}
	\includegraphics[width=0.3\columnwidth]{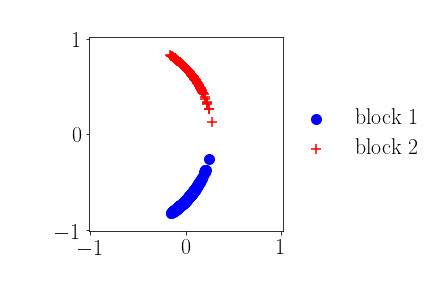}\hfill
	\includegraphics[width=0.3\columnwidth]{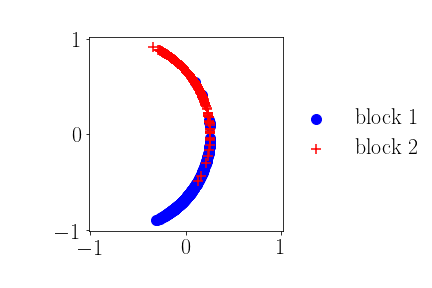}\hspace{2cm}
	\caption{Examples of SBM spectral features $\vec{x}_i$, here with $k=2$. Each colour corresponds to one block of the SBM. On the left, for an ``easy'' classification task ($\zeta=\zeta_c/4$), and on the right, for a harder setting ($\zeta=\zeta_c/2$). }
	\label{fig:2d_spectral_features}
\end{figure}

\begin{figure}
	\centering
	\begin{tabular}{c|ccc|cc}
		& \multicolumn{3}{c|}{Coreset property} & \multicolumn{2}{c}{$k$-means performance}\\\hline &&&&&\\
		\multirow{1}{*}[4em]{$\zeta=\frac{\zeta_c}{4}$}&& \includegraphics[width=0.4\columnwidth]{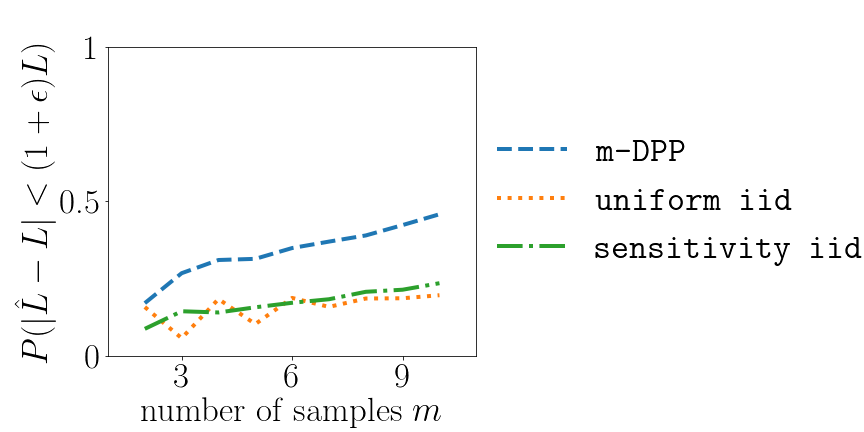}&&&
		\includegraphics[width=0.4\columnwidth]{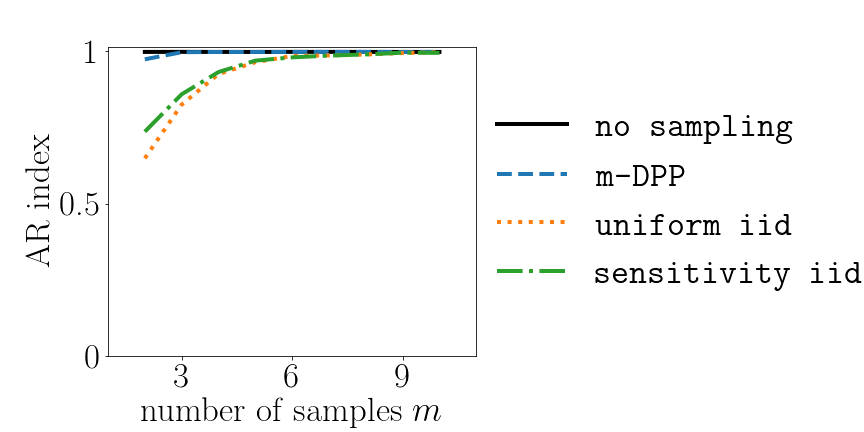}\\\hline &&&&&\\
		\multirow{1}{*}[4em]{$\zeta=\frac{\zeta_c}{2}$}&& \includegraphics[width=0.4\columnwidth]{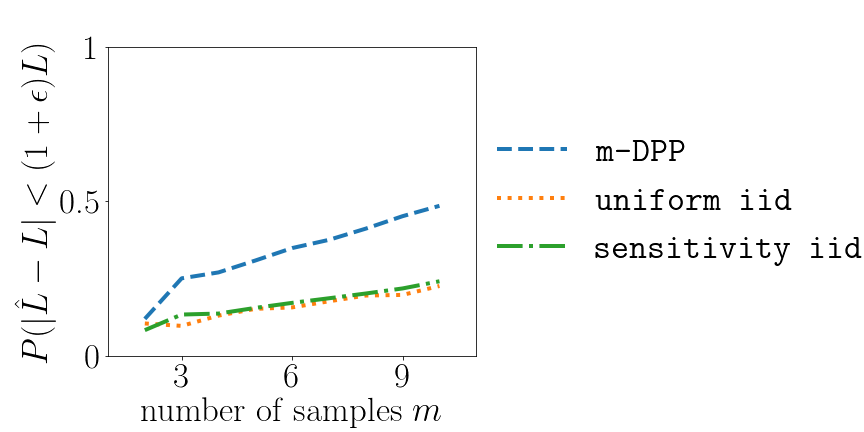}&&&
		\includegraphics[width=0.4\columnwidth]{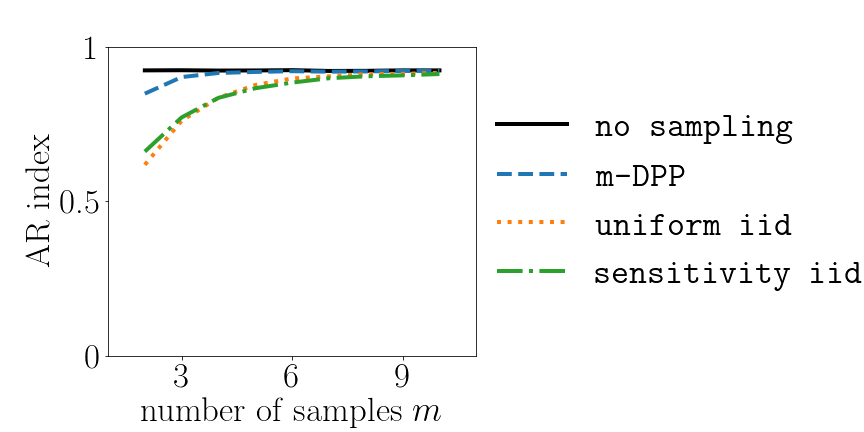}
	\end{tabular}
	\caption{Performance comparison of different methods on the $k$-means problem for spectral features of balanced SBM graphs (here, $k=2$). Left: testing the coreset property. Right: the Adjusted Rand index between the partition recovered by $k$-means on the weighted subsets and the ground truth partition of the SBM. $\zeta$ quantifies the difficulty of the classification task (see text): the lower it is, the easier the classification task. }
	\label{fig:compare_kmeans_perf_SBM_k=2}
\end{figure}

\begin{figure}
	\centering
	\begin{tabular}{c|ccc|cc}
		& \multicolumn{3}{c|}{Coreset property} & \multicolumn{2}{c}{$k$-means performance}\\\hline &&&&&\\
		\multirow{1}{*}[4em]{$\zeta=\frac{\zeta_c}{4}$}&& \includegraphics[width=0.4\columnwidth]{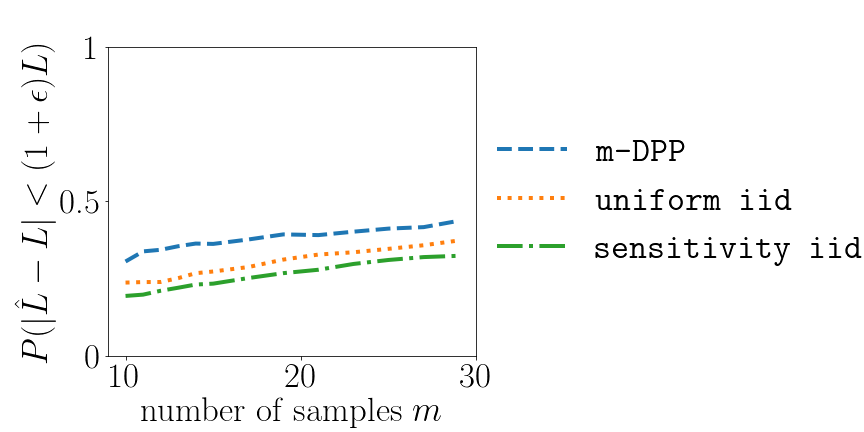}&&&
		\includegraphics[width=0.4\columnwidth]{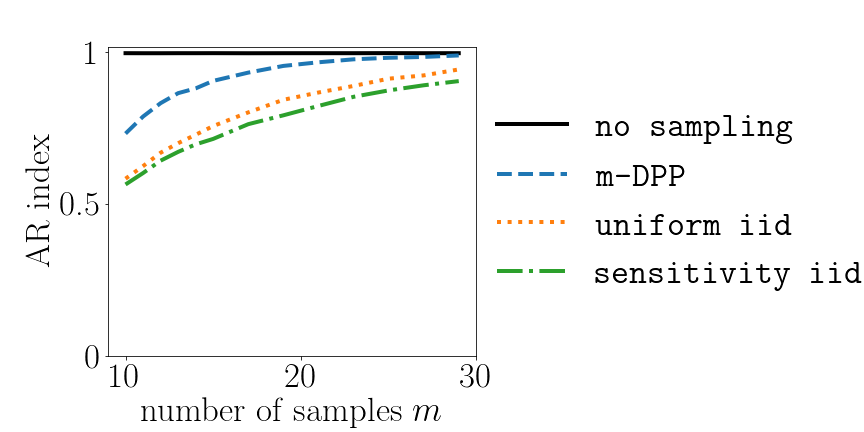}\\\hline &&&&&\\
		\multirow{1}{*}[4em]{$\zeta=\frac{\zeta_c}{2}$}&& \includegraphics[width=0.4\columnwidth]{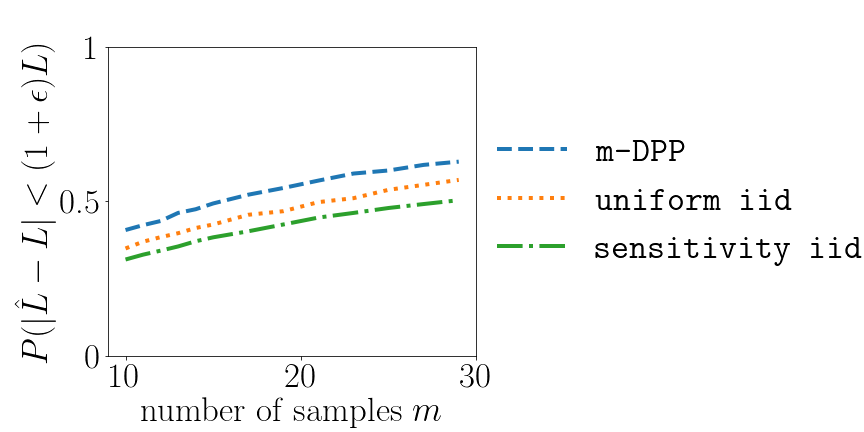}&&&
		\includegraphics[width=0.4\columnwidth]{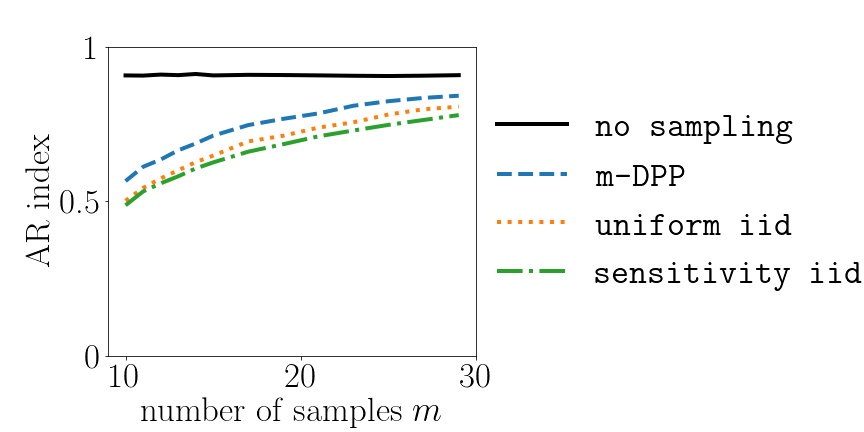}
	\end{tabular}
	\caption{Same as Figure~\ref{fig:compare_kmeans_perf_SBM_k=2} but with $k=10$.}
	\label{fig:compare_kmeans_perf_SBM_k=10}
\end{figure}

\begin{figure}
	\centering
	\begin{tabular}{c|ccc|cc}
		& \multicolumn{3}{c|}{Coreset property} & \multicolumn{2}{c}{$k$-means performance}\\\hline &&&&&\\
		\multirow{1}{*}[4em]{$600/400$}&& \includegraphics[width=0.4\columnwidth]{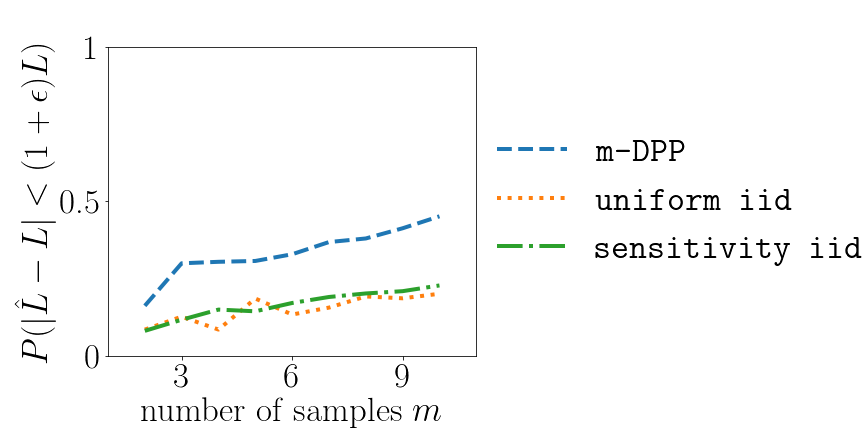}&&&
		\includegraphics[width=0.4\columnwidth]{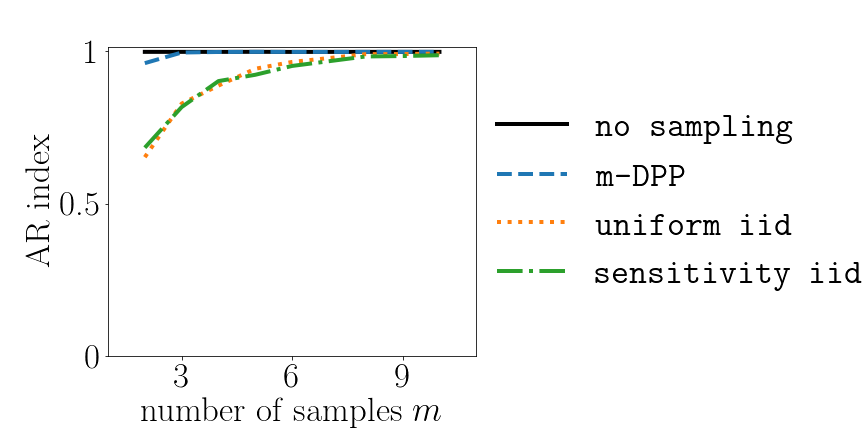}\\\hline &&&&&\\
		\multirow{1}{*}[4em]{$700/300$}&& \includegraphics[width=0.4\columnwidth]{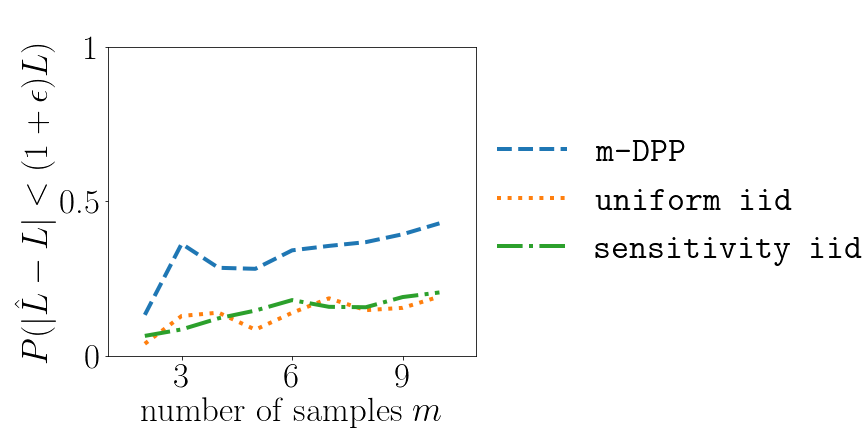}&&&
		\includegraphics[width=0.4\columnwidth]{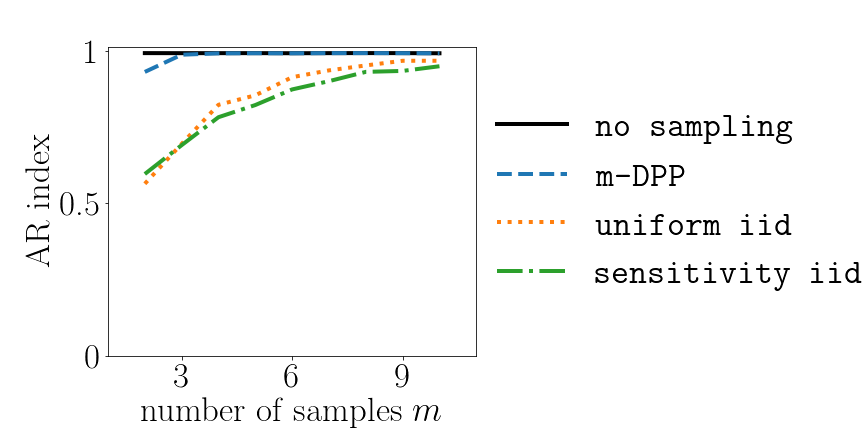}\\\hline &&&&&\\
		\multirow{1}{*}[4em]{$800/200$}&& \includegraphics[width=0.4\columnwidth]{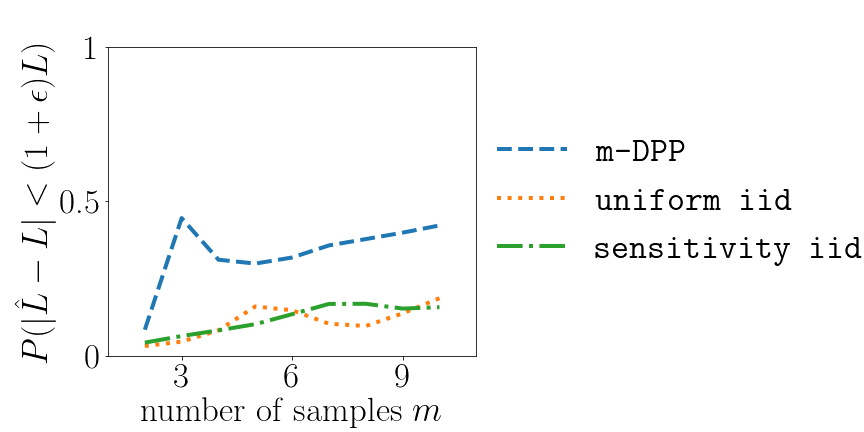}&&&
		\includegraphics[width=0.4\columnwidth]{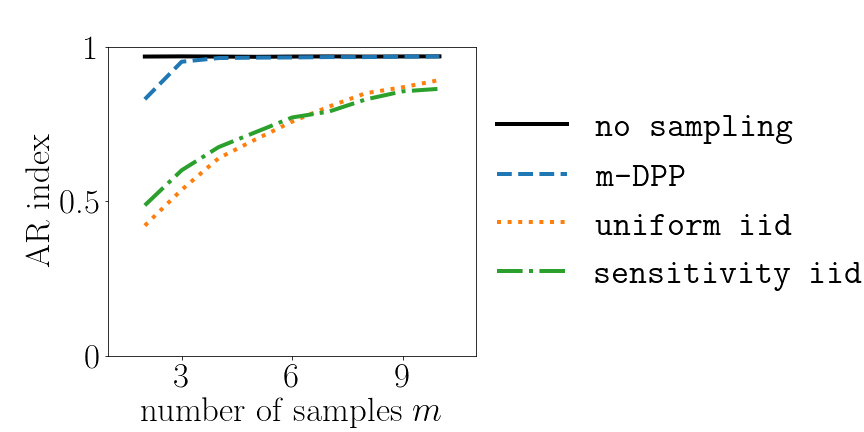}\\\hline &&&&&\\
		\multirow{1}{*}[4em]{$900/100$}&& \includegraphics[width=0.4\columnwidth]{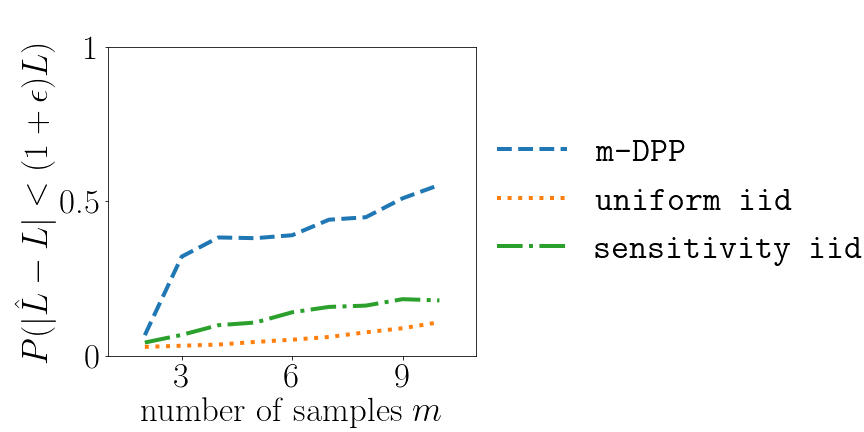}&&&
		\includegraphics[width=0.4\columnwidth]{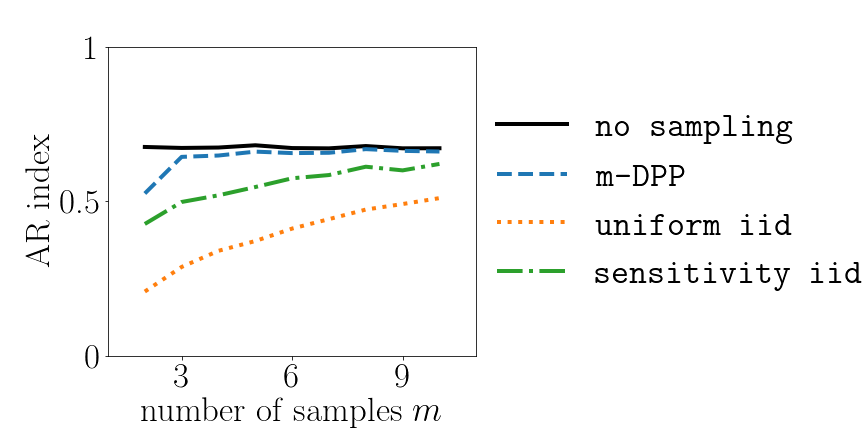}\\\hline &&&&&\\
		\multirow{1}{*}[4em]{$950/50$}&& \includegraphics[width=0.4\columnwidth]{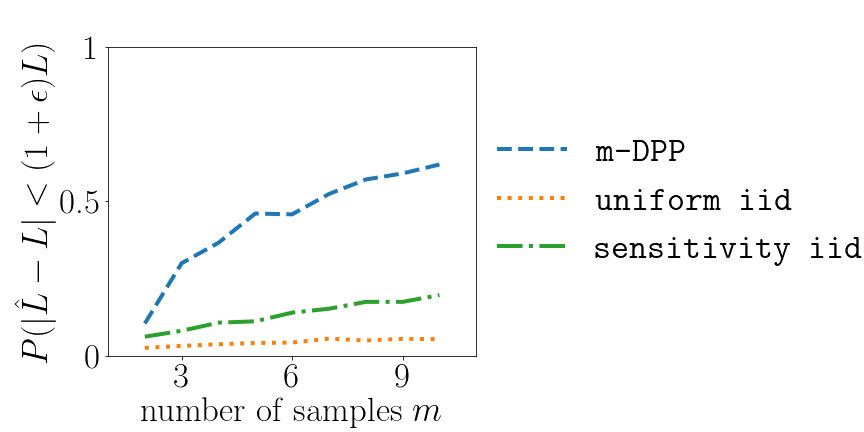}&&&
		\includegraphics[width=0.4\columnwidth]{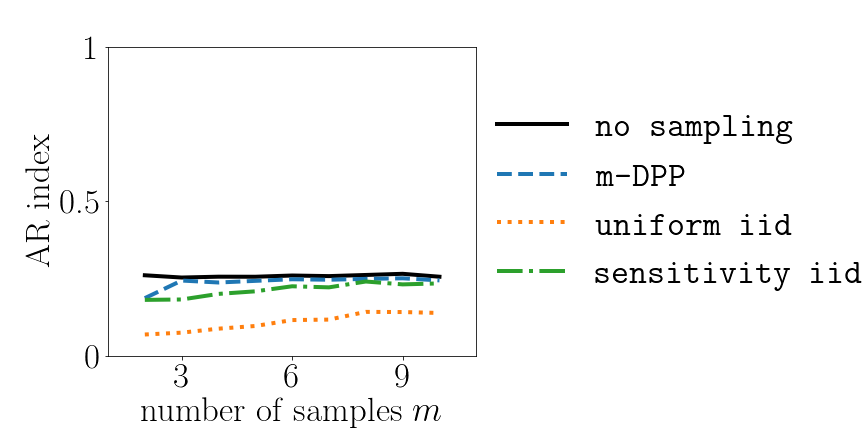}
	\end{tabular}
	\caption{Same as Figure~\ref{fig:compare_kmeans_perf_SBM_k=2} but with a fixed $\zeta=\zeta_c/4$ and a varying level of balance within the sizes of the $k=2$ communities. $n_1/n_2$ means one community with $n_1$ nodes and the other with $n_2$ nodes.}
	\label{fig:compare_kmeans_perf_SBM_unbalanced}
\end{figure}

\noindent\textbf{The Stochastic Block Model (SBM).} 
We consider random community-structured graphs drawn from the SBM, a classical class of structured random graphs~\citep[see for instance][]{abbe_community_2015}. We first look at graphs 
with $k$ communities of same size $n/k$. In the SBM, the probability of connection between any two nodes $i$ and $j$ is $q_1$ if they are in the same community, and $q_2$ otherwise. One can show that the average degree reads  $c=q_1\left(\frac{n}{k}-1\right)+q_2\left(n-\frac{n}{k}\right)$. 
Thus, instead of providing the probabilities $(q_1,q_2)$, one may characterize a SBM by considering $(\zeta=\frac{q_2}{q_1},c)$. 
The larger $\zeta$, the fuzzier the community structure, the harder the classification task. In fact,~\citet{decelle_asymptotic_2011} show that above the critical value $\zeta_c=(c-\sqrt{c})/(c+\sqrt{c}(k-1))$, community structure becomes  undetectable in the large $n$ limit. In the following, we set $n=1000$ and $c=16$; $k$ and $\zeta$ will vary. Note that spectral features $\vec{x}_i$ are not Gaussian and, in fact, do not fall into any classical data model (see Figure~\ref{fig:2d_spectral_features} to visualize instances of SBM spectral features with $k=2$). They are thus interesting candidates to test $k$-means algorithms. \\

\noindent\textbf{Results.} For different values of $\zeta$ and $k$, we generate $1000$ such SBM graphs from which we sample subsets according to different methods. We test both the coreset property (as before) and the $k$-means performance on the weighted subset compared to the $k$-means performed on all data. We plot in Figure~\ref{fig:compare_kmeans_perf_SBM_k=2} (resp. Figure~\ref{fig:compare_kmeans_perf_SBM_k=10}) the results obtained for $k=2$ (resp. $k=10$). Note that in this case, we have no explicit formula for the sensitivity such that for \texttt{sensitivity iid}, we use the bi-criteria approximation scheme provided in Algorithm2 of~\cite{bachem_practical_2017}. Here again, we see how our method outperforms iid sampling schemes, even in difficult classification contexts (for instance when $\zeta=\zeta_c/2$: even with all the data, $k$-means' performance saturates at an AR index of $0.9$). Moreover, as the dimension increases (here $d=k$), performances of all methods tend to uniformize. Surprisingly, \texttt{uniform iid} performs as well ($k=2$) and even outperforms ($k=10$) \texttt{sensitivity iid}. We believe this is due to approximation errors of the bi-criteria scheme used to find upper bounds of the sensitivity. Also, in this balanced case (communities have the same number of nodes), uniform sampling is in fact a good option. We will now see how this changes in the unbalanced case. \\

\noindent\textbf{The unbalanced case.} In the unbalanced case, $\zeta_c$ is no longer a recovery threshold, but we may still use $\zeta$ as a marker of difficulty of the recovery task. We set $\zeta$ to $\zeta_c/4$ and perform the same experiments as previously with $k=2$ blocks of unbalanced size. Results are shown in Figure~\ref{fig:compare_kmeans_perf_SBM_unbalanced}. For a fixed $\zeta$, the more unbalanced, the more difficult the recovery task. Also, the more unbalanced, the better is \texttt{sensitivity iid} compared to \texttt{uniform iid}. Nevertheless, \texttt{m-DPP} shows an edge over all iid methods in all tested configurations.

\subsubsection{Experiments on two real world data sets}
\label{subsec:expes_MNIST}

\noindent\textbf{The MNIST data set.} We perform a first experiment on the MNIST data set~\citep{lecun_mnist_1998} that consists in $7\cdot10^4$ images of handwritten digits (from $0$ to $9$) for which the ground truth is known. The classical associated machine learning goal is to classify them in $10$ classes (one for each digit). To do so, we pre-process the data in the following unsupervised way. We consider all images and extract SIFT descriptors~\citep{vedaldi_vlfeat:_2010} for each image. We then use FLANN~\citep{muja_scalable_2014} to compute a $\kappa$-nearest neighbor graph (with $\kappa$=10) based on these descriptors. We finally run the spectral clustering algorithm with $k=10$ to find the $10$ classes corresponding to each digit, as explained in Section~\ref{subsec:expes_sp_feat_SBM}. The $k$-means step is thus the last step of the overall processing. 
We compare results obtained with different sampling methods versus the results obtained without sampling in Figure~\ref{fig:real_data_MNIST} (bottom). For \texttt{m-DPP}, several values of $\tau$ were tried, and we show here the result obtained for $\tau=1.5$. Also, a number $r=200$ of Fourier features were used. We see that, in the Voronoi weight setting, \texttt{m-DPP} is competitive with \texttt{D$^2$}. Moreover, \texttt{uniform iid} outperforms \texttt{sensitivity iid} certainly due to approximation errors of the bi-criteria procedure and to the fact that the data is balanced (there are more or less $7\cdot10^3$ instances of each digit in the data set), thus favoring uniform sampling. Finally, \texttt{m-DPP} outperforms once again the iid random sampling techniques. Note that the methods' classification performance is remarkable. Without sampling, the overall classification performance in terms of AR index with the ground truth is $0.95$. With only $\sim20$ samples, \texttt{m-DPP} reaches a performance of $\sim0.9$!  \\

\begin{figure}
	\centering
	\includegraphics[width=0.4\columnwidth]{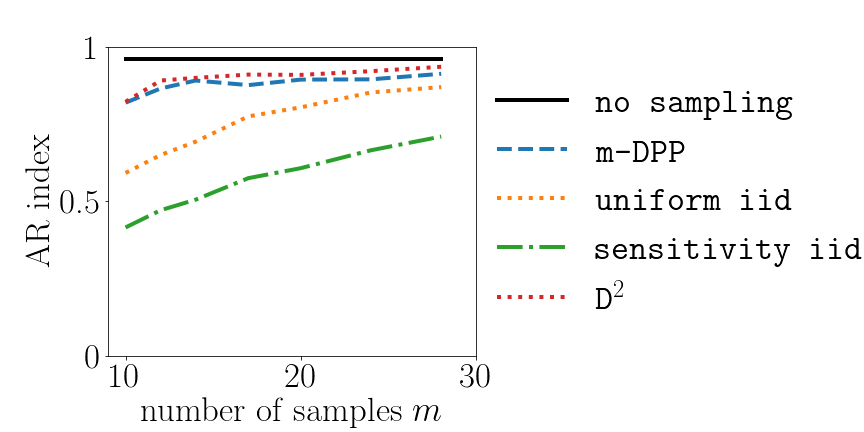}\hfill
	\includegraphics[width=0.4\columnwidth]{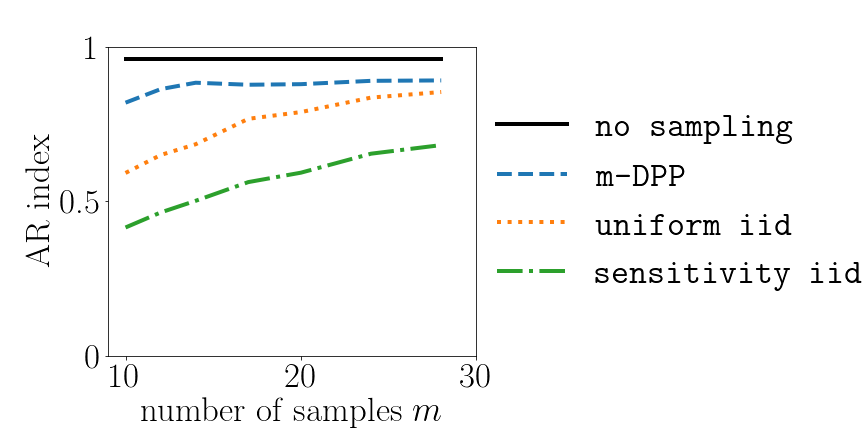}\hspace{2cm}
	\caption{Classification performance on the MNIST data set obtained with different sampling methods versus the result obtained without sampling,  using Voronoi weights (left),  or importance sampling weights (right). }
	\label{fig:real_data_MNIST}
\end{figure}

\begin{figure}
	\centering
	\hspace{2cm}
	\includegraphics[width=0.4\columnwidth]{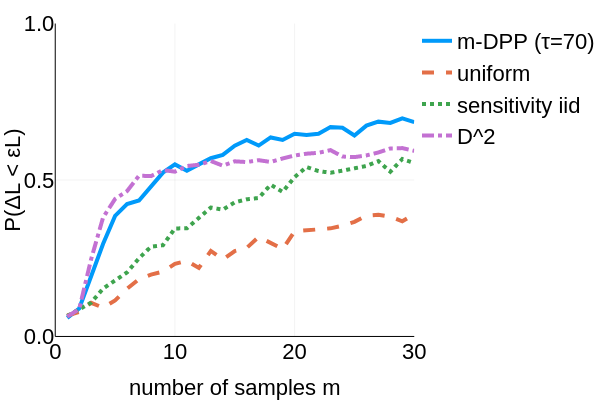}\hfill
	\includegraphics[width=0.4\columnwidth]{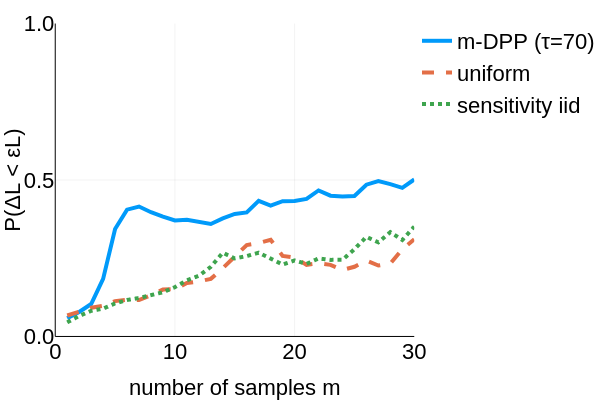}\hspace{2cm}
	\caption{Performance of different sampling methods on the US Census data set. Left: performance using Voronoi weights. Right: performance using importance sampling weights. }
	\label{fig:real_data_USCENSUS}
\end{figure}

\noindent\textbf{The US Census data set.} We also perform experiments on the $1990$ US Census data set\footnote{downloaded from \url{https://archive.ics.uci.edu/ml/datasets/US+Census+Data+(1990)}.}, that consists in $n=2458285$ surveyed persons, and $d=68$ categorical attributes such as age, income, etc. The data was pre-processed by a series of operation detailed on its download webpage. As there is no ground truth in this data set to compare to, we arbitrarily decide $k=15$ classes, and show solely the coreset property of the samples obtained via different methods. For \texttt{m-DPP}, $\tau$ was set to $70$ (the mean interdistance estimated on $1000$ randomly chosen pairs of datapoints), and a number $r=30$ of Fourier features was chosen. Experiments were done with $\tau$ ranging from $\tau=30$ to $\tau=140$ with no qualitative change in performance (not shown). Figure~\ref{fig:real_data_USCENSUS} shows the results of the experiments. We see that \texttt{m-DPP} outperforms all other methods, in both Voronoi and importance sampling settings. In this example, note that \texttt{sensitivity iid} outperforms \texttt{uniform  iid} probably due to the fact that the $15$ potential classes  are unbalanced.

\subsection{Computation time}
Comparing computation times is always a tricky endeavour: they heavily depend on the quality of implementation, choice of language, choice of experiments, choice of parameters, hardware, etc. Giving the full picture is out-of-scope of this paper. We hope here to give some insight in the computational complexity of the proposed methods. We suggest to first recap theoretical times before looking at times observed in a few experiments. \\

\begin{figure}
	\centering
	\includegraphics[width=0.3\columnwidth]{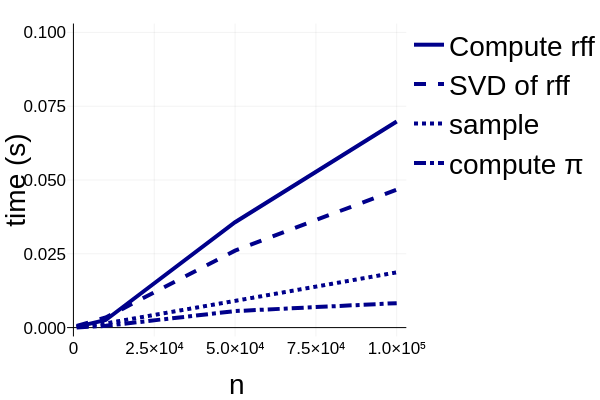}\hfill
	\includegraphics[width=0.3\columnwidth]{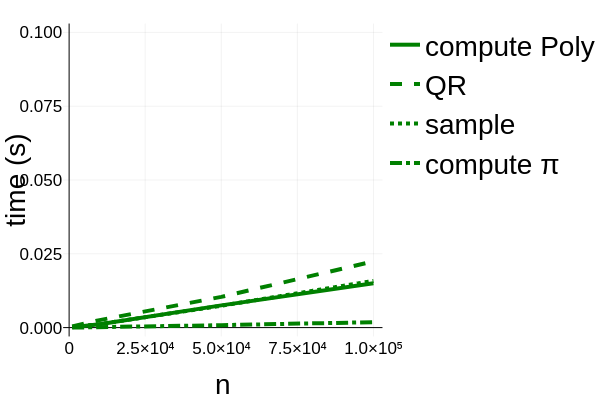}\hfill
	\includegraphics[width=0.3\columnwidth]{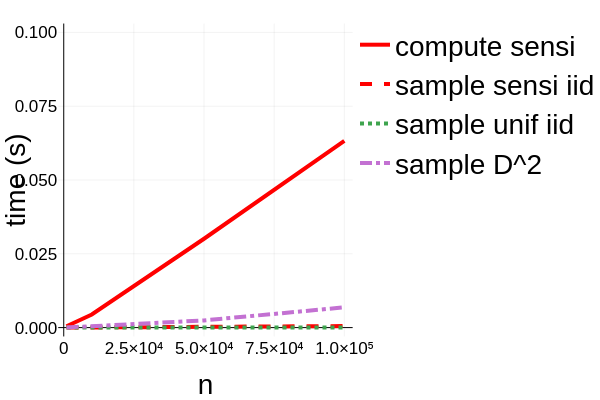}\\
	\includegraphics[width=0.3\columnwidth]{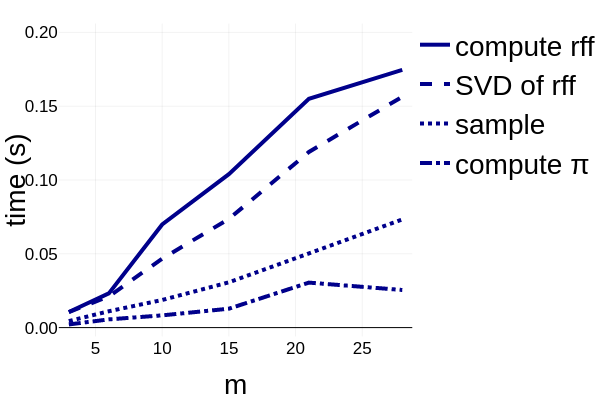}\hfill
	\includegraphics[width=0.3\columnwidth]{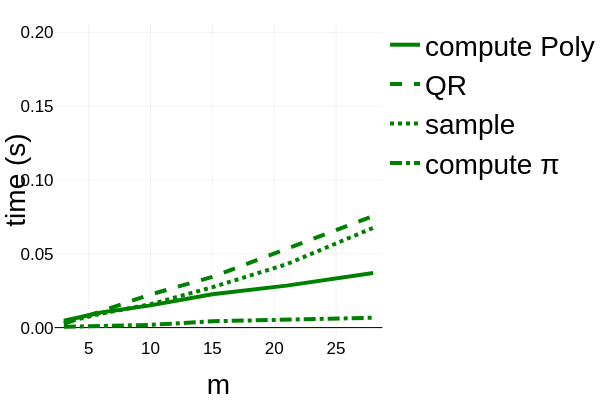}\hfill
	\includegraphics[width=0.3\columnwidth]{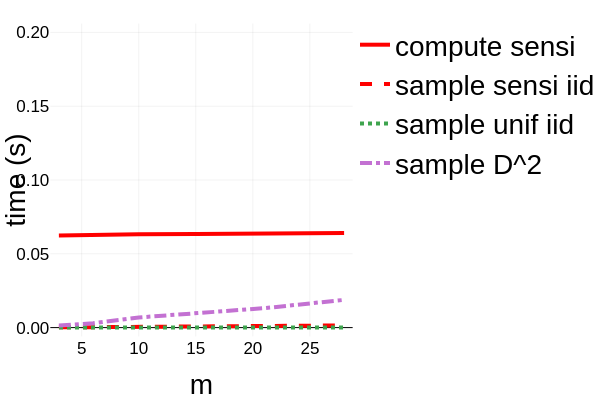}\\
	\includegraphics[width=0.3\columnwidth]{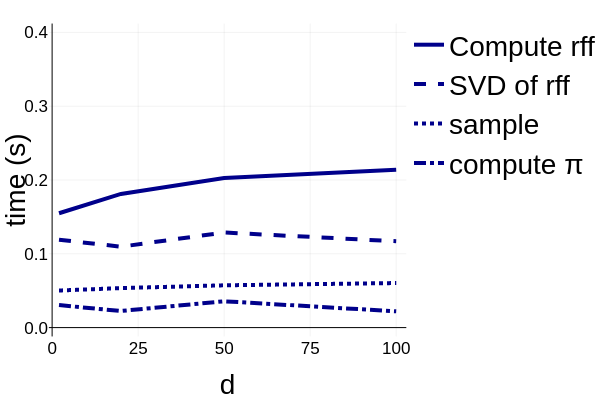}\hfill
	\includegraphics[width=0.3\columnwidth]{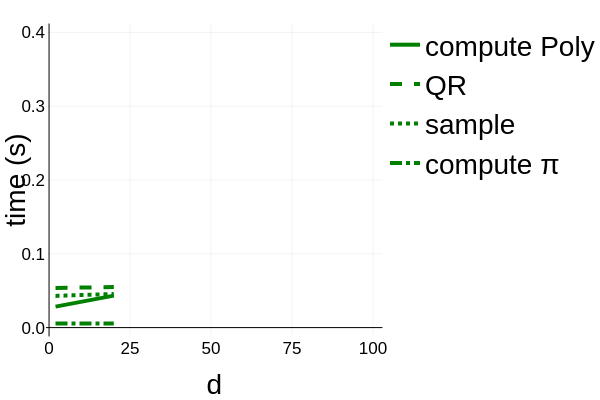}\hfill
	\includegraphics[width=0.3\columnwidth]{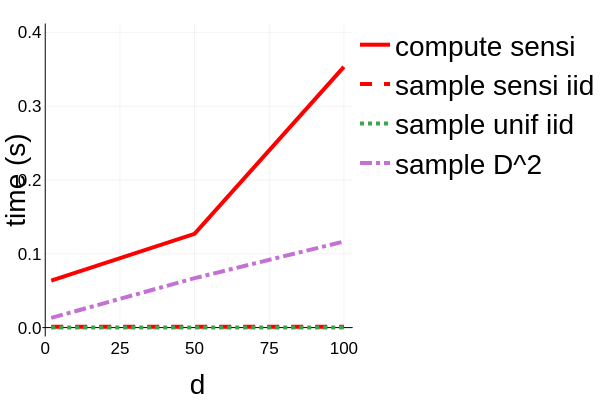}
	\caption{Computation times for all methods. Top line: $d=2$, $m=10$, versus $n$. Middle line: $n=10^5$, $d=2$, versus $m$. Bottom line: $n=10^5$, $m=21$, versus $d$. Left column: the four fundamental operations associated to \texttt{m-DPP}: i/~compute the RFFs with $r$ set to $m$ here, ii/~compute the SVD of the RFFs, iii/~sample from the $m$-DPP, iv/~compute the marginal probabilities $\pi$ useful for the importance sampling estimator. Middle column: the four equivalent operations for \texttt{PolyProj-DPP}: i/~compute the Vandermonde matrix ("compute Poly"), ii/~compute the $QR$ decomposition of that matrix, iii/~sample from the projective DPP, iv/~compute the marginal probabilities $\pi$. Right column: the two fundamental operations for \texttt{sensitivity iid}: i/~the computation of the bi-criteria approximation ("compute sensi"), ii/~the weighted iid sampling with replacement itself, along with \texttt{uniform iid} and \texttt{D}$^2$.}
	\label{fig:comp_time}
\end{figure}

\noindent\textbf{In theory.} The theoretical time for \texttt{m-DPP} is $\mathcal{O}(nrd)$ for the RFFs, $\mathcal{O}(nr^2)$ for the SVD of the RFFs, $\mathcal{O}(nm^2)$ for the sampling, and $\mathcal{O}(nm+r)$ for the computation of the inclusion probabilities $\pi$. As $r$ is set to $\mathcal{O}(m)$, this sums up to $\mathcal{O}(nm^2+nmd)$. The theoretical time for \texttt{PolyProj-DPP} is $\mathcal{O}(nm)$ to compute the Vandermonde matrix, $\mathcal{O}(nm^2)$ for the $QR$ decomposition, $\mathcal{O}(nm^2)$ for the sampling, and $\mathcal{O}(nm)$ to compute the inclusion probabilities $\pi$ (which are simply the sum of squares of each line of $\ma{Q}$); which sums up to $\mathcal{O}(nm^2)$. The theoretical time for Algorithm 2 of~\cite{bachem_practical_2017} used here for the bi-criteria approximation of the sensitivities in the $k$-means context, is $\mathcal{O}(Indk)$ where $I$ is the number of times Algorithm 1 of~\cite{bachem_practical_2017} is run to find the best initialization (we set it to $I=10$ in our experiments). Once upper bounds of the sensitivities have been computed, the iid sampling time is negligeable. The theoretical time of \texttt{D$^2$} is $\mathcal{O}(ndm)$. \\

\noindent\textbf{In practice.} Experiments were made on a laptop with 8 cores and 16 GB of memory, with the Julia toolbox available on the authors's website.\footnote{The DPP4Coresets Julia toolbox is also available at \url{https://gricad-gitlab.univ-grenoble-alpes.fr/tremblan/dpp4coresets.jl}\,.}  Figure~\ref{fig:comp_time} shows some computation times versus $m$, $d$ and $n$ for the $1$-means problem. We observe a linear progression in $n$ of the determinantal methods, as well as a superlinear progression in $m$ for some parts of the computations. Comparing with \texttt{sensitivity iid}, one observes: the lower $m$ and the larger $d$, the more our methods are comparable in terms of computation time.

Note that this comparison is in fact in favor of \texttt{sensitivity iid}: the computation time of the bi-criteria approximation increases in fact with the number of expected classes $k$. The figures represented here are for $k=1$.  
We reproduce in the following table the computation times for the US-Census data set, with $k=15$, $m=30$, $r=30$:\\

\hspace{-1.5cm}\begin{tabular}{|c|c|c|c|c|c|c|}
	\texttt{m-DPP}: rff&\texttt{m-DPP}: svd& \texttt{m-DPP}: sample& \texttt{m-DPP}: $\pi$& \texttt{sensi iid}: bi-criteria& \texttt{sensi iid}: sample& \texttt{D$^2$}\\\hline
	$2.6$s& $2.5$s& $2.7$s& $1.3$s& $18.7$s& $0.04$s& $3.1$s
\end{tabular}\\

\noindent The total time for $m$-DPP sampling is thus half the time necessary for the bi-criteria approximation; for a substantial gain in performance, as seen in Figure~\ref{fig:real_data_USCENSUS}.\\

\noindent\textbf{To conclude,} the determinantal methods proposed in this paper are in many situations, especially as $m$ increases, and $d$ decreases, slightly heavier in computation time than iid sampling. Due to the observed gain in coreset performance, however, we believe that the additional cost is worth the effort.

\section{Conclusion}
\label{sec:conclusion}
In this work, we introduced a new random sampling method based on DPPs to build coresets. Different from sensitivity-based iid random sampling, our method introduces negative correlations between samples due to its determinantal nature. Also, different from $D^2$ sampling, also known to be repulsive, the proposed method is tractable in the sense that marginal probabilities are known and importance sampling schemes can be used. Our theoretical results may be summarized in three points. Firstly, Thms~\ref{thm:main} and~\ref{thm:main_DPP} provide coreset guarantees in function of the point process' probabilities of inclusion. These guarantees are not stronger than the iid case and are in fact similar: they both show that the ideal marginal probabilities are proportional to the sensitivity. Nevertheless, these results do not take into account higher order inclusion probabilities coding for the repulsion within the sampled subsets and are in fact verified  for any choice of such high-order marginals (provided $\ma{K}$ stays SDP with eigenvalues between $0$ and $1$). This leads to the second point: given that these higher-order inclusion probabilities offer extra degrees of freedom and due to simple variance arguments (theorems of Section~\ref{subsec:variance}), we show that DPP-based random sampling necessarily yields better performance than its independent counterparts. On the theoretical side, additional work is required to specify precisely the minimum number of required samples guaranteeing the coreset property. We expect that further research on concentration properties of strongly Rayleigh measures, involving not only first order marginals, but higher-order ones, should enable to move forward in this direction. The third and final point is the rebalancing property of polynomial DPPs: without any prior density-like estimation, polynomial DPPs such as the ones described in Section~\ref{sec:polynomial-dpps} provide samples that are asymptotically independent of the underlying distribution of the data. Even though this result is only asymptotic, it is yet another argument in favor of DPP sampling for coresets.

From an application point-of-view, the coreset theorems were applied to the ubiquitous $k$-means and linear regression problems. Given a data set, the ideal $L$-ensemble $\ma{L}$ adapted to these problems is untractable and we thus propose two  heuristics, one via random Fourier features of the Gaussian kernel, and one based on the Vandermonde matrix, in order to efficiently sample a DPP that has the desirable properties to sample coresets (if not provably, at least quantitatively). To sample a subset of size $m$, our heuristics run resp. in $\mathcal{O}(nm^2+nmd)$ and $\mathcal{O}(nm^2)$. This is more expensive than the sensitivity-based iid strategy, especially as the number of samples $m$ increases; but empirically provides better coresets on different artificial and real-world data sets. 

Finally, this work calls for several extensions. First of all, two likely difficult theoretical questions: how to improve the concentration inequalities for DPPs? (such improvements would directly benefit the coreset theorem's bounds). How to find the optimal DPP kernel given a data set? (which asks in fact difficult questions in frame theory). Also, these DPP sampling schemes should be extended to the streaming and/or distributed settings.

\acks{This work was partly funded by the ANR GenGP (ANR-16-CE23-0008), the LabEx PERSYVAL-Lab (ANR-11-LABX-0025-01), the CNRS PEPS I3A (Project RW4SPEC), the Grenoble Data Institute (ANR-15-IDEX-02) and the LIA CNRS/Melbourne Univ Geodesic.}


\newpage

\appendix

\section{Proof of Theorem~\ref{thm:main}}
\label{app:proof_thm_main}

\begin{proof}
	The theorem consists in proving that Eq.~\eqref{eq:coresets} is true under the assumptions of Theorem~\ref{thm:main}. We follow a classical proof scheme from compressed sensing~\citep{baraniuk_simple_2008}, in four steps:
	\begin{enumerate}
		\item we first use concentration arguments for a given $\theta\in\Theta$. 
		\item we then build an $\epsilon$-net paving the space of parameters $\Theta$. 
		\item via the union bound, we obtain the result for all $\theta$ in the $\epsilon$-net.
		\item via the Lipschitz property of $f$, we obtain the desired result for all $\theta\in\Theta$.
	\end{enumerate}
	
	\textbf{Step 1} (Concentration around $\theta\in\Theta$) DPPs are instances of sampling schemes that are strongly Rayleigh. Since strongly Rayleigh distributions are closed under truncation, any $m$-DPP is also strongly Rayleigh. We can thus apply the concentration results of~\citet{pemantle_concentration_2014}. For a given $\theta\in\Theta$, we have : $\forall\epsilon\in(0,1),  \forall\delta\in(0,1)$:
	\begin{align*}
	\mathbb{P}\left(\left|\frac{\hat{L}}{L}-1\right|\geq\epsilon\right) = \mathbb{P}\left(\left|\hat{L}-L\right|\geq\epsilon L\right)\leq\delta,
	\end{align*}
	provided that:
	\begin{align}
	\label{eq:to_replace_if_mDPP_1}
	m\geq \frac{8}{\epsilon^2}  C^2 \log{\frac{2}{\delta}},
	\end{align}
	with $C=\displaystyle \max_{i} \frac{f(x_i, \theta)}{L\bar{\pi}_i}$, where  $\bar{\pi}_i$ is a shorthand for $\pi_i/m$, and $\pi_i$ is the marginal probability of sampling element $i$.
	
	Using the same concentration results, we also have:
	\begin{align}
	\label{eq:conc_size}
	\forall(\epsilon,\delta)\in(0,1)^2,~~\mathbb{P}\left(\left|\frac{\sum_{i}\frac{\epsilon_i}{\pi_i}}{n}-1\right|\geq\epsilon\right) \leq\delta,
	\end{align}
	provided that:
	\begin{align}
	\label{eq:to_replace_if_mDPP_2}
	m\geq \frac{8}{\epsilon^2} \frac{1}{n^2\bar{\pi}_{\text{min}}^2} \log{\frac{2}{\delta}},
	\end{align}
	where $\bar{\pi}_{\text{min}} = \min_i \bar{\pi}_i$. 
	
	\textbf{Step 2} ($\epsilon'$-net of $\Theta$) Consider $\Gamma_{\epsilon'} = (\theta^*_1,\ldots,\theta^*_\eta)$ the smallest subset of $\Theta$ such that balls of radius $\epsilon'$ centered around the elements in $\Gamma_{\epsilon'}$ cover $\Theta$. $\Gamma_{\epsilon'}$ is called an $\epsilon'$-net of $\Theta$ and $\eta=|\Gamma_{\epsilon'}|$ its covering number. The covering property entails that:
	\begin{align*}
	\forall\theta\in\Theta\quad\exists\theta^*\in\Gamma_{\epsilon'}~~~\text{ s.t. }~~~d_\Theta(\theta,\theta^*)\leq\epsilon'.
	\end{align*}
	
	\textbf{Step 3.} (Union bound) 
	Write $\delta' = \delta / 2\eta$. 
	From step 1, we know that, $\forall\theta^*\in\Gamma_{\epsilon'}$:
	\begin{align*}
	\mathbb{P}\left(\left|\frac{\hat{L}}{L}-1\right|\geq\epsilon\right) \leq\delta'
	\end{align*}
	provided that:
	\begin{align*}
	m\geq \frac{8}{\epsilon^2}  C^2 \log{\frac{2}{\delta'}}.
	\end{align*}
	From the union bound, we have:
	\begin{align*}
	\mathbb{P}\left(\forall\theta^*\in\Gamma_{\epsilon'},\quad \left|\frac{\hat{L}}{L}-1\right|\leq \epsilon\right)\geq 1-\sum_{\theta^*\in\Gamma}\delta' = 1-\frac{\delta}{2},
	\end{align*}
	provided that:
	\begin{align}
	\label{eq:pre_condition}
	m\geq \frac{8}{\epsilon^2} \left(\max_{\theta^*\in\Gamma_{\epsilon'}} C\right)^2 \log{\frac{4\eta}{\delta}}
	\end{align}
	Given that $\bar{\pi}_i$ will \textit{in fine} be independent of $\theta$ (as we want the coreset property to be true for all $\theta\in\Theta$),
	\begin{align}
	\max_{\theta^*\in\Gamma_{\epsilon'}} C &= \max_{\theta^*\in\Gamma_{\epsilon'}}\max_{i} \frac{f(x_i, \theta)}{L\bar{\pi}_i}\\
	&=\max_{i} \frac{1}{\bar{\pi}_i}\max_{\theta^*\in\Gamma_{\epsilon'}}\frac{f(x_i, \theta)}{L}\\
	&\leq \max_{i} \frac{1}{\bar{\pi}_i} \max_{\theta\in\Theta}\frac{f(x_i, \theta)}{L} = \max_{i} \frac{\sigma_i}{\bar{\pi}_i}\label{eq:intro_sigma},
	\end{align}
	where we see how the sensitivity $\sigma_i$ naturally arises in the proof. Eq.~\eqref{eq:intro_sigma} entails that 
	Eq.~\eqref{eq:pre_condition} is verified if $m\geq m_1$ with
	\begin{align*}
	m_1^ = \frac{8}{\epsilon^2} \left(\max_{i} \frac{\sigma_i}{\bar{\pi}_i}\right)^2 \log{\frac{4\eta}{\delta}}.
	\end{align*}

	Write $\delta'' = \delta / 2$. From Eq.~\eqref{eq:conc_size}, we have:
	\begin{align*}
	\mathbb{P}\left(\left|\frac{\sum_{i}\frac{\epsilon_i}{\pi_i}}{n}-1\right|\geq\epsilon\right) \leq\delta'',
	\end{align*}
	provided that $m\geq m_2$ with
	\begin{align*}
	m_2=\frac{8}{\epsilon^2n^2\bar{\pi}_{\text{min}}^2} \log{\frac{4}{\delta}}.
	\end{align*}
	We have (with the union bound again):
	\begin{align*}
	\mathbb{P}&\left( \left|\frac{\sum_{i}\frac{\epsilon_i}{\pi_i}}{n}-1\right|\leq\epsilon \quad \text{AND} \quad \forall\theta^*\in\Gamma_{\epsilon'},\quad \left|\frac{\hat{L}}{L}-1\right|\leq \epsilon\right)\\
	&\qquad\qquad\qquad\geq 1-\delta/2-\delta'' = 1-\delta,
	\end{align*}
	provided that:
	\begin{align*}
	m\geq \max (m_1, m_2).
	\end{align*}
	
	\textbf{Step 4} (Continuity argument) 
	Suppose that $m\geq \max (m_1^*, m_2^*)$ with $m^*_1$, $m^*_2$ as defined in the theorem. The result of step 3 with $\epsilon \leftarrow \epsilon/2$ states that, with probability at least $1-\delta$, one has:
	\begin{align}
	\label{eq:concentration_eps_net}
	\left|\frac{\sum_{i}\frac{\epsilon_i}{\pi_i}}{n}-1\right|\leq\frac{\epsilon}{2}\quad\text{AND}\quad\forall\theta^*\in\Gamma_{\epsilon'}, ~~\left|\frac{\hat{L}}{L}-1\right|\leq \frac{\epsilon}{2}.
	\end{align}
	
	We now look for the maximum value of $\epsilon'$ such that Eq.~\eqref{eq:concentration_eps_net} implies the following desired result:
	\begin{align}
	\label{eq:desired_result}
	\forall\theta\in\Theta, \qquad\left|\frac{\hat{L}}{L}-1\right|\leq \epsilon.
	\end{align}
	
	Consider $\theta\in\Theta$. By the covering property of $\Gamma_{\epsilon'}$, we have:
	\begin{align*}
	\exists\theta^*\in\Gamma_{\epsilon'} ~\text{ s.t. } ~d_\Theta(\theta,\theta^*)\leq\epsilon'.
	\end{align*}
	Moreover, as $f$ is $\gamma$-Lipschitz, $\forall x_i\in\mathcal{X}$:
	\begin{align}
	\label{eq:Lipschitz}
	|f(x_i,\theta)-f(x_i,\theta^*)|\leq \gamma ~d_\Theta(\theta,\theta^*)\leq \gamma \epsilon'.
	\end{align}
	Thus, using Eqs.~\eqref{eq:Lipschitz} and then~\eqref{eq:concentration_eps_net}:
	\begin{align*}
	\hat{L}(\mathcal{X},\theta)&\leq \hat{L}(\mathcal{X},\theta^*) + \gamma\epsilon'\sum_i \frac{\epsilon_i}{\pi_i}\\
	&\leq (1+\frac{\epsilon}{2})(L(\mathcal{X},\theta^*) + n\gamma\epsilon').
	\end{align*}
	Also, using Eq.~\eqref{eq:Lipschitz} again:
	\begin{align*}
	L(\mathcal{X},\theta^*)\leq L(\mathcal{X},\theta) + n\gamma\epsilon'.
	\end{align*}
	Thus:
	\begin{align}
	\label{eq:higher_bound}
	\hat{L}(\mathcal{X},\theta)&\leq (1+\frac{\epsilon}{2})L(\mathcal{X},\theta) + 2n\gamma\epsilon'(1+\frac{\epsilon}{2}).
	\end{align}
	Similarly, for the lower bound, one obtains:
	\begin{align}
	\label{eq:lower_bound}
	(1-\frac{\epsilon}{2})(L(\mathcal{X},\theta) - 2n\gamma\epsilon') \leq \hat{L}(\mathcal{X},\theta)
	\end{align}
	In order for Eqs~\eqref{eq:higher_bound} and~\eqref{eq:lower_bound} to imply Eq.\eqref{eq:desired_result}, we need:
	\begin{align*}
	2n\gamma\epsilon'(1+\frac{\epsilon}{2}) \leq \frac{\epsilon}{2}L(\mathcal{X},\theta),
	\end{align*}
	\ie:
	\begin{align*}
	\epsilon'\leq \frac{\epsilon L(\mathcal{X},\theta)}{4n\gamma(1+\frac{\epsilon}{2})}\leq \frac{\epsilon L(\mathcal{X},\theta)}{6n\gamma}.
	\end{align*}
	In order for this condition to be true for all $\theta$, we choose:
	\begin{align}
	\label{eq:epsilonprime}
	\epsilon' = \frac{\epsilon\min_{\theta\in\Theta} L(\mathcal{X},\theta)}{6n\gamma} = \frac{\epsilon L^{\text{opt}}}{6n\gamma}=  \frac{\epsilon \langle f\rangle_\text{opt}}{6\gamma}.
	\end{align}
	
	\textbf{Concluding the proof.}
	Consider $\mathcal{S}$ a sample from a DPP with $L$-ensemble $\ma{L}$, with marginal probabilities $\pi_i$ and normalized marginal probabilities $\bar{\pi}_i=\pi_i/m$.  Consider $\epsilon\in(0,1)$ and $\delta\in(0,1)$. Define $\epsilon'$ as in Eq.~\eqref{eq:epsilonprime} and $\Gamma$ the set of centers of the $\eta$ balls of radius $\epsilon'$ covering the parameter space. We showed that if $m\geq \max (m_1^*, m_2^*)$, then $\mathcal{S}$ is an $\epsilon$-coreset with probability at least $1-\delta$. 
\end{proof}

\section{Coreset results for DPPs}
\label{app:DPPs}

\begin{theorem}[DPP for coresets] 
	\label{thm:main_DPP}Consider $\mathcal{S}$ a sample from a DPP with $L$-ensemble $\ma{L}$, and an average number of sample $\mu=\sum_i\frac{\lambda_i}{1+\lambda_i}$.  Let $\epsilon\in(0,1)$, $\delta\in(0,1)$. Denote by $\eta$ the minimum number of balls of radius $\epsilon \langle f\rangle_\text{opt} / 6 \gamma$ necessary to cover $\Theta$. 
	With probability higher than $1-\delta$, $\mathcal{S}$ is a $\epsilon$-coreset provided that 
	\begin{align*}
	\mu\geq \mu^*=\max(\mu_1^*, \mu_2^*)
	\end{align*}
	with:
	\begin{align*}
	\mu^*_1 &= \frac{32}{\epsilon^2} \left(\epsilon \max_{i} \frac{\sigma_i}{\bar{\pi}_i} + 4 \left(\max_{i} \frac{\sigma_i}{\bar{\pi}_i}\right)^2 \right) \log{\frac{10\eta}{\delta}},\\
	\mu^*_2 &= \frac{32}{\epsilon^2} \left(\frac{\epsilon}{n\bar{\pi}_\text{min}} + \frac{4}{n^2\bar{\pi}_\text{min}^2} \right) \log{\frac{10}{\delta}},
	\end{align*}
	and $\forall i,~\bar{\pi}_i = \pi_i / \mu$.
\end{theorem}

\begin{proof}
	According to~\citet{pemantle_concentration_2014}, replace Eq.~\eqref{eq:to_replace_if_mDPP_1} by:
	\begin{align*}
	\mu \geq \frac{16}{\epsilon^2} \left(\epsilon C + 2C^2\right)  \log{\frac{5}{\delta}},
	\end{align*}
	with $C=\displaystyle \max_{i} \frac{f(x_i, \theta)}{L\bar{\pi}_i}$, where  $\bar{\pi}_i$ is a shorthand for $\pi_i/\mu$; 
	and Eq.~\eqref{eq:to_replace_if_mDPP_2} by:
	\begin{align*}
	\mu \geq \frac{16}{\epsilon^2n\bar{\pi}_{\text{min}}}\left(\epsilon + \frac{2}{n\bar{\pi}_{\text{min}}}\right) \log{\frac{5}{\delta}},
	\end{align*}	
	and change accordingly the rest of the proof. 
\end{proof}
For the same reasons as the $m$-DPP case, we have:
\begin{corollary}
	If $n\sigma_\text{min}\geq1$, then $\mu_1^* \geq \mu_2^*$ and the coreset property of Theorem~\ref{thm:main_DPP} is verified if:
	\begin{align*}
	\mu\geq \mu^*=\frac{32}{\epsilon^2} \left(\epsilon \max_{i} \frac{\sigma_i}{\bar{\pi}_i} + 4 \left(\max_{i} \frac{\sigma_i}{\bar{\pi}_i}\right)^2 \right) \log{\frac{10\eta}{\delta}}.
	\end{align*}
\end{corollary}

\begin{corollary}
	If there exists $\alpha> 0$ and $\beta \geq 1$ such that: 
	\begin{align*}
	\forall i ~~~~~~\qquad &\alpha\sigma_i\leq\pi_i\leq\alpha\beta\sigma_i,\\
	\text{and}~\qquad 
	&\frac{\alpha}{\beta}\geq\frac{32}{\epsilon^2} (\epsilon+4\mathfrak{S}) \log{\frac{10n}{\delta}},
	\end{align*}
	then $\mathcal{S}$ is a $\epsilon$-coreset with probability at least $1-\delta$. In this case, the expected number of samples verifies:
	\begin{align*}
	\mu\geq\frac{32}{\epsilon^2} \beta\mathfrak{S}(\epsilon+4\mathfrak{S}) \log{\frac{10n}{\delta}}.
	\end{align*}
\end{corollary}

\section{Proof of Theorem~\ref{thm:balanced_thm}}
\label{app:proof_asymptotic_thm}

We split the proof into two parts. We first show the convergence of the discrete intensity function to its continuous limit (as $n$ goes to infinity). We then deal with the outer limit (as the degree $\phi$ goes to infinity) to prove the theorem. 

\subsection{Discrete-to-continuous limit}
\label{sec:rebalancing-main-result}

The discrete DPP defined in Section~\ref{sec:polynomial-dpps} has a natural continuous counterpart: namely,
instead of sampling $\mathcal{S}$ from $\mathcal{X}$, we directly sample $\mathcal{S}$ from $\Omega$. The
corresponding orthogonal polynomials are now orthogonal w.r.t. the measure
$\mu$, and the inclusion probabilities turn into intensity functions (in the
continous limit, any given point in $\Omega$ has probability 0 of being
selected, which is why we need to integrate over an $\epsilon$ ball). The
counterpart of the discrete marginal kernel $\ma{K}=\ma{QQ^\top}$ is now a positive-definite kernel
\begin{equation*}
k_\mu(\vec{x},\vec{y}) = \sum_{i=1}^m q_{\mu,i}(\vec{x}) q_{\mu,i}(\vec{y}) 
\end{equation*}
where $q_{\mu,i}$ is the i'th orthogonal polynomial under $\mu$.
As in the discrete case, the intensity function for the continuous DPP simply
equals the diagonal values of the kernel, i.e:
\begin{equation*}
\iota(\vec{y}) = k_\mu(\vec{y},\vec{y})
\end{equation*}
We need to introduce the Christoffel functions of a measure. 
The Christoffel function of $\mu$ is defined as:
\begin{equation}
\label{eq:christoffel-function}
\lambda_{\mu,\phi}(\vec{y}) = \min_{f \in \Pi^d_\phi} \frac{\int f(\vec{x})^2 d \mu}{f(\vec{y})^2}
\end{equation}
where $\Pi^d_\phi$ is the set of polynomials in $\mathbb{R}^d$ of degree less than or equal
to $\phi$. Re-expressing $f$ in the orthonormal basis for $\mu$, and solving for the argmin in \eqref{eq:christoffel-function}, we find:
\begin{equation*}
\lambda_{\mu,\phi}(\vec{y}) = \frac{1}{k_\mu(\vec{y},\vec{y})}
\end{equation*}
so that the intensity function of the (continuous) polynomial DPP is just one
over the Christoffel function. The argument is also valid for the discrete case,
replacing $\mu$ with the empirical distribution $\mu_n = (1/n) \sum
\delta_{\vec{x}_i}$. We may rewrite the empirical Christoffel function as:
\begin{equation*}
\lambda_{\mu_n,\phi}(\vec{y}) = \min_{f \in \Pi^d_\phi,f(\vec{y})=1} \frac{1}{n} \sum_{i=1}^n {f(\vec{x}_i)^2}
\end{equation*}
Convergence of the empirical Christoffel function to its continuous limit is
proved formally by \citet[Theorem 3.11]{lasserre_empirical_2017}, and
is easy to see from the formula above ($\lambda_{\mu_n,\phi}(\vec{y})$ is just the
minimum of a quadratic empirical functional). In the large $n$ limit, we have:

\begin{equation*}
\lim_{n \rightarrow \infty} \lambda_{\mu_n,\phi}(\vec{y}) = \lambda_{\mu,\phi}(\vec{y})
\end{equation*}
a.s., uniformly in $\vec{y} \in D$. $\lambda_{\mu,\phi}(\vec{y})$ for $\vec{y} \in D$ is
bounded below in convex domains~\citep{prymak_upper_2017}, so that
convergence of the inclusion probabilities to the intensity function follows:
\begin{equation*}
\lim_{n \rightarrow \infty} \mathbb{P}(\vec{x}_i \in \mathcal{S}| \mathcal{X}) = \iota_{\phi}(\vec{x}_i)
\end{equation*}
For the unconditional intensity measure, we need to average the left-hand side
over $\mathcal{X}$:
\begin{align*}
\lim_{n\rightarrow \infty}I_{n,\phi}(\mathcal{A}) &=  \lim_{n \rightarrow \infty}  \mathbb{E}_{\mathcal{X},\mathcal{S}} \left\{  \sum_{s_i \in \mathcal{S}} \mathbb{I} \left( s_i \in \mathcal{A} \right)\right\} \\
&=  \lim_{n \rightarrow \infty}  \mathbb{E}_{\mathcal{X}} \left\{  \sum_{x_i \in \mathcal{X}} \mathbb{P}(\vec{x}_i \in \mathcal{S}| \mathcal{X}) \mathbb{I} \left( x_i \in \mathcal{A} \right)\right\} \\
\end{align*}
The quantities in the expectation are bounded ($\leq 1$), and by dominated
convergence we may interchange the limit and the expectation, so that:
\begin{equation}
\label{eq:uncond-intens-function}
\lim_{n\rightarrow \infty}I_{n,\phi}(\vec{y})  =   \int_\mathcal{A} \iota_\phi(\vec{y}) d\mu(\vec{y})
\end{equation}
uniformly in $\vec{y} \in D$.

\subsection{Large-m asymptotics}
\label{sec:large-m-asymptotics}

For the next step, we use the fact that asymptotics of Christoffel functions are
well-studied. We let $\phi \rightarrow \infty$, in which case Theorem 1.5 in~\cite{kroo_christoffel_2013} gives us:
\begin{equation}
\label{eq:large-r-asymp}
\lim_{\phi\rightarrow \infty}\frac{\iota_\phi(\vec{y})}{k_{\nu,\phi}(\vec{y},\vec{y})} = \frac{1}{\mu'(\vec{y})} 
\end{equation}
Here $k_{\nu,\phi}$ is the projection kernel for orthogonal polynomials of degree
$\l$ under the Lebesgue measure. Note that $\iota_\phi(\vec{y})$ and
$k_{\nu,\phi}(\vec{y},\vec{y})$ both integrate to $m$ over $\Omega$, and
$\frac{1}{k_{\nu,\phi}(\vec{y},\vec{y})}$ is the Christoffel function for $\nu$, which
tends to a well-defined limit independent of $\mu$. Injecting
\eqref{eq:large-r-asymp} into \eqref{eq:uncond-intens-function}   proves our result.

\section{Proof of three Lemmas}
\label{app:proof_lemmas}

\begin{lemma}
	\label{lemma:sensi_1means}
	In the $1$-means problem (the $k$-means problem with $k=1$), and supposing without loss of generality that the data is centered (\ie: $\sum_j x_j = 0$), we have:
	\begin{align*}
	\sigma_i = \frac{1}{n}\left(1+\frac{\norm{x_i}^2}{v}\right),
	\end{align*}
	where $v=\frac{1}{n}\sum_{x\in\mathcal{X}} \norm{x}^2$. Thus:
	\begin{align*}
	\mathfrak{S}=\sum_i \sigma_i=2.
	\end{align*}
\end{lemma}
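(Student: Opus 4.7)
The plan is to expand both $f(x_i,c)$ and $L(\mathcal{X},c)$ explicitly, exploit the centering assumption to eliminate cross terms in the denominator, reduce the maximization to a one-dimensional problem, and then solve it in closed form.

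First I would write, for any candidate centroid $c\in\mathbb{R}^d$,
\begin{align*}
L(\mathcal{X},c)=\sum_{j}\|x_j-c\|^2=\sum_j\|x_j\|^2-2c\cdot\sum_j x_j + N\|c\|^2 = N(v+\|c\|^2),
\end{align*}
where the centering hypothesis $\sum_j x_j=0$ kills the cross term. The key observation is that $L$ depends on $c$ only through $\|c\|$: so for fixed radius $\|c\|=t$, maximizing $f(x_i,c)=\|x_i\|^2-2c\cdot x_i+t^2$ forces $c$ to be anti-parallel to $x_i$ (assuming $x_i\neq 0$; the degenerate case is handled separately, giving $\sigma_i=1/N$ in agreement with the formula). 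This reduces the problem to the scalar optimization
\begin{align*}
\sigma_i=\max_{\lambda\in\mathbb{R}}\; \frac{\|x_i-\lambda x_i\|^2}{N(v+\lambda^2\|x_i\|^2)} = \max_{\lambda\in\mathbb{R}}\; \frac{(1-\lambda)^2\|x_i\|^2}{N(v+\lambda^2\|x_i\|^2)},
\end{align*}
where I have parametrized $c=\lambda x_i$ (covering both orientations as $\lambda$ ranges over $\mathbb{R}$).

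Next I would differentiate with respect to $\lambda$ and set the derivative to zero. The critical point equation is a quadratic in $\lambda$, whose two roots are $\lambda=1$ (the minimum, giving $f=0$) and $\lambda=-v/\|x_i\|^2$. Substituting the latter back into the ratio gives
\begin{align*}
\frac{\|x_i-\lambda x_i\|^2}{N(v+\lambda^2\|x_i\|^2)} = \frac{(\|x_i\|^2+v)^2/\|x_i\|^2}{N\,v(\|x_i\|^2+v)/\|x_i\|^2} = \frac{1}{N}\left(1+\frac{\|x_i\|^2}{v}\right),
\end{align*}
which is precisely the claimed expression.

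Finally I would check that this is the global maximum. The limit of the ratio as $|\lambda|\to\infty$ is $1/N$, and the value at the critical point $\lambda=-v/\|x_i\|^2$ is $\tfrac{1}{N}(1+\|x_i\|^2/v)\geq 1/N$, so the only other critical point ($\lambda=1$) being a minimum, the computed value is indeed the supremum over all $c\in\mathbb{R}^d$. I do not anticipate a real obstacle: the computation is a textbook Lagrangian-style optimization, and the only subtle point is justifying the reduction $c=\lambda x_i$, which relies entirely on the centering assumption that makes $L$ spherically symmetric in $c$.
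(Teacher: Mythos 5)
Your argument is correct and is essentially the same as the paper's: both reduce the $d$-dimensional optimization over $c$ to a one-variable calculus problem via a symmetry argument and then locate the nontrivial critical point of a rational function. The only cosmetic difference is that the paper minimizes the reciprocal $1/\sigma_i$ by foliating $\mathbb{R}^d$ into spheres centered at $x_i$ (optimizing over a radius $R$ and an angle), whereas you exploit the centering up front to make $L$ radially symmetric and restrict $c$ to the line $\lambda x_i$ — your critical point $\lambda=-v/\norm{x_i}^2$ corresponds exactly to the paper's $R^*=a/b$.
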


\begin{proof}
	By definition: 
	$$\frac{1}{\sigma_i}=\min_{c} \frac{\sum_x \norm{x-c}^2}{\norm{x_i-c}^2}.$$
	Consider $\mathcal{S}(x_i,R)$ the sphere centered on $x_i$ and radius $R\geq0$. We have that:
	\begin{align*}
	\min_{c} ~~ = ~~~~\min_{R\geq 0 } ~~~\min_{c\in\mathcal{S}}
	\end{align*}
	We thus have:
	\begin{align*}
	\frac{1}{\sigma_i}=\min_{R\geq 0 } \frac{1}{R^2}  \min_{c\in\mathcal{S}}  \sum_x \norm{x-c}^2.
	\end{align*}
	Writing $x-c= x - x_i - (c-x_i)$, we may write 
	\begin{align*}
	\sum_{x}\norm{x - c}^2 = nR^2 + &\sum_{x}\norm{x - x_i}^2  
	- 2R \norm{\sum_{x} x - x_i} \cos{\theta},
	\end{align*}
	with $\theta$ the angle formed by $\sum_{x} x - x_i$ and $c - x_i$. As the minimum is sought for $c$ on the sphere, the angle $\theta$ may take any value, such that the minimum is always attained with $\theta$ s.t. $\cos{\theta}=1$. We finally obtain:
	\begin{align*}
	\frac{1}{\sigma_i}=n+\min_{R\geq 0 } \frac{1}{R^2}  \left(\sum_{x}\norm{x - x_i}^2  - 2R \norm{\sum_{x} x - x_i} \right).
	\end{align*} 
	Studying analytically the function $f(R) = \frac{a-2bR}{R^2}$, its minimum is attained for $R^* = \frac{a}{b}$ and $f(R^*) = -\frac{b^2}{a}$, such that:
	\begin{align*}
	\frac{1}{\sigma_i} 
	=  n - \frac{||\sum_{x} x-x_i||^2}{\sum_{x}\norm{x-x_i}^2}.
	\end{align*}
	Supposing without loss of generality that the data is centered, \ie: $\sum_x x = 0$ and denoting $v=\frac{1}{n}\sum_x \norm{x}^2$, we have:
	\begin{align*}
	\frac{1}{\sigma_i} 
	=  n - \frac{n^2\norm{x_i}^2}{nv + n\norm{x_i}^2}.
	\end{align*}
	Inverting this equation yields:
	\begin{align*}
	\sigma_i 
	&= \frac{v+\norm{x_i}^2}{nv+n\norm{x_i}^2 - n \norm{x_i}^2}
	= \frac{1}{n}\left(1+\frac{\norm{x_i}^2}{v}\right)
	\end{align*}
\end{proof}

\begin{lemma}
	In the $k$-means problem, $n\sigma_{\text{min}}\geq 1$.
\end{lemma}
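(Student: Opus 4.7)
The plan is to reduce the $k$-means sensitivity to the $1$-means sensitivity already computed in the preceding lemma, and then to lower bound the latter by $1/N$. The key observation is that the parameter space for $k$-means contains the ``diagonal'' configurations $\theta=(c,c,\ldots,c)$ in which all $k$ centroids are placed at a single point $c\in\mathbb{R}^d$. On any such diagonal $\theta$, the cost $f(x,\theta)=\min_{c'\in\theta}\norm{x-c'}^2$ collapses to $\norm{x-c}^2$, so both $f(x_i,\theta)$ and $L(\mathcal{X},\theta)$ reduce exactly to their $1$-means values at centroid $c$. Taking the $\max$ in $\sigma_i^{k\text{-means}}=\max_\theta f(x_i,\theta)/L(\mathcal{X},\theta)$ over diagonal $\theta$ alone therefore yields the dominance
\begin{align*}
\sigma_i^{k\text{-means}}\;\geq\;\sigma_i^{1\text{-means}}.
\end{align*}

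Next, I would invoke the preceding lemma. Sensitivity is translation-invariant: shifting the data $x_j\mapsto x_j-t$ together with every centroid tuple $\theta\mapsto\theta-t$ leaves $f$ and $L$ unchanged, and $\theta\mapsto\theta-t$ is a bijection of the parameter space onto itself. Hence I may assume without loss of generality that the data is centered, so the preceding lemma applies and gives
\begin{align*}
\sigma_i^{1\text{-means}}\;=\;\frac{1}{N}\left(1+\frac{\norm{x_i}^2}{v}\right)\;\geq\;\frac{1}{N},
\end{align*}
the inequality being trivial since $\norm{x_i}^2\geq 0$. Chaining yields $\sigma_i^{k\text{-means}}\geq 1/N$ for every $i\in\{1,\ldots,N\}$, and in particular $N\sigma_\text{min}\geq 1$.

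There is essentially no real obstacle here. The argument is a one-line reduction once one notices that the diagonal configurations are legitimate $k$-means parameters. The only point worth spelling out carefully in the write-up is the translation-invariance of $\sigma_i$, which is what legitimises the ``centered data'' hypothesis imported from the first lemma.
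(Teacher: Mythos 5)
Your proof is correct, but it takes a genuinely different route from the paper. You reduce the $k$-means sensitivity to the $1$-means sensitivity by observing that the diagonal configurations $\theta=(c,\ldots,c)$ are admissible parameters on which $f$ and $L$ collapse to their $1$-means values, so that $\sigma_i^{k\text{-means}}\geq\sigma_i^{1\text{-means}}\geq 1/N$ by the explicit formula of the preceding lemma. The paper instead gives a direct, self-contained estimate for general $k$: it fixes the Voronoi partition $\{\mathcal{V}_j\}$ of the optimal solution, restricts the minimization defining $1/\sigma_i$ to configurations with $c_1$ on a sphere of radius $R$ around $x_i$ and the remaining centroids outside the corresponding ball, and optimizes explicitly over $R$, arriving at $1/\sigma_i\leq N-\|\sum_{x\in\mathcal{V}_1}(x-x_i)\|^2/(\sum_{x\in\mathcal{V}_1}\|x-x_i\|^2+\alpha)\leq N$. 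Your argument is shorter and makes the logical dependence on the first lemma explicit (including the translation-invariance point that justifies the ``centered data'' hypothesis, which the paper leaves implicit); the paper's computation is heavier but does not rely on the exact $1$-means formula and yields a marginally sharper intermediate bound.

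One caveat worth a sentence in a write-up: your reduction needs the $1$-means parameter space over which the first lemma's maximum is taken to embed diagonally into the $k$-means parameter space. The first lemma's proof optimizes over unrestricted $c\in\mathbb{R}^d$ (spheres of arbitrary radius around $x_i$), so you need the $k$-means sensitivity to likewise be a maximum over unrestricted centroid tuples rather than over $\mathcal{B}^k$ as in Corollary~\ref{corollary:DPP_kmeans}. This is exactly the convention the paper's own proof of the present lemma uses (its restricted configurations also place centroids arbitrarily far from the data), so your argument is on the same footing as the original, but the mismatch with $\Theta=\mathcal{B}^k$ is inherited rather than resolved.
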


\begin{proof}
	Consider $\theta^\text{opt}=(c_1^\text{opt}, \ldots, c_k^\text{opt})$ the optimal solution of $k$-means and $\{\mathcal{V}_1, \mathcal{V}_2, \ldots, \mathcal{V}_k\}$ their associated Voronoi sets. Consider $x_i\in\mathcal{X}$ and suppose, without loss of generality that $x_i\in\mathcal{V}_1$. Also, for any $x\in\mathcal{X}$, we denote by $c(x) = \argmin_{c\in\theta} \norm{x-c}^2$. We have:
	\begin{align*}
	\frac{1}{\sigma_i} &= \min_{c_1, \ldots, c_k} \frac{\sum_{x\in\mathcal{X}} \norm{x - c(x)}^2}{\norm{x_i - c(x_i)}^2}\\
	&=\min_{c_1, \ldots, c_k} \frac{\sum_{x\in\mathcal{V}_1} \norm{x - c(x)}^2}{\norm{x_i - c(x_i)}^2} + \sum_{j=2}^k \frac{\sum_{x\in\mathcal{V}_j} \norm{x - c(x)}^2}{\norm{x_i - c(x_i)}^2}
	\end{align*}
	Given that, by definition of $c(x)$, $\forall j, ~\norm{x-c(x)}^2 \leq \norm{x-c_j}^2$, we have:
	\begin{align*}
	\frac{1}{\sigma_i} 
	\leq \min_{c_1, \ldots, c_k} \frac{\sum_{x\in\mathcal{V}_1} \norm{x - c_1}^2}{\norm{x_i - c(x_i)}^2} + \sum_{j=2}^k \frac{\sum_{x\in\mathcal{V}_j} \norm{x - c_j}^2}{\norm{x_i - c(x_i)}^2}
	\end{align*}
	To further bound this quantity, let us constrain the domain over which the minimum is sought.  Consider $\mathcal{B}(x_i,R)$ the ball centered on $x_i$ and radius $R\geq0$. Consider $\mathcal{S}(x_i,R)$ its surface (\ie, the associated sphere). We have that:
	\begin{align*}
	\min_{c_1, \ldots, c_k} ~~\leq ~~~~\min_{R\geq 0 } ~~~\min_{c_1\in\mathcal{S}, (c_2,\ldots, c_k)\notin\mathcal{B}}
	\end{align*}
	Given this restricted search space, we have: $c(x_i) = c_1$ and $\norm{x_i - c_1}^2 = R^2$, and thus:
	\begin{align*}
	\frac{1}{\sigma_i} 
	~~\leq~~ &\min_{R\geq 0 } \frac{1}{R^2}  \min_{c_1\in\mathcal{S}} \left(\sum_{x\in\mathcal{V}_1} \norm{x - c_1}^2  
	+ \min_{(c_2,\ldots, c_k)\notin\mathcal{B}} \sum_{j=2}^k \sum_{x\in\mathcal{V}_j} \norm{x - c_j}^2\right)
	\end{align*}
	Now, one may show, for all $j=2, \ldots, k$, that:
	\begin{align*}
	\sum_{x\in\mathcal{V}_j} \norm{x - c_j}^2 = \sum_{x\in\mathcal{V}_j} \norm{x - c_j^\text{opt}}^2 + \#\mathcal{V}_j \norm{c_j-c_j^\text{opt}}^2,
	\end{align*}
	due to the fact that $c_j^\text{opt} = \frac{1}{\#\mathcal{V}_j} \sum_{x\in\mathcal{V}_j}  x$. 
	Given that the minimum of $\norm{c_j-c_j^\text{opt}}^2$ is necessarily smaller than $R^2$:
	\begin{align*}
	\min_{c_j\notin\mathcal{B}}	\sum_{x\in\mathcal{V}_j} \norm{x - c_j}^2 \leq \sum_{x\in\mathcal{V}_j} \norm{x - c_j^\text{opt}}^2 + \#\mathcal{V}_j R^2, 
	\end{align*}
	such that:
	\begin{align*}
	\frac{1}{\sigma_i} 
	\leq &  \min_{R\geq 0 } \frac{1}{R^2} 
	\min_{c_1\in\mathcal{S}} \left(\sum_{x\in\mathcal{V}_1} \norm{x - c_1}^2  
	+ \alpha + (n-\#\mathcal{V}_1)R^2\right)\\
	&=n-\#\mathcal{V}_1 + \min_{R\geq 0 } \frac{1}{R^2} \min_{c_1\in\mathcal{S}}  \left(\sum_{x\in\mathcal{V}_1} \norm{x - c_1}^2  
	+ \alpha\right)
	\end{align*}
	with $\alpha = L^{\text{opt}\backslash\mathcal{V}}$ the optimal $(k-1)$-means cost on $\mathcal{X}\backslash\mathcal{V}$.  
	Writing $x-c_1= x - x_i - (c_1-x_i)$, we may decompose $
	\sum_{x\in\mathcal{V}_1}\norm{x - c_1}^2$ in $R^2\#\mathcal{V}_1 + \sum_{x\in\mathcal{V}_1}\norm{x - x_i}^2  - 2R \norm{\sum_{x\in\mathcal{V}_1} x - x_i} \cos{\theta}$, 
	with $\theta$ the angle formed by $\sum_{x\in\mathcal{V}_1} x - x_i$ and $c_1 - x_i$. As the minimum is sought for $c_1$ on the sphere, the angle $\theta$ may take any value, such that the minimum is always attained with $\theta$ s.t. $\cos{\theta}=1$. We finally obtain, denoting $\forall x\in\mathcal{V}_1,~y = x-x_i$:
	\begin{align*}
	\frac{1}{\sigma_i} 
	\leq &  n + \min_{R\geq 0 } \frac{1}{R^2} \left(\sum_{x\in\mathcal{V}_1}\norm{y}^2  - 2R \norm{\sum_{x\in\mathcal{V}_1} y} + \alpha\right). 
	\end{align*}
	Studying analytically the function $f(R) = \frac{a-2bR+\alpha}{R^2}$, its minimum is attained for $R^* = \frac{a+\alpha}{b}$ and $f(R^*) = -\frac{b^2}{a+\alpha}$, such that:
	\begin{align*}
	\frac{1}{\sigma_i} 
	\leq &  n - \frac{||\sum_{x\in\mathcal{V}_1} y||^2}{\sum_{x\in\mathcal{V}_1}\norm{y}^2 + \alpha} \leq n.
	\end{align*}
	This is true for all $i$, and in particular for $\sigma_{\text{min}}$. 
\end{proof}

\begin{lemma}
	\label{lemma:sensi_lr}
	With the notations of Section~\ref{sec:DPP_for_LR}, the sensitivities in the linear regression problem verify:
	\begin{align*}
	\forall i\qquad	
	\sigma_i =x_i^\top \ma{H}^{-1} x_i + \frac{\left(y_i - y^*_i \right)^2}{\norm{y-y^*}^2}
	\end{align*}
	where $\ma{H}=\ma{X}^\top \ma{X}$ and $y^*$ reads $y^*=\ma{X}\theta^* =\ma{XH}^{-1}\ma{X}^\top y$. 
	Also:
	\begin{align*}
	\mathfrak{S}&=\sum_i \sigma_i=d+1.
	\end{align*}
	As a remark, note that the sensitivity is different from the usual definition of leverage score in the context of linear regression, which simply reads $l_i=x_i^\top\ma{H}^{-1}x_i$~\citep[see, \textit{e.g.},][]{hoaglin_hat_1978, chatterjee_sensitivity_1988, chatterjee_influential_1986, chen_statistical_2016}.
\end{lemma}
\begin{proof}
	We have:
	\begin{align*}
	\frac{1}{\sigma_i} = \min_{\theta\in\Theta} \frac{\sum_j (y_j-x_j^\top\theta)^2}{(y_i-x_i^\top\theta)^2}
	\end{align*}
	Let us write $\theta=u+v$ where $u$ is colinear to $x_i$ and $v$ is orthogonal to $x_i$. We obtain:
	\begin{align*}
	\frac{1}{\sigma_i} &= \min_{u,v} \frac{\sum_j (y_j-x_j^\top(u+v))^2}{(y_i-x_i^\top u)^2}\\
	&= \min_u \frac{1}{(y_i-x_i^\top u)^2}\left(\norm{y}^2+\min_v \left[(u+v)^\top \ma{H} (u+v) - 2(u+v)^\top \ma{X}^\top y\right] \right)\\
	&=\min_u \frac{1}{(y_i-x_i^\top u)^2}\left(\norm{y}^2+ u^\top \ma{H}u - 2u^\top \ma{X}^\top y + \min_v \left[v^\top \ma{H} v + 2u^\top \ma{H}v- 2v^\top \ma{X}^\top y\right] \right)
	\end{align*}
	where $\ma{H}=\sum_j x_j x_j^\top=\ma{X}^\top \ma{X}$. Let us first concentrate on solving:
	\begin{align*}
	\min_v v^\top \ma{H} v + 2u^\top \ma{H}v- 2v^\top \ma{X}^\top y = \min_v v^\top \ma{H} v + 2z^\top v
	\end{align*}
	with $z=\ma{H}^\top u - \ma{X}^\top y$. The minimum is to be found for $v$ orthogonal to $x_i$. We write the Lagrangian:
	\begin{align*}
	L(v,\lambda)=v^\top \ma{H} v + 2z^\top v-\lambda x_i^\top v.
	\end{align*}
	We solve it wrt $v$:
	\begin{align*}
	2\ma{H} v + 2z -\lambda x_i =0
	\end{align*}	
	\ie:
	\begin{align*}
	v = \ma{H}^{-1} \left(\frac{\lambda}{2} x_i -z \right).
	\end{align*}
	We know that $v$ should be orthogonal to $x_i$ such that:
	\begin{align*}
	0 = x_i^\top \ma{H}^{-1} \left(\frac{\lambda}{2} x_i -z \right)
	\end{align*}
	\ie:
	\begin{align*}
	\frac{\lambda}{2}= \frac{x_i^\top \ma{H}^{-1}z}{x_i^\top \ma{H}^{-1}x_i}.
	\end{align*}
	We finally have:
	\begin{align*}
	v^* = \ma{H}^{-1} \left(\frac{x_i^\top \ma{H}^{-1}z}{x_i^\top \ma{H}^{-1}x_i} x_i -z \right)
	\end{align*}
	and thus:
	\begin{align*}
	\min_v v^\top \ma{H} v + 2z^\top v = v^{*\top} \ma{H} v^* + 2z^\top v^* = \frac{(x_i^\top \ma{H}^{-1} z)^2}{x_i^\top \ma{H}^{-1} x_i}-z^\top \ma{H}^{-1} z.
	\end{align*}
	\ie:
	\begin{align*}
	\frac{1}{\sigma_i} &=\min_u \frac{1}{(y_i-x_i^\top u)^2}\left(\norm{y}^2+\frac{(x_i^\top u-x_i^\top \ma{H}^{-1}\ma{X}^\top y)^2}{x_i^\top \ma{H}^{-1} x_i}-y^\top \ma{X}\ma{H}^{-1}\ma{X}^\top y\right)\\
	&=\frac{1}{x_i^\top \ma{H}^{-1} x_i}\min_u \frac{\left(\norm{y}^2-y^\top \ma{X}\ma{H}^{-1}\ma{X}^\top y\right) x_i^\top \ma{H}^{-1} x_i + \left(x_i^\top u-x_i^\top \ma{H}^{-1}\ma{X}^\top y\right)^2}{\left(y_i-x_i^\top u\right)^2}
	\end{align*}
	Let us write $u=\alpha \frac{x_i}{\norm{x_i}^2}$. We have:
	\begin{align*}
	\frac{1}{\sigma_i} &= \frac{1}{x_i^\top \ma{H}^{-1} x_i}\min_\alpha \frac{a+(\alpha-b)^2}{(\alpha -c)^2}
	\end{align*}
	with: $a= \left(\norm{y}^2-y^\top \ma{X}\ma{H}^{-1}\ma{X}^\top y\right) x_i^\top \ma{H}^{-1} x_i$, $b=x_i^\top \ma{H}^{-1}\ma{X}^\top y$ and $c=y_i$.
	The minimum of $f(\alpha)= \frac{a+(\alpha-b)^2}{(\alpha -c)^2}$ is attained for $\alpha^*=\frac{a}{b-c}+b$ which entails:
	\begin{align*}
	f(\alpha^*)= \frac{a}{a+(b-c)^2}.
	\end{align*}
	And thus:
	\begin{align*}
	\frac{1}{\sigma_i} = \frac{\norm{y}^2-y^\top \ma{X}\ma{H}^{-1}\ma{X}^\top y}{\left(\norm{y}^2-y^\top \ma{X}\ma{H}^{-1}\ma{X}^\top y\right) x_i^\top \ma{H}^{-1} x_i+\left(x_i^\top \ma{H}^{-1}\ma{X}^\top y - y_i \right)^2}
	\end{align*}
	\ie:
	\begin{align*}
	\sigma_i = x_i^\top \ma{H}^{-1} x_i + \frac{\left(x_i^\top \ma{H}^{-1}\ma{X}^\top y - y_i \right)^2}{\norm{y}^2-y^\top \ma{X}\ma{H}^{-1}\ma{X}^\top y}.
	\end{align*}
	Writing $\theta^*=\ma{H}^{-1}\ma{X}^\top y$ the least-square solution to the problem, this is re-written:
	\begin{align*}
	\sigma_i = x_i^\top \ma{H}^{-1} x_i + \frac{\left(x_i^\top \theta^* - y_i \right)^2}{\norm{y}^2-\theta^{*\top}\ma{H}\theta^*}.
	\end{align*}
	Finally, denoting $y^*=\ma{X}\theta^*$:
	\begin{align*}
	\sigma_i &= x_i^\top \ma{H}^{-1} x_i + \frac{\left(y_i - y^*_i \right)^2}{\norm{y}^2-\norm{y^*}^2}\\
	&=x_i^\top \ma{H}^{-1} x_i + \frac{\left(y_i - y^*_i \right)^2}{\norm{y-y^*}^2}
	\end{align*}
	Thus:
	\begin{align*}
	\mathfrak{S}&=\sum_i \sigma_i=\text{Tr}(\ma{X}^\top \ma{H}^{-1}\ma{X})+\sum_i \frac{\left(y_i - y^*_i \right)^2}{\norm{y-y^*}^2}\\
	&=d+1.
	\end{align*}
\end{proof}

\section{The issue of outliers}
\label{remark:outliers}
Corollary~\ref{cor:opt_DPP} is applicable to cases where $\sigma_\text{max}$ is not too large. In fact, in order for $\alpha\sigma_i$ to be smaller than $\pi_i$, and thus smaller than $1$ as $\pi_i$ is a probability, $\alpha$ should always be set inferior to $\frac{1}{\sigma_\text{max}}$. Now, if $\sigma_\text{max}$ is so large that $\frac{1}{\sigma_\text{max}}\leq \frac{32}{\epsilon^2}\mathfrak{S}\log{\frac{4\eta}{\delta}}$, then, even by  setting $\beta$ to its minimum value $1$, there is no admissible $\alpha$ verifying both conditions~\eqref{eq:condition_beta_gamma1} and~\eqref{eq:condition_beta_gamma2}. Large values of $\sigma_i$ means strong outliers.\footnote{Sensitivities have indeed been shown to be good outlierness indicators~\citep{lucic_linear-time_2016}.} A simple workaround in this case is to separate the data in two: $\mathcal{X}_o=\{x_i \text{ s.t. } \sigma_i>\sigma^*\}$ the set of outliers and $\bar{\mathcal{X}}=\{x_i \text{ s.t. } \sigma_i\leq\sigma^*\}$ the others, where $\sigma^*$ is the threshold sensitivity over which a data point is considered as an outlier (it is discussed in the following). The initial cost $L$ may also be separated in two: $L=L_o + \bar{L}$ where 
\begin{align*}
L_o = \sum_{x\in\mathcal{X}_o} f(x,\theta) \qquad \text{ and }\qquad \bar{L} = \sum_{x\in\bar{\mathcal{X}}} f(x,\theta).
\end{align*}
Let us write $\bar{\sigma}_i$ the sensitivity of data point $i$ in $\bar{\mathcal{X}}$ and $\bar{\mathfrak{S}}=\sum_{x\in\bar{\mathcal{X}}}\bar{\sigma}_i$. Let us choose $\sigma^*$ to be the largest value in $[0,1]$ for which $\frac{1}{\bar{\sigma}_\text{max}}\geq \frac{32}{\epsilon^2}\bar{\mathfrak{S}}\log{\frac{4\eta}{\delta}}$ is verified. 
One can thus apply the corollary to $\bar{\mathcal{X}}$ to obtain $\bar{\mathcal{S}}$ such that:
$$\forall\theta\in\Theta\qquad(1-\epsilon)\bar{L}(\bar{\mathcal{X}},\theta)\leq\hat{\bar{L}}(\bar{\mathcal{S}},\theta)\leq(1+\epsilon)\bar{L}(\bar{\mathcal{X}},\theta). $$
Trivially, one may add to $\bar{\mathcal{S}}$ all outliers in $\mathcal{X}_o$ and associate to each of them a weight $1$ in the estimated cost. The resulting set $\mathcal{S}$ is thus necessarily a coreset for all datapoints:
$$\forall\theta\in\Theta\qquad(1-\epsilon)L\leq(1-\epsilon)\bar{L}+L_o\leq\hat{L}=\hat{\bar{L}}+L_o\leq(1+\epsilon)\bar{L}+L_o\leq (1+\epsilon)L.$$
The number of required samples is thus the number required for $\bar{\mathcal{S}}$ to be a coreset for $\bar{\mathcal{X}}$ plus the number of outliers in $\mathcal{X}_o$: $\mathcal{O}(|\mathcal{X}_o|+\frac{\bar{\mathfrak{S}}^2}{\epsilon^2}\log{\frac{\eta}{\delta}})$. The exact value of $\sigma^*$ is application and data dependent. In general, we expect it to be $\mathcal{O}(1)$, such that the number of outliers $|\mathcal{X}_o|$ may be considered as a constant and the number of required samples is of the order  
$\mathcal{O}(\frac{\mathfrak{S}^2}{\epsilon^2}\log{\frac{\eta}{\delta}})$. 


\section{Implementation}
\label{app:implementation}

\subsection{Approximating the kernel via Random Fourier Features}
\label{subsec:RFF}
In order to approximate $\ma{L}$ in time linear in $n$, we rely on random Fourier features (RFF)~\citep{rahimi_random_2008}. We briefly recall the RFF framework in the following.

Let us write $\kappa$ the Gaussian kernel that we use: $\kappa(\vec{t})=\exp(-\vec{t}^2/2\tau^2)$. Its Fourier transform is:
\begin{align*}
\hat{\kappa}(\vec{\omega})=\int_{\mathbb{R}^d}\kappa(\vec{t})\exp^{-i\vec{\omega}^\adjoint\vec{t}}\mbox{d}\vec{t}. 
\end{align*}
It has real values as 
$\kappa$ is symmetrical. One may write:
\begin{equation*}
\kappa(\vec{x},\vec{y})=\kappa(\vec{x}-\vec{y})=\frac{1}{Z}\int_{\mathbb{R}^d}\hat{\kappa}(\vec{\omega})\exp^{i\vec{\omega}^\adjoint(\vec{x}-\vec{y})}\mbox{d}\vec{\omega},
\end{equation*}
where, in order to ensure that $\kappa(\vec{x},\vec{x})=1$:
\begin{align*}
Z=\int_{\mathbb{R}^d}\hat{\kappa}(\vec{\omega})\mbox{d}\vec{\omega}.
\end{align*} 
According to Bochner's theorem, and due to the fact that $\kappa$ is positive-definite,  $\hat{\kappa}/Z$ is a valid
probability density function. 
$\kappa(\vec{x},\vec{y})$ may thus be interpreted as the expected value of $\exp^{i\vec{\omega}^\adjoint(\vec{x}-\vec{y})}$ 
provided that $\vec{\omega}$ is drawn from $\hat{\kappa}/Z$:
\begin{align*}
\kappa(\vec{x},\vec{y})=\mathbb{E}_{\vec{\omega}} \left(\exp^{i\vec{\omega}^\adjoint(\vec{x}-\vec{y})}\right)
\end{align*}
The distribution $\hat{\kappa}/Z$ from which $\omega$ should be drawn from may be shown to be $\mathcal{N}(\omega;0,1/\tau^2)$, 
where $\mathcal{N}(x;\mu,v)$ is the normal law:
\begin{align*}
\mathcal{N}(x;\mu,v)=\frac{1}{\sqrt{2v\pi}}\exp^{-\frac{(x-\mu)^2}{2v}}.
\end{align*}
In practice, we draw $r$ random Fourier vectors from $\hat{\kappa}/Z$: $$\Omega_r=(\vec{\omega}_1,\ldots,\vec{\omega}_r).$$
For each data point $\vec{x}_j$, we define a column feature vector associated 
to $\Omega_r$:
\begin{align*}
\vec{\psi}_j=\frac{1}{\sqrt{r}} [\cos(\vec{\omega}_1^\adjoint\vec{x}_j)|\cdots|\cos(\vec{\omega}_r^\adjoint\vec{x}_j)|
\sin(\vec{\omega}_1^\adjoint\vec{x}_j)|\cdots|\sin(\vec{\omega}_r^\adjoint\vec{x}_j)]^\adjoint\in\mathbb{R}^{2r},
\end{align*}
and call $\ma{\Psi}=\left(\vec{\psi}_1|\cdots|\vec{\psi}_n\right)\in\mathbb{R}^{2r\times n}$ the RFF matrix. Other embeddings are possible in the RFF framework, but this one was shown to be the most appropriate to the Gaussian kernel~\citep{sutherland_error_2015}. 
As $r$ increases, $\kappa(\vec{x}_i,\vec{x}_j)$ concentrates around its expected value:
$\vec{\psi}_i^\adjoint\vec{\psi}_j\simeq \kappa(\vec{x}_i,\vec{x}_j)$.  
The Gaussian kernel matrix is thus approximated via:
\begin{align*}
\ma{L}\simeq \ma{\Psi}^\adjoint\ma{\Psi}.
\end{align*}
Computing the RFF matrix requires $\mathcal{O}(nrd)$ operations.

\begin{remark}
	How many random features $r$ should we choose? Firstly, note that the 
	entry-wise concentration of $\ma{\Psi}^\adjoint\ma{\Psi}$ around its expected value $\ma{L}$ is controlled by a multiplicative error $\epsilon$ provided that  $r\geq\mathcal{O}(d/\epsilon^2)$~\citep{rahimi_random_2008}. Thus, $r$ should at least be of the order of the dimension $d$ if one wants a proper approximation of the Gaussian kernel. However, this is not our goal here. In fact, what is needed is to obtain in average $\mu$ samples from a DPP with $L$-ensemble $\ma{\Psi}^\adjoint\ma{\Psi}$. The maximum number of samples of such a DPP is the rank of $\ma{\Psi}$, such that $r$ should necessarily be chosen larger than $\mu$. In the following, we thus set $r$ to simply be a few times $\mu$.
\end{remark}

\subsection{Fast sampling of DPPs}

In order to sample a DPP from a $L$-ensemble given its eigenvectors $\{\fou_k\}$ and eigenvalues $\lambda_k$, one may follow Algorithm~1 of~\cite{kulesza_determinantal_2012}, originally from~\citet{hough_determinantal_2006}. This algorithm runs in $\mathcal{O}(n\mu^3)$ in average. 
The limiting step of the overall sampling algorithm is the $\mathcal{O}(n^3)$ cost of the diagonalisation of $\ma{L}$. Thankfully, the RFFs not only provide us with an approximation of $\ma{L}$ in linear time, it also provides us with a dual representation, \ie, a representation of $\ma{L}$ in the form 
\begin{align*}
\ma{L}=\ma{\Psi}^\adjoint\ma{\Psi}.
\end{align*}
Thus, we may circumvent the prohibitive diagonalization cost of $\ma{L}$ and only diagonalize its dual form:
\begin{align*}
\ma{C} = \ma{\Psi}\ma{\Psi}^\adjoint  \in\mathbb{R}^{2r\times 2r},
\end{align*}
costing only $\mathcal{O}(nr^2)=\mathcal{O}(n\mu^2)$ (time to compute $\ma{C}$ from $\ma{\Psi}$ and to compute the low-dimensional diagonalization). $\ma{C}$'s eigendecomposition yields:
\begin{align*}
\ma{C} = \ma{V}\ma{D}\ma{V}^\adjoint,
\end{align*}
with $\ma{V}=(\vec{v}_1|\ldots|\vec{v}_{2r})$ the orthonormal basis of eigenvectors and $\ma{D}$ the diagonal matrix of eigenvalues such that $0\leq \nu_1\leq\ldots\leq \nu_{2r}$. 

Note that all eigenvectors associated to non-zero eigenvalues of $\ma{L}$ can be recovered from $\ma{C}$'s eigendecomposition~\citep[see, \emph{e.g.},][Proposition 3.1]{kulesza_determinantal_2012}.  More precisely, if $\vec{v}_k$ is an eigenvector of $\ma{C}$ associated to eigenvalue $\nu_k$, then:
\begin{align*}
\vec{u}_k = \frac{1}{\sqrt{\nu_k}} \ma{\Psi}^\adjoint \vec{v}_k
\end{align*}
is a normalized eigenvector of $\ma{L}$ associated to the same eigenvalue. 

In the case of such a dual representation, two standard approaches are used in the literature: 1)~either follow Algorithm~1 of~\cite{kulesza_determinantal_2012} with the reconstructed eigenvectors $\ma{U}=\ma{\Psi}^\adjoint\ma{V}\ma{D}^{-1/2}$ as inputs, running in $\mathcal{O}(n\mu^3)$; 2)~or follow Algorithm~3 of~\cite{kulesza_determinantal_2012} with the dual eigendecomposition $\{\vec{v}_k\}$ and $\{\nu_k\}$ as inputs, running in $\mathcal{O}(nr\mu^2+r^2\mu^3)$. 
Both approaches are nevertheless suboptimal and we show in~\cite{tremblay_optimized_2018} that the first (resp. second) one has an equivalent formulation running in $\mathcal{O}(n\mu^2)$ (resp. $\mathcal{O}(n\mu r)$). In this paper, we work with the following sampling strategy, given the dual eigendecomposition $\{\vec{v}_k\}$ and $\{\nu_k\}$:
\begin{enumerate}
	\item[i/] Sample eigenvectors. Draw $n$ Bernoulli variables with parameters $\nu_k/(1+\nu_k)$: for $k=1,\ldots,2r$, add $k$ to the set of sampled indices $\mathcal{J}$ with probability $\nu_k/(1+\nu_k)$. We generically denote by $J$ the number of elements in $\mathcal{J}$. Note that the expected value of $J$ is $\mu$.
	\item[ii/] Run Algorithm~\ref{alg:sampling_DPP_efficient} to sample a $J$-DPP with projective $L$-ensemble $\ma{P}=\ma{W} \ma{W}^\adjoint$ where  $\ma{W}\in\mathbb{R}^{n\times J}$  concatenates all the reconstructed  eigenvectors $\fou_k=\frac{1}{\sqrt{\nu_k}} \ma{\Psi}^\adjoint \vec{v}_k$ such that $k\in\mathcal{J}$.
\end{enumerate}

\begin{algorithm}
	\caption{Efficient $J$-DPP sampling algorithm with projective $L$-ensemble $\ma{P}=\ma{W} \ma{W}^\adjoint$}
	\label{alg:sampling_DPP_efficient}
	\begin{algorithmic}
		\Input $\ma{W}\in\mathbb{R}^{n\times J}$ such that $\ma{W}^\adjoint \ma{W}=\ma{I}_J$\\
		Write $\forall i, ~~\vec{y}_i=\ma{W}^\adjoint\vec{\delta_i}\in\mathbb{R}^{J}$.\\
		$\mathcal{S} \leftarrow \emptyset$\\
		Define $\vec{p}\in\mathbb{R}^n$ : $\forall i, \quad p(i) = \norm{\vec{y}_i}^2$\\
		\textbf{for} $n=1,\ldots,J$ \textbf{do}:\\
		\hspace{0.5cm}$\bm{\cdot}$ Draw $s_n$ with proba $\mathbb{P}(s)=p(s)/\sum_{i}p(i)$\\
		\hspace{0.5cm}$\bm{\cdot}$ $\mathcal{S} \leftarrow \mathcal{S}\cup\{s_n\}$\\
		\hspace{0.5cm}$\bm{\cdot}$ Compute 
		$\vec{f}_n = \vec{y}_{s_n} - \sum_{l=1}^{n-1} \vec{f}_l(\vec{f}_l^\adjoint\vec{y}_{s_n})\in\mathbb{R}^{J}$\\
		\hspace{0.5cm}$\bm{\cdot}$ Normalize $\vec{f}_n \leftarrow \vec{f}_n / \sqrt{\vec{f}_n^\adjoint\vec{y}_{s_n}}$\\
		\hspace{0.5cm}$\bm{\cdot}$ Update $\vec{p}$ : $\forall i\quad p(i) \leftarrow p(i) - (\vec{f}_n^\adjoint\vec{y}_i)^2$\\
		\textbf{end for}
		\Output $\mathcal{S}$ of size $J$.
	\end{algorithmic}
\end{algorithm}
The runtime of this strategy given the dual eigendecomposition is $\mathcal{O}(n\mu^2)$. Also, for a proof that this strategy does sample from a DPP with $L$-ensemble $\ma{L}=\ma{\Psi}^\adjoint\ma{\Psi}$ we refer the reader to our technical report~\citep{tremblay_optimized_2018}. These algorithms are implemented in the Julia toolbox we have developed for this work (DPP.jl available at \url{https://gricad-gitlab.univ-grenoble-alpes.fr/barthesi/dpp.jl}). Alternatively, one could use G. Gautier's well-documented Python toolbox DPPy (available at \url{http://github.com/guilgautier/DPPy}).

\subsection{Fast sampling of $m$-DPPs}
\label{subsec:fast_mDPP}
In the experiments, we will only provide results for $m$-DPP sampling. In fact, results are easier to compare with classical i.i.d. coreset methods when the number of samples is fixed and not random. Given the eigendecomposition of the dual representation $\ma{C}$, one samples a $m$-DPP via the following two steps (we refer once again to~\cite{tremblay_optimized_2018} for a proof):
\begin{enumerate}
	\item[i/] Sample $m$ eigenvectors. Draw $2r$ Bernoulli variables with parameters $\nu_k/(1+\nu_k)$ under the constraint that exactly $m$ variables should be equal to one. Call $\mathcal{J}$ the set of indices thus drawn: $|\mathcal{J}|=J=m$. 
	\item[ii/] Run Algorithm~\ref{alg:sampling_DPP_efficient} to sample a $J$-DPP with projective $L$-ensemble $\ma{P}=\ma{W} \ma{W}^\adjoint$ where  $\ma{W}\in\mathbb{R}^{n\times J}$  concatenates all the reconstructed  eigenvectors $\fou_k=\frac{1}{\sqrt{\nu_k}} \ma{\Psi}^\adjoint \vec{v}_k$ such that $k\in\mathcal{J}$.
\end{enumerate}
The only difference with a usual DPP is in the first step, where the $n$ Bernoulli variables are not drawn independently anymore, but under constraint that exactly $m$ of them should be equal to one. To do so, one may follow Algorithm~8 of~\cite{kulesza_determinantal_2012} which runs in $\mathcal{O}(nm)$. Step ii/ runs in $\mathcal{O}(nm^2)$, such that the overall cost of sampling a $m$-DPP given the dual eigendecomposition is also $\mathcal{O}(nm^2)$. Algorithm~8 of~\cite{kulesza_determinantal_2012} makes use of  elementary polynomials. Given the eigenvalues of $\ma{C}$, $\{\nu_i\}$, the $p$-th order associated elementary polynomial reads:
\begin{align*}
e_p(\nu_1,\ldots,\nu_{2r})=\sum_{\mathcal{J}\subseteq\{1,2,\ldots,2r\} ~\text{s.t.}~ |\mathcal{J}|=p} ~\quad\prod_{j\in\mathcal{J}}\nu_j\qquad\in\mathbb{R}.
\end{align*}
As $r$ increases, these polynomials become less and less stable to compute and Algorithm~8 of~\citet{kulesza_determinantal_2012} fails in many practical situations due to numerical precision errors as $m$ becomes too large. In order to avoid these errors, we follow the saddle-point approximation method detailed by~\citet{barthelme_asymptotic_2019}. This method has the additional advantage of providing very accurate approximations of the probabilities of inclusion of the $m$-DPP (that are exactly written as a ratio of elementary polynomials and thus also vulnerable to numerical instability). We in fact need these marginals for the importance sampling estimator.

	\vskip 0.2in
\bibliography{biblio.bib}

\end{document}